\documentclass[11pt]{amsart}

\usepackage[margin=1.1in]{geometry}
\usepackage[T1]{fontenc}
\usepackage{lmodern}
\usepackage{amssymb,mathtools}
\usepackage{bm}
\usepackage{booktabs}
\usepackage{array}
\usepackage{enumitem}
\usepackage{microtype}
\usepackage{xcolor}
\usepackage{tikz}
\PassOptionsToPackage{hyphens}{url}
\usepackage[colorlinks=true,linkcolor=blue!60!black,citecolor=blue!60!black,urlcolor=blue!60!black]{hyperref}
\numberwithin{equation}{section}

\newtheorem{theorem}{Theorem}[section]
\newtheorem{proposition}[theorem]{Proposition}
\newtheorem{lemma}[theorem]{Lemma}
\newtheorem{corollary}[theorem]{Corollary}
\newtheorem{conjecture}[theorem]{Conjecture}
\theoremstyle{definition}
\newtheorem{definition}[theorem]{Definition}
\theoremstyle{definition}
\newtheorem{hypothesis}[theorem]{Hypothesis}
\theoremstyle{remark}
\newtheorem{remark}[theorem]{Remark}

\newcommand{\ket}[1]{\left| #1 \right\rangle}
\newcommand{\bra}[1]{\left\langle #1 \right|}
\newcommand{\R}{\mathbb{R}}
\newcommand{\Z}{\mathbb{Z}}

\newcommand{\E}{\mathbb{E}}

\newcommand{\tr}{\operatorname{tr}}

\newcommand{\rank}{\operatorname{rank}}
\newcommand{\softmax}{\operatorname{softmax}}
\newcommand{\LN}{\operatorname{LN}}
\newcommand{\RoPE}{\operatorname{RoPE}}
\newcommand{\iswiglu}{\operatorname{iSwiGLU}}
\newcommand{\Alm}{A_{\mathrm{LM}}}
\newcommand{\Glm}{G_{\mathrm{LM}}}
\newcommand{\Elm}{E_{\mathrm{LM}}}
\newcommand{\Vlm}{V_{\mathrm{LM}}}
\newcommand{\Ap}{A_{P}}
\newcommand{\dk}{d_k}
\newcommand{\dmodel}{d_{\mathrm{model}}}
\newcommand{\Adff}{A_{d\!f\!f}}
\newcommand{\Had}{\odot}
\newcommand{\norm}[1]{\left\lVert #1 \right\rVert}
\newcommand{\abs}[1]{\left\lvert #1 \right\rvert}
\newcommand{\ip}[2]{\left\langle #1, #2 \right\rangle}

\title[Power law graph attention]{Power law graph attention:\\
exact generalization of scaled dot-product attention,\\
empirical collapse at inference}

\author{Burc Gokden}
\address{Fromthesky Research Labs LLC, Oregon, USA}
\email{burc@fromtheskyresearchlabs.com}

\makeatletter
\def\@setauthors{%
  \begingroup
  \trivlist
  \centering\footnotesize \@topsep30\p@\relax
  \advance\@topsep by -\baselineskip
  \item\relax
  {\scshape Burc Gokden}\\[3pt]
  {\itshape Fromthesky Research Labs LLC, Oregon, USA}\\[1pt]
  {\upshape\ttfamily burc@fromtheskyresearchlabs.com}%
  \endtrivlist
  \endgroup
}
\def\enddoc@text{\ifx\@empty\@translators\else\@settranslators\fi}
\def\paragraph{\@startsection{paragraph}{4}%
  \z@{.5\baselineskip\@plus.2\baselineskip\@minus.1\baselineskip}%
  {-\fontdimen2\font}%
  \normalfont}
\makeatother

\date{August 10, 2026}

\subjclass[2020]{Primary 68T07; Secondary 15B48, 05C50, 82C27}

\keywords{Large language models, attention mechanisms, power law graph
attention, learned bilinear operators, Perron--Frobenius theory,
self-organized criticality}

\begin{document}

\begin{abstract}
The Large Language Model from Power Law Decoder Representations
(PLDR-LLM) and its attention, Power Law Graph Attention (PLGA),
replace the fixed bilinear form of scaled dot-product attention (SDPA)
with a \emph{learned, input-generated bilinear operator} $\Glm$, built
from a positive tensor $\Alm$ by elementwise power laws. The
architecture is fully specified, verified against pinned reference
releases; claims are labeled theorem, conditional theorem,
measurement, or conjecture. Unconditionally: PLGA contains SDPA
exactly at $\Glm=I$; $\Alm$ and $\Ap$ are strictly entrywise positive,
with Perron--Frobenius structure on $\Alm$; the DAG regularizer has
the NOTEARS walk-counting form and positivity obstructs exact
acyclicity; and, under nonresonance (satisfied by standard rotary
frequencies), a commutant criterion identifies which operators
preserve relative-position dependence. An inference-collapse theorem:
exact input invariance of deductive outputs collapses inference to
generalized SDPA with a constant operator. Measured invariance:
relative fluctuations of $10^{-6}$ and below; perturbation bounds
quantify but do not certify cached inference; the assembled proxy
misses the decoding margin. A conditional three-stage mechanism
(rotary twirl, concentration, row-map contraction) is measured on a
released checkpoint. Blockwise training and scoring under the global
Gram are stated with explicit target exposure; on tested samples,
block and sequential scoring select identical answers and agree on the
published TruthfulQA probability-mass metric within $5\times10^{-5}$
per item. Self-organized criticality enters as a phenomenological
framework with an intrinsic order parameter; open claims become
falsifiable conjectures. Selected proof cores are machine-checked in
Lean~4.
\end{abstract}

\maketitle

\vfill\newpage

\tableofcontents

\vfill\newpage

\section{Introduction}
\label{sec:intro}

Modern large language models are overwhelmingly built on the transformer
\cite{vaswani2017} with scaled dot-product attention (SDPA): attention
scores are Euclidean inner products of linearly projected token
representations. The Power Law Graph Attention (PLGA) mechanism
\cite{gokden2019,gokden2021} and the decoder-only architecture built from it,
the Large Language Model from Power Law Decoder Representations (PLDR-LLM)
\cite{gokden2024,gokden2025,gokden2026}, generalize this design in a specific
and mathematically meaningful way: \emph{the bilinear form used to compare
queries and keys is itself learned, nonlinearly, from the input}, through a
chain
\begin{gather*}
\text{input} \;\longrightarrow\; \text{query Gram (``density'') operator}
\;\longrightarrow\; \text{positive tensor } \Alm \\
\;\longrightarrow\; \text{potential tensor } \Ap
\;\longrightarrow\; \text{bilinear score operator } \Glm
\;\longrightarrow\; \text{attention } \Elm ,
\end{gather*}
in which the step $\Alm \mapsto \Ap = \Alm^{\Had P}$ imposes an element-wise
\emph{power law} with learned exponents, and the step $\Ap \mapsto \Glm =
a\,\Ap + b_a$ superposes the resulting interaction profiles. The
intermediate tensors are exposed as \emph{deductive outputs} of the model,
alongside the usual next-token probabilities (the \emph{inductive output}).
The title's two claims are glossed here once. ``Exact
generalization'' means precisely this algebraic relationship: SDPA is
contained in the PLGA family exactly, as its $\Glm = I$ point
(Theorem~\ref{thm:collapse}(i), machine-checked); containment does not
assert strict function-class separation at matched resources, which is
explicitly not proved (Remark~\ref{rem:asymmetry-scope}), nor
performance superiority. ``Empirical collapse at inference'' names the
phenomenon of Section~\ref{sec:invariance} at its actual epistemic
level: the collapse theorem is proved conditionally on exact operator
invariance, and the invariance itself is a measured hypothesis, not a
theorem.
Three empirical discoveries make this architecture mathematically
distinctive:

\begin{enumerate}[leftmargin=2em]
\item \textbf{Operator invariance.} After pretraining, the deductive outputs
are, on the tested workloads, invariant under change of input up to a
minute perturbation (relative fluctuations of order $10^{-6}$--$10^{-11}$,
and $0$ at floating-point resolution for the best models), so the entire
deep nonlinear PLGA subnetwork can be replaced at inference by a single
cached tensor operator $\Glm$; the published cached-versus-uncached
benchmark evaluations are unchanged \cite{gokden2025} (scored by
one-pass block scoring, a path in which no generation-time cache
participates by construction, see Appendix~\ref{app:repos}; the
substantive fidelity evidence is the measured operator and logit
agreement of Appendix~\ref{app:numerical}).
\item \textbf{A learned singularity condition.} At convergence the
generator $A$ is numerically singular in the sharpest sense: rank one
under an explicit singular-value tolerance, with identical rows
(identical, moreover, \emph{across attention heads} within a layer),
so that its single nonzero eigenvalue (the common row sum) is its
spectral radius. The interaction tensor $\Alm$ built from it remains
strictly entry-wise positive with Perron--Frobenius spectral structure
\cite{gokden2025}, and is generically of near-full numerical rank: the
float-zero determinants reported for both tensors \cite{gokden2025}
are confirmed as rank statements for $A$ by its singular-value
spectrum, and traced for $\Alm$ to determinant underflow with no rank
implication (Remark~\ref{rem:rankone}, Appendix~\ref{app:numerical}).
\item \textbf{Critical-like training phenomenology.} In the published
experiments, whether the model generalizes (produces coherent language and
transfers to reasoning benchmarks) tracks whether pretraining is carried
out in a critical-like regime of its driving/dissipation dynamics, with
maximum learning rate and warm-up steps as control parameters; an
intrinsic \emph{order parameter} built from deductive-output fluctuations
separates the observed phases in that sample \cite{gokden2026}. The source
papers read this as self-organized criticality; in this paper that reading
is treated as a phenomenological framework and hypothesis, not an
established result (Section~\ref{subsec:soc}).
\end{enumerate}

The goal of this paper is a full analytical description of PLDR-LLM and
PLGA. We (a) give formal definitions of every component and the end-to-end
model map, each verified against the reference implementations; (b) prove
the structural properties underlying the three discoveries above, citing
existing proofs where they exist and supplying proofs where they do not,
including a conditional mechanistic analysis of how invariance of the
generator $A$ can arise inside the metric learner, with hypotheses
stated and their measurable diagnostics computed directly
(Section~\ref{subsec:origin}, Appendix~\ref{app:numerical}); (c)
consolidate these results into a comparison of PLDR-LLM with its SDPA
base point in which each advantage is tied to a result at its actual
epistemic strength, stating the costs with equal explicitness
(Section~\ref{sec:advantages}); and (d) collect the program's open
claims as precise, falsifiable conjectures (rigidity of the invariant
operator, a spectral form of the order parameter, and operator transfer
across domains; Section~\ref{sec:conjectures}).

\paragraph{Epistemic conventions.}
Every claim in this paper carries one of four labels, and the categories
are never blurred. (1) \emph{Algebraic theorems} (theorem, proposition,
lemma, corollary environments with no unverified hypotheses) are proved in
place or cited. (2) \emph{Conditional theorems} are proved under
hypotheses, stated inside the formal statement itself, that are
\emph{not} verified for trained models (e.g.\ Lipschitz or stationarity
assumptions); their conclusions inherit that conditionality wherever they
are used. (3) \emph{Empirical observations} are cited to the specific
source experiment and never restated as theorems; floating-point zeros
are reported as underflow-level observations, not exact identities.
(4) \emph{Analogies and conjectures} are flagged as such in their
statements (conjectures in a dedicated environment) and are never used
as premises. A correct elementary identity is not treated
as evidence for a stronger interpretive claim in whose direction it
points. Where a hypothesis is measurable, we measure it on a released
checkpoint (Appendix~\ref{app:numerical}) instead of assuming it.

\paragraph{Related work.}
The ingredients of PLGA have distinct lineages, and priority is claimed
narrowly. Learned and input-generated attention patterns predate this
program: Synthesizer \cite{tay2020} learns or generates score matrices
directly, and talking-heads attention \cite{shazeerTH2020} learns linear
maps across heads; SDPA itself carries a learned constant bilinear form
$W_Q W_K^\top$ in pre-projection coordinates. The broader
dynamic-parameter lineage is also necessary context: hypernetworks
\cite{ha2016} generate the weights of one network with another; bilinear
attention networks \cite{kim2018} learn explicit bilinear attention
structure; the fast-weight-programmer line \cite{schlag2021,irie2021}
develops the view of attention as input-updated operator/associative
memory, with linear attention \cite{katharopoulos2020} as a
fast-weight update rule. That line supplies a construction
worth contrasting directly: its \emph{cumulative} outer-product state
$S_t = \sum_{n \le t} \phi(k_n) v_n^\top$ is updated causally token by
token, so every row of one parallel pass is its own sequential
conditional (historical-row prefix consistency,
Definition~\ref{def:prefixconsistent}, holds by construction), whereas
PLGA's \emph{global} query Gram deliberately spends that property to
let all known tokens shape a nonlinear learned operator
(Section~\ref{subsec:online}), with the masked cumulative Gram
\eqref{eq:cumgram} as the interpolating option; no equivalence between
the two mechanisms is claimed; and dynamic
bilinear low-rank attention (DBA) \cite{qin2022} is the closest in
name, generating \emph{input-sensitive low-rank projection
matrices} that compress sequence length for efficient attention,
whereas PLGA generates a strictly positive, power-law-deformed
$\dk \times \dk$ \emph{head-space score operator} whose learned tensor
is itself the object of regularization, measurement, and caching: the
shared idea is input-conditioned bilinear structure, the object and
purpose differ. Closest at the level of the score bilinear form,
PaTH attention \cite{yang2025} also places a data-dependent matrix
inside the query--key logit, writing $q_i^\top H_{ij} k_j$ with
$H_{ij}$ an accumulated product of data-dependent
identity-plus-rank-one (Householder-like) transitions along the
positional path from $j$ to $i$: a content-conditioned position
encoding with one transition per query--key pair, built causally
link by link, so that one parallel pass keeps the cumulative prefix
semantics of the fast-weight line. PLGA instead generates a
\emph{single} sequence-global $\dk \times \dk$ operator per layer
and head from the query Gram through a deep row-wise metric learner
and a strictly positive elementwise power law chain, shares it
across every row of the call, and exposes it as a tensor for
regularization, spectral measurement, transfer, and caching; the
global Gram spends the historical-row prefix consistency
(Section~\ref{subsec:online}) that the accumulated construction
keeps, and neither construction reproduces the other.
Closest in the operator's generating statistic,
Curvature-Conditioned Query (CCQ) \cite{le2026ccq} likewise inserts
a context-generated $\dk \times \dk$ operator into the bilinear
key--query score, deriving it from a second-moment statistic of the
context: the centered causal running key covariance $\Sigma_t$ of
the prefix enters the prescribed affine contraction
$I - \lambda_t \Sigma_t$, with one learned scalar gate $\lambda_t$
per token, so that the cleaned read is
$q_t^\top (I - \lambda_t \Sigma_t)\, k_j$, a per-token causal
correction to the read of a linear-attention memory that keeps the
prefix semantics of its recurrent state. PLGA also generates its
operator from a second-moment statistic, the global rotated-query
Gram, but maps it through a deep learned strictly positive power
law generator, shares the one resulting operator across every row
of the call, and exposes it as a deductive tensor for
regularization, spectral measurement, transfer, and caching, at the
cost of the historical-row prefix consistency
(Section~\ref{subsec:online}) that the running-covariance state
keeps; a prescribed contraction with a scalar gate differs from a
learned deep positive power law generator, a per-token causal state
from one sequence-global row-shared operator, and a
linear-attention read correction from an operator inside ordinary
softmax scores, so neither construction reproduces the other.
Closest in feature-space mechanics,
cross-covariance attention (XCiT) \cite{elnouby2021} also contracts
the token axis into a $\dk \times \dk$ feature-space matrix (the
key--query cross-covariance), but uses that matrix, normalized and
softmax-ed, \emph{as the attention map itself}, mixing feature
channels in place of token--token attention for linear-in-tokens
efficiency; PLGA instead feeds its query Gram to a deep nonlinear
row-wise metric learner whose output $\Glm$ is a bilinear operator
applied \emph{inside} ordinary token--token attention scores, which
remain the mixing mechanism, and the operator (not the mixing) is
what is regularized, measured, and cached. The two lines developed
in parallel: XCiT and the encoder--decoder PLGA of \cite{gokden2021}
appeared within weeks of each other in mid-2021, the feature-space
lineage here descending from the graph-attention construction of
\cite{gokden2019}. Closest in interpretive frame, \emph{Attention as a
Hypernetwork} \cite{schug2025} reads standard multi-head attention
itself as an implicit hypernetwork producing key--query-conditioned
operations, sharpening the operator-generation view of attention;
PLGA differs in object and mechanism, generating an \emph{explicit},
strictly positive power-law operator on head space that is exposed as
a tensor, regularized, and empirically cacheable.
Closest in the operational use of a fixed head-space operator,
Autonomy-of-Heads (AoH) \cite{yang2026aoh} extracts the frozen
bilinear form that each trained SDPA head carries in pre-projection
coordinates and uses its effective rank as a data-free spectral
diagnostic, classifying retrieval heads (concentrated spectra)
against streaming heads (diffuse spectra) and driving
sparse-attention and KV-cache policy from the classification alone;
the fixed pre-projection form thus supports direct, operationally
consequential spectral diagnostics without any input. PLGA's
operator differs in being generated anew from each input's
rotated-query Gram in head space through a deep strictly positive
power law generator, shared across every row of the call, and
exposed as a deductive tensor for fluctuation, spectral, transfer,
and collapse analysis; AoH has no input-conditioned generator and
no invariant-operator removal theorem, and neither construction
reproduces the other. None of
these anticipates the specific PLGA construction, but a claim centered
on input-conditioned operator generation and removable fast/deductive
weights belongs inside that lineage. What is distinctive in PLGA
is the \emph{input-conditioned generation of the head-space operator}
through a positive elementwise power law chain
(Section~\ref{subsec:bilinear}), together with the removable/cacheable
deductive network of Section~\ref{sec:invariance}.
On the training side, a contemporaneous gradient-flow analysis of
standard attention \cite{vashisht2026} shows that factorizing the
query--key and output--value circuits implicitly rescales their
relative learning rates, with faster query--key movement sharpening
attention at comparable loss; the analysis concerns fixed SDPA
parameterizations, with no input-conditioned operator, and it is
independent precedent for the reading adopted in
Theorem~\ref{thm:collapse}(iii) and
Remark~\ref{rem:asymmetry-scope}: a parameterization changes the
gradient geometry of training without, by itself, changing the
realized function class. Softmax attention as
an averaging (Markov) operator and the rank-collapse/oversmoothing
phenomenology have an established literature \cite{dong2021,sander2021}.
The rotary-embedding structure we analyze originates with RoFormer
\cite{su2021}. Its design line generalizes the rotations through
commuting/group-structured generators \cite{ostmeier2024,yu2025}; its
analysis line has characterized the surviving symmetries: the
gauge-symmetry characterization of \cite{wangwang2025} identifies the
invertible query/key reparameterizations of rotary attention exactly
with the commutant of the position rotations (typically
$(\mathrm{GL}(1,\mathbb{C}))^{\dk/2}$ for standard RoPE), and the
functional-equivalence analysis of \cite{tran2026} ties the
symmetries of rotary attention to the same commutant. The rotation
commutant is thus an established organizing principle of this
literature, and its
mathematical content (the commutant of a semisimple matrix with
distinct eigenvalues) is classical linear algebra;
Proposition~\ref{prop:rope-commutant} contributes the PLGA-specific
instance: an \emph{arbitrary inserted operator} $G$ between
rotated queries and keys preserves offset-only score dependence
exactly when it lies in that commutant, under an explicit
nonresonance hypothesis. From this instance follow the absorption
step of Theorem~\ref{thm:collapse}(ii), the head-space codimension
count of Corollary~\ref{cor:posgap}, and the commutant-residual
diagnostic of Appendix~\ref{app:numerical}. The DAG regularizer is NOTEARS
\cite{zheng2018}. The remaining mathematical tools (Perron--Frobenius
theory, multiplicative Cauchy equations, geometric sums, Lipschitz
composition, rank-one algebra) are classical; this paper's mathematical
contribution is application and synthesis, not new general theorems.

\paragraph{How to read this paper.}
Section~\ref{sec:prelim} fixes notation. Sections~\ref{sec:plga} and
\ref{sec:pldr} are the formal core: the PLGA operator and the full PLDR-LLM
architecture, with basic propositions. Section~\ref{sec:invariance} contains
the operator-invariance theory (inference collapse, perturbation bounds,
order parameter), culminating in the new mechanism analysis of
Section~\ref{subsec:origin}. Section~\ref{sec:criticality} formalizes scale
invariance and the SOC training picture.
Section~\ref{sec:advantages} consolidates, with formal backing, the
advantages of PLDR-LLM over SDPA-LLM and their costs.
Section~\ref{sec:conjectures} collects the program's conjectures.
Section~\ref{sec:discussion} collects open problems. Proofs are
short and given in place. A notation table is provided in
Appendix~\ref{app:notation}; code, model, and verification resources in
Appendix~\ref{app:repos}.

\paragraph{Dependency structure of the main claims.}
The load-bearing chain of the paper, with the epistemic status of each
link made explicit, is:
\begin{center}
\small
\begin{tabular}{@{}l%
>{\raggedright\arraybackslash}p{0.36\textwidth}%
>{\raggedright\arraybackslash}p{0.24\textwidth}@{}}
\toprule
Claim & Rests on & Status of the link \\
\midrule
SDPA $=$ PLGA at $\Glm = I$ & Thm.~\ref{thm:collapse}(i) &
proved, machine-checked \\
Exact cache under invariance & Prop.~\ref{prop:cacheable}(i),
Thm.~\ref{thm:collapse}(ii) & proved (sufficiency only) \\
Observed invariance & \cite{gokden2025,gokden2026} +
App.~\ref{app:numerical} & empirical \\
Cache fidelity bound & Prop.~\ref{prop:perturb},
Cor.~\ref{cor:budget} & proved; constants evaluated, do not close
margins \\
Invariance mechanism & Lem.~\ref{lem:twirl} (constants large),
Prop.~\ref{prop:conc} (idealized), Lem.~\ref{lem:ln} (scale error
quantified), Prop.~\ref{prop:contract} ($L_j$ measured) &
conditional analysis \\
Selection at criticality & Hypothesis~\ref{hyp:selection} &
hypothesis \\
Positional codimension & Prop.~\ref{prop:rope-commutant},
Cor.~\ref{cor:posgap} & conditional theorem (head-level) \\
SOC reading & Def.~\ref{def:soc}, \S\ref{subsec:soc} &
phenomenological framework \\
Spectral dictionary & Prop.~\ref{prop:gap} +
Conj.~\ref{conj:spectral} & conditional lemma + conjecture \\
\bottomrule
\end{tabular}
\end{center}
No result in a lower row is used as a premise for a claim in a higher
row.

\paragraph{Machine-checked proofs.}
The elementary algebraic and analytic cores of the proofs in this paper
have been formalized in Lean~4 over mathlib and verified by the Lean
proof checker; the library builds with no unproved obligations, and a
continuous-integration axiom audit confirms that every exported
theorem depends only on mathlib's standard classical principles
(Appendix~\ref{app:lean}). The Lean
source is available at
\url{https://github.com/burcgokden/PLDR-LLM-Math-Foundations}
(Appendix~\ref{app:repos}). Appendix~\ref{app:lean} gives the exact
claim-by-claim coverage table: which part of each numbered result is
kernel-checked and which is not. Formalization of a proof core is
\emph{not} presented as validation of surrounding unformalized claims;
wherever a result has both a prose and a formalized statement, the
prose is kept no stronger than what is formally proved (see
Proposition~\ref{prop:rankone}).

\section{Preliminaries and Notation}
\label{sec:prelim}

\subsection{Tokens, graphs, and the quantization/manifold duality}

Fix a finite vocabulary $\mathcal{V}$ with $\abs{\mathcal{V}} = V$ (in the
reference implementations, a SentencePiece unigram vocabulary with
$V = 32{,}000$ \cite{gokden2024}). A \emph{context} is a finite sequence
$x = (x_1,\dots,x_S) \in \mathcal{V}^S$ with $S \le S_{\max}$ (context length
$S_{\max}=1024$ in \cite{gokden2024,gokden2025,gokden2026}). An embedding map
\[
\iota : \mathcal{V} \to \R^{\dmodel}
\]
assigns each token a dense feature vector; a context is identified with the
matrix $X \in \R^{S\times\dmodel}$ whose $i$-th row is $\iota(x_i)^\top$.

Following \cite{gokden2021,gokden2024}, a context is regarded as a weighted
graph $G=(\mathcal{V}_x, E_x)$ whose nodes are the tokens with feature
vectors $\iota(x_i)$. Two dual descriptions coexist:
\begin{itemize}[leftmargin=2em]
\item the \emph{quantization set}: the discrete vocabulary $\mathcal{V}$,
  from which contexts are sampled as graph instances (a ``local'' object);
\item the \emph{language-model manifold}: a $\dmodel$-dimensional continuous
  feature space whose interaction structure is learned globally
  from the ensemble of all instances.
\end{itemize}
PLGA is designed so that dataset-level (\emph{global}) structure is carried
by learned parameters $(P, a, b_a, W, b_W)$, while instance-level
(\emph{local}) structure is carried by the inferred tensors
$A$, $\Alm$, $\Ap$, $\Glm$, $\Elm$ \cite{gokden2021}. A central empirical result of
\cite{gokden2025}, formalized in Section~\ref{sec:invariance}, is that after
pretraining at criticality the ``local'' tensors become global: they are
(numerically) independent of the instance.

\subsection{Matrix conventions}

For matrices $M, N$ of equal shape, $M \Had N$ is the Hadamard (element-wise)
product, and for $M$ with positive entries and any real matrix $P$ of the
same shape, the \emph{element-wise power} is
\begin{equation}
\label{eq:hadpower}
\bigl(M^{\Had P}\bigr)_{ij} \;=\; M_{ij}^{\,P_{ij}}
\;=\; \exp\!\bigl(P_{ij}\,\log M_{ij}\bigr).
\end{equation}
Plain juxtaposition $aM$ of two matrices always denotes the ordinary matrix
product. $\mathbf{1}$ is the all-ones column vector (its dimension clear
from context), so $\mathbf{1}\mathbf{1}^\top$ is the all-ones matrix; $I$
is the identity. Rows of a matrix are written $M_{i,:}$ or, where
subscripts would crowd, $M[i,\cdot]$; likewise $V[j]$ is row $j$ of $V$.
$\softmax$ acts row-wise: $\softmax(z)_i = e^{z_i}/\sum_j e^{z_j}$. For a
nonempty index set $\mathcal{J} \subseteq \{1,\dots,S\}$ the \emph{ideal
masked softmax} is
\[
\softmax_{\mathcal{J}}(z)_i \;=\;
\begin{cases}
e^{z_i}\big/\sum_{j\in\mathcal{J}} e^{z_j}, & i \in \mathcal{J},\\
0, & i \notin \mathcal{J},
\end{cases}
\]
equivalently the ordinary softmax with masked entries set to $-\infty$ in
the extended reals. All causal-mask statements in this paper are proved
for the ideal masked softmax; the implementations realize it by adding a
finite negative constant to disallowed scores ($-10^9$ in the native
TensorFlow and PyTorch repositories; the dtype minimum
\texttt{torch.finfo(dtype).min}, via the Transformers causal-mask
utility, in the Hugging Face port), whose exact real-valued softmax is
strictly positive everywhere and which reproduces the ideal operator
only through floating-point underflow of the masked entries under
ordinary score ranges. We keep the two semantics explicitly separate
(Remark~\ref{rem:finitemask}).
$\LN$ denotes LayerNorm \cite{ba2016} acting on the last (feature) axis;
throughout the reference implementations LayerNorm uses
$\varepsilon_{\LN} = 10^{-6}$ and learned affine parameters; the
$\varepsilon_{\LN}$-dependence of its invariances is made explicit in
Lemma~\ref{lem:ln}.
$\norm{\cdot}_2$ is the spectral norm for
matrices and the Euclidean norm for vectors; $\norm{\cdot}_F$ the Frobenius
norm; $\norm{\cdot}_\infty$ the operator norm induced by the sup-norm
(maximum absolute row sum). The Swish/SiLU function is $\sigma_s(u) = u\,\varsigma(u)$ with
$\varsigma$ the logistic sigmoid. The source papers write
$D_Q = Q^\top Q$ in Dirac notation as $\ket{Q^\top}\!\bra{Q^\top}$
\cite{gokden2021}; in this paper we use plain matrix notation throughout
and record the correspondence here once for cross-reference.

\section{The Power Law Graph Attention Operator}
\label{sec:plga}

Throughout this section we work inside a single attention head of width
$\dk = \dmodel/h$, where $h$ is the number of heads. All statements extend
head-wise; the multi-head structure is discussed in
Section~\ref{subsec:multihead}.

\subsection{Formal definition}

\begin{definition}[iSwiGLU]
\label{def:iswiglu}
The \emph{identity-SwiGLU} activation is the map
$\iswiglu : \R \to \R_{\ge 0}$,
\begin{equation}
\iswiglu(u) \;=\; \sigma_s(u)\cdot u \;=\; u^2\,\varsigma(u) \;\ge\; 0,
\end{equation}
applied element-wise; it is SwiGLU \cite{shazeer2020,dauphin2017} with both
weight matrices set to the identity and no bias \cite{gokden2024}. It is
smooth, non-negative, and vanishes only at $u=0$.
\end{definition}

\begin{definition}[Power Law Graph Attention
\cite{gokden2021,gokden2024,gokden2026}]
\label{def:plga}
Let $Q, K, V \in \R^{S \times \dk}$ be query, key, and value matrices for a
context of length $S$. Let $\Phi_{\mathrm{res}} : \R^{\dk\times\dk} \to
\R^{\dk\times\dk}$ be a deep residual network, \emph{shared by all heads of
the decoder layer}, consisting in the reference design of
$N_{\mathrm{res}} = 8$ residual units, each applying $n_A = 2$ SwiGLU blocks
in succession, where each block is a complete gated map
$\R^{\dk} \to \R^{\dk}$ with hidden width $\Adff$ (two linear maps
$\dk \to \Adff$ whose outputs are multiplied elementwise, followed by a
linear map $\Adff \to \dk$, acting on the last axis), followed by a
residual sum and LayerNorm; it therefore acts \emph{row-wise}
on its matrix argument (Proposition~\ref{prop:rowfact}). Let
$W, b_W, P, a, b_a \in \R^{\dk\times\dk}$ be five full parameter matrices
per head, and $\epsilon > 0$ a small constant ($\epsilon = 10^{-9}$). PLGA
is the composite map defined by
\begin{align}
D_Q &\;=\; Q^\top Q
  && \text{(``density operator'': query Gram matrix)} \label{eq:density}\\
A &\;=\; \Phi_{\mathrm{res}}\bigl(\LN(D_Q)\bigr)
  && \text{(generator)} \label{eq:metricgen}\\
\Alm &\;=\; \iswiglu\bigl(W A + b_W\bigr) + \epsilon
  && \text{(``metric tensor'': positive interaction tensor)} \label{eq:metric}\\
\Ap &\;=\; \Alm^{\Had P}
  && \text{(potential tensor)} \label{eq:potential}\\
\Glm &\;=\; a\, \Ap + b_a
  && \text{(``energy--curvature'': bilinear score operator)} \label{eq:ec}\\
E &\;=\; \frac{Q\, \Glm\, K^\top}{\sqrt{\dk}}
  && \text{(scores)} \label{eq:scores}\\
\Elm &\;=\; \softmax\bigl[\operatorname{mask}(E)\bigr]
  && \text{(attention operator)} \label{eq:attn}\\
\Vlm &\;=\; \Elm V
  && \text{(inductive head output).} \label{eq:out}
\end{align}
In \eqref{eq:metric} and \eqref{eq:ec}, $WA$ and $a\,\Ap$ are ordinary
matrix products ($W$ and $a$ act on the left), and the biases are full
matrices added entry-wise; component-wise,
\begin{equation}
\label{eq:components}
(\Alm)_{ij} = \iswiglu\Bigl(\textstyle\sum_k W_{ik} A_{kj} + (b_W)_{ij}\Bigr)
+ \epsilon,
\qquad
(\Glm)_{ij} = \textstyle\sum_k a_{ik}\, (\Ap)_{kj} + (b_a)_{ij}.
\end{equation}
Row $i$ of $\Glm$ is thus a coupling-weighted \emph{superposition of the
interaction profiles} (rows) of the potential tensor; this is the precise
sense of ``superposition of potentials'' in \cite{gokden2021,gokden2026}. The tuple
$(A, \Alm, \Ap, \Glm, \Elm)$ constitutes the \emph{deductive outputs};
$\Vlm$ is the \emph{inductive output} of the head. $\operatorname{mask}$
restricts the \emph{score support} to the causal index sets
$\mathcal{J}_i = \{j : j \le i\}$: in this paper's theorems it is the
ideal masked softmax of Section~\ref{sec:prelim}; the implementations
add a large finite negative constant to disallowed scores before the
softmax, with the provenance-specific values as recorded in
Section~\ref{sec:prelim} (Remark~\ref{rem:finitemask}). The mask
constrains which keys each row may attend to; what score-support
masking does and does not imply for the row-wise dependence structure
of the full model is stated precisely in
Section~\ref{subsec:online}.
\end{definition}

\begin{remark}[Implementation provenance]
\label{rem:provenance}
Equations \eqref{eq:density}--\eqref{eq:out} are the canonical form
implemented identically in the TensorFlow v500 code accompanying
\cite{gokden2024}, the PyTorch v510 code accompanying
\cite{gokden2025,gokden2026}, and the Hugging Face
\texttt{PldrllmForCausalLM} port (Appendix~\ref{app:repos}); for this
article we have verified them against all three. In the LLM the query and key are
rotary-rotated \emph{before both} the density operator \eqref{eq:density}
and the scores \eqref{eq:scores}; see Section~\ref{subsec:rope} and
Definition~\ref{def:pldr}. The original encoder--decoder formulation
\cite{gokden2021} additionally applied a LeakyReLU before the softmax and
used ReLU in place of iSwiGLU/SwiGLU; \cite{gokden2024} removed the former
and introduced the latter. Initialization in the reference code: all dense
(linear) layers Glorot-uniform with zero bias; $W, P, a$ Glorot-normal;
$b_W, b_a$ zero.
\end{remark}

\begin{remark}[Genealogy: the screened-Coulomb origin]
PLGA descends from the CoulGAT mechanism \cite{gokden2019}, where attention
on molecular graphs was computed from a hand-engineered inverse-square
distance adjacency $A_{ij} = d(i,j)^{-2}$ raised to a \emph{learnable}
element-wise power and row-softmax normalized,
$A_p = \softmax\bigl(A^{\Had P}\bigr)$, in analogy with a screened Coulomb
(Yukawa) potential $C e^{-Mr}/r$ \cite{yukawa1935}: the power matrix $P$
learns the \emph{range} and the coupling matrix the \emph{strength} of
pairwise interactions. PLGA replaces the hand-engineered $A$ by the learned
metric \eqref{eq:metric}, making the potential fully data-driven. The
interpretation of $\Glm$ as ``energy--curvature'' is by analogy with the
derivation of curvature tensors from a metric in Riemannian geometry and of
stress--energy sourcing curvature in general relativity \cite{misner1973};
the analogy is structural (nonlinear functional of a metric-like object),
not an identification, and the source papers are explicit on this point
\cite{gokden2026}.
\end{remark}

\begin{remark}[Nomenclature]
\label{rem:nomenclature}
The names \emph{density operator}, \emph{metric tensor}, and
\emph{energy--curvature tensor} are the established nomenclature of the
source-paper series \cite{gokden2019,gokden2021,gokden2024}; we retain the
symbols $\Alm, \Glm$ and quote those names for cross-reference, but no
formal statement in this paper uses ``metric,'' ``information metric,'' or
``curvature'' as a mathematical predicate, for a concrete reason:
entry-wise positivity implies neither symmetry nor positive definiteness.
For example
\[
M = \begin{pmatrix}1 & 10\\ 1 & 1\end{pmatrix} > 0 \ \text{entry-wise},
\qquad
\tfrac12(M + M^\top) = \begin{pmatrix}1 & 5.5\\ 5.5 & 1\end{pmatrix}
\ \text{has eigenvalues } 6.5,\ -4.5:
\]
neither $M$ nor its symmetric part defines a Riemannian or information
metric, and the same applies to $\Alm$ and $\Glm$, which are not
constrained symmetric. Operationally, $D_Q$ is an (unnormalized,
uncentered) query Gram/second-moment matrix, $\Alm$ a \emph{positive
interaction tensor}, and $\Glm$ a \emph{bilinear score operator}. A
genuine learned metric would require a symmetric positive-definite
parameterization (e.g.\ $BB^\top + \epsilon I$) and a specified
transformation law; the architecture imposes neither.
\end{remark}

\subsection{Basic structural properties}

\begin{proposition}[Density operator]
\label{prop:density}
$D_Q = Q^\top Q$ is symmetric positive semi-definite with
$\rank D_Q \le \min(S,\dk)$, and $\tfrac1S D_Q$ is the (uncentered) second
moment of the query token vectors. In particular, for short contexts
($S<\dk$) the raw instance geometry seen by the metric learner is
rank-deficient, and any full-rank structure in $\Alm$ is
necessarily \emph{generated} by $\Phi_{\mathrm{res}}$ and the downstream
maps rather than present as rank in the single-instance Gram matrix.
(Generation downstream does not by itself attribute the structure to
training data: biases, the nonlinearity, or random initialization can
produce full-rank $\Alm$ from a rank-deficient input. Attribution
requires a trained-versus-initialized comparison, which
Appendix~\ref{app:numerical} carries out: the same spectral battery run
at random initialization.)
\end{proposition}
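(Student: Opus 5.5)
The argument is elementary linear algebra applied to the single matrix $Q \in \R^{S\times\dk}$; I will establish each clause of Proposition~\ref{prop:density} in turn.

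\textbf{Symmetry and positive semi-definiteness.} Symmetry follows from $(Q^\top Q)^\top = Q^\top Q$. For semi-definiteness, take any $v\in\R^{\dk}$ and compute
\[
v^\top D_Q v = (Qv)^\top (Qv) = \norm{Qv}_2^2 \ge 0 .
\]

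\textbf{Rank bound.} Since $D_Q = Q^\top Q$ is a product through $Q$, I will use the product inequality $\rank(Q^\top Q)\le \rank Q$. A matrix of size $S\times\dk$ has rank at most $\min(S,\dk)$, which gives the bound.

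\textbf{Second-moment reading.} Write $q_i^\top$ for the $i$-th row of $Q$, i.e.\ the $i$-th query token vector. Expanding the product gives
\[
D_Q=\sum_{i=1}^S q_i q_i^\top ,
\]
so $\tfrac1S D_Q = \tfrac1S\sum_i q_iq_i^\top$. This is the empirical uncentered second moment of the uniform distribution on the rows. No mean is subtracted, which is why it is uncentered rather than a covariance.

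\textbf{Short-context rank deficiency.} If $S<\dk$, the rank bound gives $\rank D_Q\le S<\dk$, so $D_Q$ is singular. Any rank of $\Alm$ exceeding $S$ therefore cannot be inherited as rank of the single-instance Gram matrix. It must be produced by the maps in \eqref{eq:metricgen}--\eqref{eq:metric}: LayerNorm and $\Phi_{\mathrm{res}}$ (both nonlinear and row-wise), the affine map $WA+b_W$, iSwiGLU, and the added $\epsilon$.

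\textbf{The caveat sentence.} The parenthetical is not a mathematical claim but a scoping remark. I will support it with one observation: $b_W$ alone can make $\Alm$ full rank even when $A$ is rank-deficient. For example, with $W=0$, $\Alm=\iswiglu(b_W)+\epsilon$, which is full rank for generic $b_W$. This shows that downstream generation does not attribute the structure to data.

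The only point needing care is the precise statement "full-rank structure \dots\ generated downstream." I will phrase it strictly as the rank inequality $\rank D_Q\le S$, and not as any claim about the rank of $\Alm$.
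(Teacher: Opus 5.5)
Your proposal is correct and follows the paper's proof: the quadratic form $\norm{Qv}_2^2 \ge 0$, the rank bound via $\rank Q \le \min(S,\dk)$, and the second-moment clause as the definitional expansion $\sum_i q_i q_i^\top$. The paper uses the equality $\rank(Q^\top Q) = \rank Q$ where you use the inequality, which suffices; your $W = 0$ witness for the parenthetical caveat and your listing of LayerNorm among the rank-generating maps are harmless refinements of points the paper leaves as prose.
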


\begin{proof}
$v^\top Q^\top Q v = \norm{Qv}_2^2 \ge 0$ and
$\rank(Q^\top Q) = \rank Q \le \min(S,\dk)$. The moment statement is the
definition of the empirical second moment.
\end{proof}

\begin{proposition}[Strict positivity and well-posedness of the power law]
\label{prop:positivity}
$\Alm \ge \epsilon > 0$ entry-wise. Consequently the element-wise power
\eqref{eq:hadpower} in \eqref{eq:potential} is well defined and jointly
smooth in $(\Alm, P)$, and for fixed $\Alm$ the family
$\{\Alm^{\Had tP}\}_{t\in\R}$ is a one-parameter multiplicative group:
\[
\Alm^{\Had (s+t)P} = \Alm^{\Had sP} \Had \Alm^{\Had tP},
\qquad \Alm^{\Had 0} = \mathbf{1}\mathbf{1}^\top .
\]
Equivalently, $L(t) = \log \Alm^{\Had tP} = t\, (P \Had \log\Alm)$ solves the
linear flow $\dot L = P \Had \log \Alm$: the potential tensor is the
time-$1$ point of a linear dynamical system in log-space.
\end{proposition}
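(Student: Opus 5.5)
The plan is to prove the statement in three short steps, each reducing to an elementary fact about the construction in Definition~\ref{def:plga}. \textbf{Step 1 (positivity).} By \eqref{eq:components}, each entry of $\Alm$ has the form $\iswiglu(u_{ij}) + \epsilon$ with $u_{ij} = \sum_k W_{ik}A_{kj} + (b_W)_{ij}$ real. By Definition~\ref{def:iswiglu}, $\iswiglu(u) = u^2\varsigma(u) \ge 0$ for every real $u$, since $\varsigma$ takes values in $(0,1)$. Hence $(\Alm)_{ij} \ge \epsilon > 0$. This holds uniformly in $Q$ and in all parameters; in particular no hypothesis on the generator $A$ is needed, because we only use that $A$ is a real matrix.

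\textbf{Step 2 (well-posedness and joint smoothness).} Since every entry of $\Alm$ lies in the open set $(0,\infty)$, the definition \eqref{eq:hadpower} applies entrywise. The entry map $(m,p) \mapsto \exp(p\log m)$ is real-analytic on $(0,\infty)\times\R$, as a composition of $\log$ on $(0,\infty)$, multiplication, and $\exp$. The Hadamard power is this map applied coordinatewise on the open set $(0,\infty)^{\dk\times\dk}\times\R^{\dk\times\dk}$, so it is jointly $C^\infty$ in $(\Alm,P)$. I would also note in passing that $\Alm$ is itself a smooth function of the upstream inputs, because $\iswiglu$ is smooth. The statement, however, only asks for smoothness in $(\Alm,P)$.

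\textbf{Step 3 (group law and the linear flow).} Fix $\Alm$ and work entrywise. First,
\[
(\Alm)_{ij}^{(s+t)P_{ij}} = \exp\bigl((s+t)P_{ij}\log(\Alm)_{ij}\bigr) = (\Alm)_{ij}^{sP_{ij}}\,(\Alm)_{ij}^{tP_{ij}},
\]
by the functional equation of $\exp$; this is the Hadamard product identity. Second, at $t=0$ every entry equals $\exp(0)=1$, so $\Alm^{\Had 0} = \mathbf{1}\mathbf{1}^\top$, which is the identity for $\Had$. Third, inverses are given by $t\mapsto -t$, and all entries stay positive, so the family is a one-parameter subgroup of the abelian group $\bigl((0,\infty)^{\dk\times\dk},\Had\bigr)$, i.e.\ the image of a homomorphism from $(\R,+)$. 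For the flow, take logarithms entrywise: $L(t) = \log\Alm^{\Had tP}$ has entries $tP_{ij}\log(\Alm)_{ij}$. Thus $L(t) = t\,(P\Had\log\Alm)$, so $\dot L = P\Had\log\Alm$ is constant in $t$, with $L(0)=0$ and $L(1) = \log\Ap$. By uniqueness for this trivial ODE, $L$ is the unique solution, and $\Ap$ is recovered as its time-$1$ point through $\exp$ applied entrywise.

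There is no genuine obstacle here. The only point that needs care is Step 1, and specifically the domain issue it resolves. Without the $+\epsilon$ shift, an entry of $\Alm$ could equal zero, namely when $u_{ij}=0$, and then $\log$ would be undefined and $0^{P_{ij}}$ would be singular for $P_{ij}<0$. So I would state explicitly that the strict lower bound $\epsilon$ is what places $\Alm$ in the open positive orthant, where $\log$ is smooth. Everything in Steps 2 and 3 then follows from standard properties of $\exp$ and $\log$.
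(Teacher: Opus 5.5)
Your proposal is correct and follows the paper's proof: nonnegativity of $\iswiglu$ gives the $\epsilon$ floor via \eqref{eq:metric}, and the rest follows entrywise from elementary properties of $\exp$ and $\log$. You simply write out the smoothness, group-law, and log-linear-flow steps that the paper leaves implicit.
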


\begin{proof}
$\iswiglu \ge 0$ by Definition~\ref{def:iswiglu}, so
$\Alm \geq \epsilon$ entry-wise by \eqref{eq:metric}. The remaining
statements follow from \eqref{eq:hadpower} and elementary properties of
$\exp$ and $\log$ applied entry-wise.
\end{proof}

\begin{theorem}[Perron--Frobenius structure of the positive interaction
tensor]
\label{thm:pf}
Each head's tensor $\Alm \in \R^{\dk\times\dk}$ is entry-wise
positive; hence its spectral radius $\rho(\Alm)$ is a simple, positive
eigenvalue (the Perron root) with entry-wise positive left and right
eigenvectors, and every other eigenvalue $\lambda$ satisfies
$\abs{\lambda} < \rho(\Alm)$.
\end{theorem}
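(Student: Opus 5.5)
The argument splits into two steps: entry-wise positivity, which is already proved, and then the classical Perron theorem for positive matrices.

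\emph{Positivity.} By Proposition~\ref{prop:positivity}, $\Alm \ge \epsilon > 0$ entry-wise. This holds because $\iswiglu \ge 0$ (Definition~\ref{def:iswiglu}) and $\epsilon = 10^{-9}$ is added to every entry in \eqref{eq:metric}. It holds for every head, every input, and every parameter value, so no hypothesis on training enters.

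\emph{Perron's theorem.} I would cite Perron's theorem for a strictly positive square matrix $M$. Its conclusions are that $\rho(M)>0$ is an eigenvalue, that it is algebraically simple, that it has entry-wise positive right and left eigenvectors, and that $\abs{\lambda}<\rho(M)$ for every other eigenvalue $\lambda$. For the left eigenvector, apply the same theorem to $M^\top$, which is also positive and has the same spectrum. If the citation should be made self-contained, I would prove the pieces as follows.

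\begin{enumerate}[leftmargin=2em]
\item \emph{Existence.} Apply Brouwer's fixed-point theorem to the map $v \mapsto Mv/\mathbf{1}^\top M v$ on the probability simplex. This yields $v \ge 0$ with $Mv = rv$. Then $v > 0$ and $r > 0$, because $Mv > 0$ whenever $v \ge 0$ is nonzero.
\item \emph{$r = \rho(M)$.} If $Mx = \lambda x$, then $\abs{\lambda}\,\abs{x} \le M\abs{x}$. Pair this with the positive left eigenvector $w$ of $M^\top$ for $r$ to get $\abs{\lambda}\, w^\top\abs{x} \le r\, w^\top \abs{x}$, hence $\abs{\lambda} \le r$.
\item \emph{Strict dominance.} Suppose $\abs{\lambda} = r$. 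Then equality must hold in the triangle inequality, since strict inequality anywhere would be spread to every coordinate by positivity of $M$. This forces $\abs{x}$ to be a Perron vector and all components of $x$ to share one phase. Then $\lambda = r$.
\item \emph{Simplicity.} Geometric simplicity: if $u$ is a second real eigenvector for $r$, choose $v - tu \ge 0$ with at least one zero entry. This vector satisfies $M(v - tu) = r(v - tu)$, so positivity forces $v - tu = 0$. Algebraic simplicity: a Jordan chain $My = ry + v$ would give $w^\top v = 0$, which contradicts $w, v > 0$.
\end{enumerate}

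I expect the main obstacle to be bookkeeping rather than substance, namely applying the theorem at the right level of the architecture. The statement is per head and per input instance, and $\Alm$ is only claimed positive, not symmetric (Remark~\ref{rem:nomenclature}). So the proof should not use any spectral theorem for symmetric matrices, and should state Perron for general nonnegative-entry positive matrices.

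A further point is that the conclusion concerns exact real arithmetic. The empirically observed near-rank-one structure of $A$ does not affect $\Alm$'s positivity. Likewise, the floating-point determinant underflow mentioned in the introduction carries no rank or spectral implication. I would add a sentence noting that the spectral gap $\rho(\Alm) - \max_{\lambda\ne\rho}\abs{\lambda}$ is positive but not quantified by this theorem.
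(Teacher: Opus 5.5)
Your proposal is correct and follows the paper's proof: entrywise positivity from Proposition~\ref{prop:positivity}, then the classical Perron theorem for positive matrices, which the paper simply cites (Horn--Johnson Ch.~8, Seneta). Your optional self-contained sketch is a sound standard proof of that theorem (Brouwer existence, domination via the left Perron vector, phase alignment for strict dominance, and the $v - tu$ and Jordan-chain arguments for simplicity); the one step to make explicit is that the positive eigenvector $w$ of $M^\top$ from step~1, with eigenvalue $r'$ say, belongs to the same eigenvalue $r$, which follows from $r\,w^\top v = w^\top M v = r'\,w^\top v$ together with $w^\top v > 0$.
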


\begin{proof}
This is the classical Perron--Frobenius theorem for positive matrices; see
\cite[Ch.~8]{hornjohnson} or \cite{seneta2006}. Positivity of $\Alm$ is
Proposition~\ref{prop:positivity}.
\end{proof}

The empirical finding of \cite{gokden2025} is sharper: at convergence, the
generator $A$ has (to high precision) \emph{identical rows}, with the same
row profile appearing on every head of a layer, and the determinants of
both $A$ and $\Alm$ evaluate to zero at floating-point resolution on
every head of every layer. (What these float-zero determinants do and do
not imply is quantified in Remark~\ref{rem:rankone} and
Appendix~\ref{app:numerical}: for $A$, the singular-value spectrum
confirms numerical rank one; for $\Alm$, it does not.) The
following elementary proposition is the exact statement of that limit
configuration for the generator, which \cite{gokden2025} calls the
\emph{learned singularity condition}.

\begin{proposition}[Rank-one singularity condition]
\label{prop:rankone}
Suppose $A = \mathbf{1}\alpha^\top$ for some row vector
$\alpha^\top \in \R^{1\times \dk}$ (identical rows), and write
$s = \alpha^\top\mathbf{1} = \sum_j \alpha_j$ for the common row sum.
Then:
\begin{enumerate}[label=(\roman*),leftmargin=2.2em]
\item $\rank A \le 1$, with equality if and only if $\alpha \ne 0$, and
$\det A = 0$ whenever $\dk \ge 2$;
\item $A\mathbf{1} = s\mathbf{1}$ and the characteristic polynomial of
$A$ is $\lambda^{\dk-1}(\lambda - s)$, so the spectrum is
$\{s\} \cup \{0\}$ for $\dk \ge 2$ and $\{s\}$ for $\dk = 1$; if
$s \ne 0$, then $s$ is the unique nonzero
eigenvalue, $\abs{s} = \rho(A)$, and $A/s$ is idempotent, a rank-one
(oblique) projection; if $s = 0$ and $\alpha \ne 0$, then $A$ is nonzero
and nilpotent of index $2$ ($A^2 = 0$), with spectral radius $0$;
\item $A^k = s^{k-1} A$ for all $k\ge 1$.
\item (Perturbation, singular-value form.) If
$A' = \mathbf{1}\alpha^\top + \Delta$ with $\norm{\Delta}_2 \le \delta$,
then $\sigma_j(A') \le \delta$ for every $j \ge 2$, and hence
\[
\abs{\det A'} \;\le\; \bigl(\norm{\mathbf{1}\alpha^\top}_2 +
\delta\bigr)\,\delta^{\dk-1}
\;=\; \bigl(\sqrt{\dk}\,\norm{\alpha}_2 + \delta\bigr)\,\delta^{\dk-1}.
\]
No conclusion about the locations of individual \emph{eigenvalues} of
$A'$ is drawn: $A'$ is in general nonnormal, and its eigenvalues can move
at rate $\sqrt{\delta}$ under a perturbation of size $\delta$ (take
$\mathbf{1}\alpha^\top = \begin{psmallmatrix}1&-1\\1&-1\end{psmallmatrix}$
and $\Delta = \begin{psmallmatrix}0&0\\\delta&0\end{psmallmatrix}$:
the eigenvalues of $A'$ are $\pm i\sqrt{\delta}$ and
$\det A' = \delta$).\footnote{Weyl's eigenvalue inequality is valid
only for Hermitian (more generally normal) matrices and does not apply
to $A'$; the counterexample displayed here refutes the eigenvalue and
determinant bounds it would otherwise suggest. Eigenvalue-location
statements for the nonnormal $A'$ would require normality or explicit
pseudospectral/eigenbasis-conditioning hypotheses, which we do not
impose.}
\end{enumerate}
\end{proposition}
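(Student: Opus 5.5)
The plan is to prove all four parts from the single outer-product identity $A = \mathbf{1}\alpha^\top$, with associativity of matrix products doing most of the work, and to treat the perturbation part (iv) through singular values only.

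\emph{Part (i).} Every column of $A$ is a scalar multiple of $\mathbf{1}$, namely $\alpha_j\mathbf{1}$. Hence the column space lies in $\operatorname{span}\{\mathbf{1}\}$ and $\rank A \le 1$. Since $\mathbf{1}\neq 0$, we have $A = 0$ if and only if $\alpha = 0$, which gives the equality case. For $\dk \ge 2$, a rank at most $1 < \dk$ forces $\det A = 0$.

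\emph{Parts (ii) and (iii).} First, $A\mathbf{1} = \mathbf{1}(\alpha^\top\mathbf{1}) = s\mathbf{1}$. For the characteristic polynomial I would use the matrix determinant lemma,
\[
\det(\lambda I - \mathbf{1}\alpha^\top) = \lambda^{\dk}\bigl(1 - \lambda^{-1}\alpha^\top\mathbf{1}\bigr) = \lambda^{\dk-1}(\lambda - s)
\]
for $\lambda \neq 0$, and then extend to all $\lambda$ by polynomial identity. Alternatively, $\ker A = \alpha^\perp$ has dimension $\dk-1$ when $\alpha\ne0$, and $\tr A = s$. This gives the stated spectrum in both cases $\dk\ge2$ and $\dk=1$. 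Next, associativity gives
\[
A^2 = \mathbf{1}(\alpha^\top\mathbf{1})\alpha^\top = sA,
\]
and induction yields $A^k = s^{k-1}A$, which is (iii). The remaining claims of (ii) follow:
\begin{itemize}[leftmargin=2em]
\item if $s\neq0$, then $(A/s)^2 = A/s$, the unique nonzero eigenvalue is $s$, and $\rho(A) = \abs{s}$;
\item if $s=0$ and $\alpha\neq0$, then $A\neq0$ but $A^2=0$, so $A$ is nilpotent of index $2$ and all eigenvalues are $0$.
\end{itemize}

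\emph{Part (iv).} I would apply Weyl's inequality for \emph{singular values}, which is valid for arbitrary matrices:
\[
\sigma_{i+j-1}(X+Y) \le \sigma_i(X) + \sigma_j(Y).
\]
Take $X = \mathbf{1}\alpha^\top$, which has rank at most $1$, so $\sigma_2(X) = 0$. With $i=2$ and $j=1$ this gives $\sigma_{2}(A') \le \norm{\Delta}_2 \le \delta$, and monotonicity of singular values gives the bound for all $j\ge2$. For the first singular value, subadditivity of the spectral norm gives $\sigma_1(A') \le \norm{\mathbf{1}\alpha^\top}_2 + \delta$, and $\norm{\mathbf{1}\alpha^\top}_2 = \norm{\mathbf{1}}_2\norm{\alpha}_2 = \sqrt{\dk}\,\norm{\alpha}_2$. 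Since $\abs{\det A'} = \prod_j \sigma_j(A')$, multiplying these bounds gives the stated determinant estimate.

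For the counterexample I only need to compute: $A' = \begin{psmallmatrix}1&-1\\1+\delta&-1\end{psmallmatrix}$ has trace $0$ and determinant $-1+(1+\delta)=\delta$. Its characteristic polynomial is therefore $\lambda^2+\delta$, and its eigenvalues are $\pm i\sqrt\delta$.

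The only step that needs any care is (iv). The point is to use the singular-value (Ky Fan/Weyl) inequality rather than the eigenvalue version, which would be invalid here because $A'$ is in general nonnormal. Everything else is routine rank-one algebra.
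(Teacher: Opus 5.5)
Your proposal is correct and follows the paper's proof essentially step for step: the range lies in $\operatorname{span}\{\mathbf{1}\}$ for (i), $A^2 = sA$ plus induction for (iii) and the idempotent/nilpotent cases, and (iv) uses Weyl's inequality for \emph{singular values} together with $\abs{\det A'} = \prod_j \sigma_j(A')$. The differences are minor: your main route to the characteristic polynomial is the matrix determinant lemma, whereas the paper uses your stated alternative (eigenvalue $0$ has geometric multiplicity at least $\dk - 1$, and the trace gives the last root), and you check the $\pm i\sqrt{\delta}$ counterexample explicitly, which the paper's proof does not.
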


\begin{proof}
(i)--(ii): $A v = \mathbf{1}(\alpha^\top v)$, so the range is contained in
$\operatorname{span}\{\mathbf{1}\}$, proper iff $\alpha \ne 0$;
$A\mathbf{1} = s\mathbf{1}$; since $\rank A \le 1$, the eigenvalue $0$
has geometric (hence algebraic) multiplicity at least $\dk - 1$, and the
trace $s$ accounts for the remaining root, giving
$\lambda^{\dk-1}(\lambda-s)$. If $s = 0$ and $\alpha \ne 0$ then
$A \ne 0$ while $A^2 = \mathbf{1}(\alpha^\top\mathbf{1})\alpha^\top = 0$.
(iii): $A^2 = sA$ and induction. (iv): $\sigma_j(A') \le
\sigma_j(\mathbf{1}\alpha^\top) + \norm{\Delta}_2 \le 0 + \delta$ for
$j \ge 2$ by Weyl's inequality \emph{for singular values}
\cite[Cor.~7.3.5(a) and eq.~(7.3.13)]{hornjohnson}, since $\mathbf{1}\alpha^\top$ has rank
$\le 1$; then $\abs{\det A'} = \prod_j \sigma_j(A') \le \sigma_1(A')\,
\delta^{\dk-1}$ and $\sigma_1(A') \le \norm{\mathbf{1}\alpha^\top}_2 +
\delta$, with $\norm{\mathbf{1}\alpha^\top}_2 =
\norm{\mathbf{1}}_2\norm{\alpha}_2 = \sqrt{\dk}\,\norm{\alpha}_2$.
\end{proof}

\begin{remark}
\label{rem:rankone}
Proposition~\ref{prop:rankone} is consistent with the numerical
observations of \cite{gokden2025}: determinants of $A$ and $\Alm$
reported as $0$ at floating-point resolution on all heads, one dominant
real eigenvalue equal to the row sum, and row-wise repetition of values.
(Float-zero determinants of $64\times64$ matrices are underflow-level
evidence; determinants are ill-conditioned, and
Appendix~\ref{app:numerical} reports singular-value spectra and numerical
ranks under stated tolerances instead.) One exact consequence propagates
down the chain, and one does not. First, $WA = (W\mathbf{1})\,\alpha^\top$
remains of rank $\le 1$: the left matrix action of $W$ rescales the
identical rows by the components of $W\mathbf{1}$, so the argument of
$\iswiglu$ in \eqref{eq:metric} is a rank-one matrix plus the learned
bias. Second, however, an entry-wise nonlinear image of a
rank-one-plus-bias matrix is \emph{generically of full rank}: singularity
of $A$ does not imply singularity of $\Alm$. The singular-value audit of
Appendix~\ref{app:numerical} bears this out on the audited checkpoint:
$\Alm$ is generically of near-full numerical rank (median numerical rank
$62.5$ of $64$, range $1$--$64$) even though its float determinant
vanishes on every head: with singular values spanning many scales,
the determinant, a product of $\dk$ of them, underflows while the matrix
is far from rank one. The tolerance-based singularity statement
therefore attaches to the generator $A$, where
Appendix~\ref{app:numerical} confirms it in every audited instance;
this paper asserts no numerical low-rank property of $\Alm$, and the
vanishing float determinants of $\Alm$ reported in \cite{gokden2025}
carry no rank information. Additionally, the
\emph{same} $\alpha$ appearing on every head of a layer is explained by the
architecture: the metric learner $\Phi_{\mathrm{res}}$ is shared across the
heads of a layer and acts row-wise (Proposition~\ref{prop:rowfact}), so a
collapse of its row map to a constant produces one common profile for all
heads; this is the mechanism developed in Section~\ref{subsec:origin}. The
interpretation is important: at convergence the metric learner has collapsed
the input dependence of $A$ onto a single learned direction, which is the
algebraic mechanism behind operator invariance
(Section~\ref{sec:invariance}).
\end{remark}

To state the next result precisely, we make the internal structure of the
metric learner explicit. Each of its $N_{\mathrm{res}}$ residual units acts
on a single row $r \in \R^{\dk}$ of its matrix argument as
\begin{equation}
\label{eq:resunit}
u_j(r) \;=\; \LN\bigl(r + g_{j,2}(g_{j,1}(r))\bigr),
\qquad j = 1, \dots, N_{\mathrm{res}},
\end{equation}
where $g_{j,1}, g_{j,2} : \R^{\dk} \to \R^{\dk}$ are the two successive
complete SwiGLU blocks of unit $j$, each a gated map with hidden width
$\Adff$ (Definition~\ref{def:plga}); in the reference code each block is a
\texttt{GLUVariant} module with two $\dk \to \Adff$ linear maps multiplied
elementwise and a $\dk$-dimensional linear output map.\footnote{The
typing matters: the natural-looking alternative
$g_{j,1} : \R^{\dk} \to \R^{\Adff}$, $g_{j,2} : \R^{\Adff} \to \R^{\dk}$
would describe the two internal halves of a single feed-forward block,
whereas \texttt{ResLayerA} in the reference implementations applies two
full gated blocks in succession; the typing above matches the code.} When the dependence on the input sequence $x$
matters, we write $Q(x)$ for the query matrix produced on input $x$ and,
correspondingly, $D(x) = Q(x)^\top Q(x)$ and $A(x)$ for the resulting
density operator and metric generator.

\begin{proposition}[Row-factorization and permutation equivariance of the
metric learner]
\label{prop:rowfact}
In the reference implementations, every layer of $\Phi_{\mathrm{res}}$
(SwiGLU blocks acting on the last axis, residual sums, and LayerNorm over
the last axis) acts on each row of its $\dk\times\dk$ argument
independently and identically. Consequently there is a single map
$\varphi = u_{N_{\mathrm{res}}} \circ \cdots \circ u_1 :
\R^{\dk} \to \R^{\dk}$ (the \emph{row map}, the composition of the
residual units \eqref{eq:resunit}) such that
\[
\Phi_{\mathrm{res}}(M)_{i,:} \;=\; \varphi\bigl(M_{i,:}\bigr),
\qquad i = 1,\dots,\dk,
\]
for every input $M$. Hence:
\begin{enumerate}[label=(\roman*),leftmargin=2.2em]
\item (Equivariance.) $\Phi_{\mathrm{res}}(\Pi M) = \Pi\,
\Phi_{\mathrm{res}}(M)$ for every row permutation $\Pi$.
\item (Collapse criterion.) The generator $A(x)$ has identical rows
with one \emph{common} row value $\alpha^{\ast\top}$ shared across all
inputs $x$ in a set $\mathcal{X}$ if and only if $\varphi$ is
constant on the union of all rows of $\LN(D(x))$, $x \in \mathcal{X}$.
(Identical rows within each single input separately is the weaker
condition that $\varphi$ is constant on each input's row set; constancy
on the union is what the cross-input invariance of
Section~\ref{sec:invariance} requires.)
\item (Cross-head identity.) Since $\varphi$ is shared by all heads of a
layer, constancy of $\varphi$ forces $A^{(i)} = \mathbf{1}\,
\alpha^{\ast\top}$ with the \emph{same} $\alpha^\ast$ for every head $i$,
even though the heads' density operators differ.
\item (Lipschitz transfer.) $\norm{A(x) - A(x')}_F \le
\operatorname{Lip}(\varphi)\,\norm{\LN(D(x)) - \LN(D(x'))}_F$ whenever
$\varphi$ is $\operatorname{Lip}(\varphi)$-Lipschitz on the visited row
set.
\end{enumerate}
\end{proposition}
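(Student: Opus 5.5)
The plan is to first establish the row-factorization claim by induction over the layers of $\Phi_{\mathrm{res}}$, and then read off (i)--(iv) as direct consequences.

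\textbf{Row factorization.} Each primitive layer acts on the last axis of a $\dk\times\dk$ matrix $M$, so row $i$ of its output depends only on row $i$ of its input, through a map that does not depend on $i$:
\begin{itemize}[leftmargin=2em]
\item a SwiGLU block multiplies $M$ on the right by its weight matrices, adds a bias broadcast across rows, and applies the Swish and Hadamard product entrywise;
\item the residual sum adds the unit's input row-by-row;
\item LayerNorm normalizes each row using that row's own mean and variance, then applies an affine map shared by all rows.
\end{itemize}
Each residual unit is therefore a map $u_j:\R^{\dk}\to\R^{\dk}$ applied to every row, as in \eqref{eq:resunit}. A composite of row-wise maps is row-wise, with row map the composite of the individual row maps. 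Inducting on $j$ gives $\Phi_{\mathrm{res}}(M)_{i,:}=\varphi(M_{i,:})$ with $\varphi=u_{N_{\mathrm{res}}}\circ\cdots\circ u_1$. The identification of these layers with the reference code is the verification recorded in Remark~\ref{rem:provenance}; mathematically, we take the layer description of Definition~\ref{def:plga} as given.

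\textbf{(i) Equivariance.} If $\Pi$ sends row $\pi(i)$ to position $i$, then
\[
\Phi_{\mathrm{res}}(\Pi M)_{i,:}=\varphi\bigl(M_{\pi(i),:}\bigr)=\bigl(\Pi\,\Phi_{\mathrm{res}}(M)\bigr)_{i,:}.
\]

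\textbf{(ii) Collapse criterion.} Since $A(x)=\Phi_{\mathrm{res}}(\LN(D(x)))$, the factorization gives $A(x)_{i,:}=\varphi(\LN(D(x))_{i,:})$.
\begin{itemize}[leftmargin=2em]
\item ($\Leftarrow$) If $\varphi$ is constant on the union of rows, with value $\alpha^{\ast\top}$, then every row of every $A(x)$ equals $\alpha^{\ast\top}$.
\item ($\Rightarrow$) If every row of every $A(x)$ equals $\alpha^{\ast\top}$, then every element $r$ of the union is some row $\LN(D(x))_{i,:}$, so $\varphi(r)=A(x)_{i,:}=\alpha^{\ast\top}$.
\end{itemize}
The per-input version is the same argument applied to each $x$ separately. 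The only subtlety is to state the criterion on the actually visited row set, which is what the statement does.

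\textbf{(iii) Cross-head identity.} For head $i$, $A^{(i)}=\Phi_{\mathrm{res}}(\LN(D^{(i)}))$ with the same $\varphi$, since the network is shared by all heads of a layer. Constancy of $\varphi$ on the union of all heads' rows therefore yields $A^{(i)}=\mathbf{1}\alpha^{\ast\top}$ with one common $\alpha^\ast$. Here "constancy" should be read on the union over heads as well as over inputs, and I would say so explicitly.

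\textbf{(iv) Lipschitz transfer.} Sum the row-wise inequalities:
\[
\norm{A(x)-A(x')}_F^2=\sum_i\norm{\varphi(r_i)-\varphi(r_i')}_2^2\le\operatorname{Lip}(\varphi)^2\sum_i\norm{r_i-r_i'}_2^2,
\]
where $r_i$ and $r_i'$ are the $i$-th rows of $\LN(D(x))$ and $\LN(D(x'))$. This requires the Lipschitz constant to be taken with respect to the Euclidean norm on rows, which makes the row sum equal the Frobenius norm. Note that only pairs of rows with the same index are compared.

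The main obstacle is not mathematical. It is making sure that LayerNorm, including its affine parameters and the $\varepsilon_{\LN}$ term, really normalizes per row over the last axis, with no cross-row statistics. That is a matter of implementation fidelity, and I would cite the code verification rather than prove it.
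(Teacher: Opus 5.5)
Your proposal is correct and follows the paper's proof essentially step for step. Each SwiGLU block, each residual sum, and each LayerNorm acts row-wise, composition preserves this and gives the shared $\varphi$, and (i)--(iv) are then read off directly, with (iv) obtained by summing the row-wise Lipschitz inequalities in Frobenius norm.

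Your explicit remark that the constancy in (iii) must hold on the union of rows over all heads is a worthwhile sharpening. The paper's phrase ``apply (ii) per head'' leaves this implicit, and constancy on each head's row set separately would allow different heads different values $\alpha^\ast$.
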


\begin{proof}
Each SwiGLU block is a map applied to the last axis, i.e.\ to each row
separately with shared weights; LayerNorm over the last axis normalizes
each row separately; residual addition is row-wise. A composition of
row-wise maps with shared parameters is row-wise with a single shared row
map $\varphi$. (i) follows since applying the same function to permuted
rows permutes the outputs. (ii): a common output value across all rows of
all inputs in $\mathcal{X}$ is exactly constancy of $\varphi$ on
the union of the row sets; per-input identical rows alone constrains
$\varphi$ only on each row set separately. (iii): apply (ii) per head and note $\varphi$ is the same map for
all heads (the parameter tensors $W,b_W,P,a,b_a$ downstream are per-head,
but $\Phi_{\mathrm{res}}$ is instantiated once per layer). (iv) is the
definition of a Lipschitz constant applied row-wise and summed in Frobenius
norm.
\end{proof}

Proposition~\ref{prop:rowfact}(iii) turns the observed cross-head identity
of $A$ \cite{gokden2025} from a curiosity into \emph{diagnostic evidence}:
the heads feed different density operators to the same $\varphi$, and a
locally constant $\varphi$ would force exactly the observed agreement.
Observed equality on the visited set does not by itself identify local
constancy (equal or symmetry-related head inputs, or pointwise agreement
without flatness, are alternatives); the direct discriminating
measurements, composite Jacobians and pairwise contraction ratios of the
trained $\varphi$ on visited rows, are reported in
Appendix~\ref{app:numerical} and support the locally-constant reading on
the audited checkpoint. The mechanism is analyzed in
Section~\ref{subsec:origin}.

\begin{proposition}[The ideal attention operator is Markov]
\label{prop:markov}
With the ideal masked softmax of Section~\ref{sec:prelim},
$\Elm$ is row-stochastic with exact causal support: $\Elm \ge 0$,
$\Elm \mathbf{1} = \mathbf{1}$, and $(\Elm)_{ij} = 0$ for $j > i$, with
$(\Elm)_{ij} > 0$ for $j \le i$. Consequently:
\begin{enumerate}[label=(\roman*),leftmargin=2.2em]
\item $\norm{\Elm}_\infty = 1$ and every eigenvalue of $\Elm$ lies in the
closed unit disk, with $1$ always attained
(lower-triangular stochastic structure gives spectrum equal to the diagonal
entries in the causal case);
\item row $i$ of $\Vlm = \Elm V$ is a convex combination (an
expectation) of value vectors of positions $j \le i$:
$\Vlm[i] = \E_{j\sim \Elm[i,\cdot]}\, V[j]$.
\end{enumerate}
\end{proposition}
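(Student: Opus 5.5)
I would argue directly from the definition of the ideal masked softmax in Section~\ref{sec:prelim}, applied row by row with index sets $\mathcal{J}_i = \{j : j \le i\}$, and then read off (i) and (ii) as elementary consequences.

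\textbf{Step 1: row structure.} Fix a row $i$ and write $z = E_{i,:}$. The scores are finite reals, because $Q$, $\Glm$ and $K$ are real matrices. By definition,
\[
(\Elm)_{ij} = \frac{e^{z_j}}{\sum_{l \le i} e^{z_l}} \quad \text{for } j \le i,
\qquad
(\Elm)_{ij} = 0 \quad \text{for } j > i.
\]
The denominator is a finite sum of strictly positive numbers over the nonempty set $\mathcal{J}_i$, which contains $i$. This gives $(\Elm)_{ij} > 0$ for $j \le i$, exact zeros above the diagonal, and row sum $1$. Hence $\Elm \ge 0$ and $\Elm \mathbf{1} = \mathbf{1}$.

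\textbf{Step 2: claim (i).} Because the entries are nonnegative, the maximum absolute row sum equals the maximum row sum. Every row sum is $1$, so $\norm{\Elm}_\infty = 1$. Every induced operator norm dominates the spectral radius, so $\rho(\Elm) \le 1$ and all eigenvalues lie in the closed unit disk. The relation $\Elm \mathbf{1} = \mathbf{1}$ shows that $1$ is an eigenvalue.

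For the parenthetical remark, I would use that $\Elm$ is lower triangular. Its characteristic polynomial is therefore $\prod_i (\lambda - (\Elm)_{ii})$, so the spectrum is exactly the set of diagonal entries, each lying in $(0,1]$. Row $1$ is supported only on $j = 1$, so $(\Elm)_{11} = 1$, which independently confirms that $1$ is attained.

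\textbf{Step 3: claim (ii).} Expanding the product gives $\Vlm[i] = \sum_j (\Elm)_{ij} V[j] = \sum_{j \le i} (\Elm)_{ij} V[j]$, using the zeros above the diagonal. Step 1 shows the weights are positive and sum to $1$. So $\Elm[i,\cdot]$ is a probability distribution on $\mathcal{J}_i$, and the sum is the stated expectation and convex combination.

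No step is a genuine obstacle. The only point needing care is that the claimed strict positivity and exact zeros hold for the ideal masked softmax, not for the finite-constant implementation. Under the finite constant, the real-valued entries for $j > i$ are tiny but positive, so exact causal support fails, although row-stochasticity and parts (i)--(ii) with approximate support remain valid. I would flag this with a pointer to Remark~\ref{rem:finitemask}.
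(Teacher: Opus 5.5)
Your proof is correct and matches the paper's argument: rows of the ideal masked softmax are probability vectors supported exactly on $\{j \le i\}$; (i) follows from $\norm{\Elm}_\infty = 1$ bounding the spectral radius (the paper's sub-multiplicativity route) together with triangularity; and (ii) is the matrix product with stochastic rows. One small refinement to your closing remark: under the finite mask $\Elm$ is no longer lower triangular, so the diagonal-spectrum parenthetical of (i) does not carry over, although the norm identity, the unit-disk bound, and the eigenvalue $1$ still do.
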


\begin{proof}
Rows of the ideal masked softmax are probability vectors supported exactly
on the allowed set $\{j \le i\}$. (i) is standard for stochastic matrices
(Gershgorin or the sub-multiplicativity of $\norm{\cdot}_\infty$); a
causal (lower-triangular) matrix has its eigenvalues on the diagonal.
(ii) is the definition of matrix multiplication with stochastic rows.
\end{proof}

Proposition~\ref{prop:markov} is a statement about the \emph{support}
of the attention operator: row $i$ mixes values of positions $j \le i$
only. It does not by itself constrain how the scores and the operator
$\Glm$ entering row $i$ depend on the input, and in PLGA they depend on
\emph{all} supplied rows through the density operator
\eqref{eq:density}. The row-wise dependence property that
lower-triangular support does and does not deliver is stated precisely
in Section~\ref{subsec:online}; the deployed final-row generation
interface (Definition~\ref{def:pldr}) does not require the stronger
property.

\begin{remark}[Finite-mask implementation semantics]
\label{rem:finitemask}
The reference implementations realize the causal mask by adding a large
finite negative constant $-\mu$ to disallowed scores, with the
implementation-specific values as recorded in
Section~\ref{sec:prelim}: $\mu = 10^9$ in the native TensorFlow and
PyTorch code, $\mu = -\texttt{torch.finfo(dtype).min}$ in the Hugging
Face port via the Transformers causal-mask utility. For finite real
logits the exact
softmax assigns strictly positive mass to \emph{every} coordinate, so the
exact real-valued implemented operator is dense, not causally supported;
Proposition~\ref{prop:markov} does not literally hold for it. Under
ordinary score ranges the masked entries
($e^{z - \mu}$ relative to unmasked $e^{z}$) underflow to exact zero in
floating point for either value of $\mu$, so the float operator
coincides with the ideal masked
softmax; this is a numerical-realization statement, conditional on the
unmasked scores staying many orders of magnitude above $-\mu$, and it is
the semantics in which all implementation-level claims of this paper are
to be read. Statements proved for the ideal operator (here;
Proposition~\ref{prop:perturb}) are exact; their transfer to the
implementation carries this underflow caveat.
\end{remark}

Proposition~\ref{prop:markov} is elementary but load-bearing: PLGA's
inductive action is that of an input-adapted \emph{averaging operator}
on the token graph, the same structural role played by transfer
operators in ergodic theory and by graph Laplacian smoothers.

\subsection{PLGA as attention with a learned bilinear form}
\label{subsec:bilinear}

SDPA computes scores $QK^\top/\sqrt{\dk}$: the bilinear form comparing
queries with keys is the Euclidean one, $B(q,k) = q^\top I k$. PLGA computes
$Q \Glm K^\top/\sqrt{\dk}$: the form is $B_G(q,k) = q^\top \Glm k$, with
$\Glm$ produced by the nonlinear chain
\eqref{eq:density}--\eqref{eq:ec} from the input itself. Three regimes must
be distinguished:

\begin{enumerate}[leftmargin=2em]
\item \textbf{$\Glm \equiv I$ (frozen identity).} PLGA is exactly SDPA
(Theorem~\ref{thm:collapse}(i)).
\item \textbf{$\Glm = G^\ast$ frozen but arbitrary.} A generalized SDPA with
a learned constant bilinear operator. Without position-dependent transforms
this is linearly equivalent to SDPA by absorbing $G^\ast$ into the query
projection; with rotary embeddings the equivalence generically fails
(Proposition~\ref{prop:rope-commutant}), so even the frozen head realizes
score functions outside the SDPA-realizable family, in the conditional,
head-level sense counted by Corollary~\ref{cor:posgap}.
\item \textbf{$\Glm = \Glm(x)$ input-generated (training-time PLGA).} The
score map $Q \mapsto Q\,\Glm(Q)\,K^\top$ is generically nonlinear in the
input (degenerate parameter choices linearize it: $a = 0$ makes
$\Glm \equiv b_a$ constant and the head falls back to regime 2),
and gradients flow through $\Phi_{\mathrm{res}}, W, P, a$; this regime is
\emph{not} reducible to SDPA, which is the content of the
training/inference asymmetry (Theorem~\ref{thm:collapse}(iii)).
\end{enumerate}

\subsection{Multi-head structure as a local decomposition}
\label{subsec:multihead}

With $h$ heads, the model space $\R^{\dmodel}$ is split as an internal direct
sum $\bigoplus_{i=1}^h \R^{\dk}$ and each head learns its own tuple
$(A^{(i)}, \Alm^{(i)}, \Ap^{(i)}, \Glm^{(i)}, \Elm^{(i)})$; outputs are
concatenated and mixed by a linear map. Formally, the deductive state of one
decoder layer is the block-diagonal operator
\begin{equation}
\label{eq:blockdiag}
\Glm^{\mathrm{layer}} \;=\; \bigoplus_{i=1}^{h} \Glm^{(i)}
\;\in\; \R^{\dmodel\times\dmodel}
\quad\text{(in the head-aligned basis)},
\end{equation}
so the layer's score computation is assembled from mutually
non-interacting local components, each acting on its own subspace
\cite{gokden2021}. The output projection $W_O$ then mixes the head
\emph{outputs} linearly, after the scores are formed; it does not act on,
or conjugate, the per-head score operators themselves. The division
of parameters respects this structure: the metric-learner network
$\Phi_{\mathrm{res}}$ is shared across the heads of a layer
(Proposition~\ref{prop:rowfact}), while the five tensors
$W, b_W, P, a, b_a$ are per-head: heads are differentiated \emph{after}
the common metric generator, not before.

\section{The PLDR-LLM Architecture}
\label{sec:pldr}

\subsection{Rotary position embeddings and a commutant characterization}
\label{subsec:rope}

PLDR-LLM applies rotary position embeddings (RoPE) \cite{su2021} to queries
and keys after head-splitting. RoPE at position $n$ is the orthogonal
block-diagonal rotation
\begin{equation}
R_n \;=\; \bigoplus_{j=1}^{\dk/2} \operatorname{Rot}(n\theta_j),
\qquad
\operatorname{Rot}(\phi) = \begin{pmatrix}\cos\phi & -\sin\phi\\
\sin\phi & \cos\phi\end{pmatrix},
\qquad \theta_j = \Theta^{-2(j-1)/\dk},
\end{equation}
with $\Theta = 10^4$ in the reference configuration, so $n \mapsto R_n$ is a
group homomorphism $\Z \to SO(\dk)$ into a maximal
torus $T \cong (S^1)^{\dk/2}$ (the \emph{ambient RoPE torus}), with
$R_n^\top = R_{-n}$ and
$R_nR_m = R_{n+m}$. Scores become
\begin{equation}
\label{eq:rope-score}
E_{nm} \;=\; \frac{(R_n q_n)^\top\, \Glm\, (R_m k_m)}{\sqrt{\dk}}
\;=\; \frac{q_n^\top\, R_{-n}\, \Glm\, R_m\, k_m}{\sqrt{\dk}} .
\end{equation}
For SDPA ($\Glm = I$) one has $R_{-n} I R_m = R_{m-n}$: scores depend on
positions only through the offset $m-n$ (the celebrated relative-position
property). With a learned metric the situation is characterized exactly:

\begin{proposition}[Commutant characterization of relative-position invariance]
\label{prop:rope-commutant}
Assume the rotation angles $\theta_1,\dots,\theta_{\dk/2}$ satisfy the
\emph{nonresonance conditions}
\[
\theta_a \not\equiv \pm\,\theta_b \pmod{2\pi} \quad (a \ne b),
\qquad
\theta_a \not\equiv 0 \pmod{\pi} \quad (\text{all } a),
\]
i.e.\ the eigenvalues $e^{\pm i\theta_a}$ of $R_1$ are $\dk$ distinct
non-real numbers. The standard frequencies
$\theta_j = \Theta^{-2(j-1)/\dk}$ with $\Theta = 10^4$ satisfy these
conditions: they lie in $(0,1] \subset (0,\pi)$ and are strictly
decreasing, hence pairwise distinct with no pair summing to a multiple of
$2\pi$.\footnote{Nonresonance is the correct hypothesis here. The
natural-looking alternative, rational independence of the $\theta_a$
with a density argument for the cyclic orbit in the torus, is both
insufficient for density (which requires rational independence of
$1, \theta_1/2\pi, \dots$; e.g.\ $\theta = \pi$ is irrational yet has a
finite orbit) and false for the standard frequencies, which obey the
exact rational relations $\theta_{j + \dk/8} = \theta_j/10$ for base
$10^4$. The nonresonance route avoids density altogether.} Then the map
$(n,m) \mapsto R_{-n}\,G\,R_m$ depends only on $m-n$ (for all offsets) if and
only if $G$ commutes with every rotation $R_n$ (equivalently, with
the ambient RoPE torus $T$), which holds if and only if $G$
is block-diagonal with $2\times2$ blocks of the form
$\begin{psmallmatrix} c_j & -s_j\\ s_j & c_j\end{psmallmatrix}$, i.e.\
$G \in \bigoplus_j \{c_j I_2 + s_j J_2\} \cong \mathbb{C}^{\dk/2}$, where
$I_2$ is the $2\times2$ identity and
$J_2 = \begin{psmallmatrix} 0 & -1\\ 1 & 0\end{psmallmatrix}$ is the
rotation by $\pi/2$ (the complex structure of the plane): $G$ acts as the
complex scalar $c_j + i s_j$ on the $j$-th rotation plane. For generic learned $\Glm$ this
fails, and PLGA scores carry \emph{absolute} positional information through
the conjugation orbit $R_{-n}\Glm R_{n}$.
\end{proposition}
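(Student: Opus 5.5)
The proof splits into three equivalences: offset-only dependence $\Leftrightarrow$ commuting with every $R_n$, commuting with every $R_n$ $\Leftrightarrow$ commuting with $R_1$, and commuting with $R_1$ $\Leftrightarrow$ the block-diagonal complex-scalar form. The verification that the standard frequencies are nonresonant is immediate from the facts recorded in the statement.

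\textbf{First equivalence.} Suppose $R_{-n}GR_m$ depends only on $m-n$ for all $(n,m)$. Comparing the pair $(n,m)=(n,n)$ with $(0,0)$ gives $R_{-n}GR_n = G$ for every $n$, i.e.\ $GR_n = R_nG$. Conversely, if $G$ commutes with every $R_n$, then
\[
R_{-n}GR_m = G R_{-n}R_m = G R_{m-n},
\]
which depends only on $m-n$. Since $n\mapsto R_n$ is a homomorphism, $R_n = R_1^n$. Hence commuting with all $R_n$ is the same as commuting with $R_1$. Commuting with the ambient torus $T$ implies commuting with all $R_n$, since the $R_n$ lie in $T$. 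The reverse implication (commuting with all $R_n$, hence with $R_1$, implies commuting with $T$) is delivered by the next step: the resulting commutant is $\bigoplus_j \{c_jI_2+s_jJ_2\}$, and every element of $T$ is block-diagonal with blocks $\operatorname{Rot}(\phi_j) = \cos\phi_j I_2 + \sin\phi_j J_2$, which commute blockwise with $c_jI_2+s_jJ_2$.

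\textbf{Main step: the commutant of $R_1$.} This is the step where nonresonance is used. Under nonresonance $R_1$ has $\dk$ distinct eigenvalues $e^{\pm i\theta_a}$, so over $\mathbb{C}$ it is diagonalizable with one-dimensional eigenspaces. Any matrix commuting with $R_1$ preserves each eigenspace, so in the eigenbasis its commutant is exactly the diagonal matrices. Concretely, take the eigenvectors $v_a^{\pm} = e_{2a-1} \mp i e_{2a}$ in the $a$-th plane. A real $G$ commuting with $R_1$ has $Gv_a^{+} = \lambda_a v_a^{+}$, and complex conjugation then gives $Gv_a^{-} = \bar\lambda_a v_a^{-}$. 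Hence $G$ preserves each real plane $\operatorname{span}\{e_{2a-1},e_{2a}\}$, so it is block-diagonal, and on that plane it acts as multiplication by $\lambda_a = c_a+is_a$, i.e.\ by the block $c_aI_2+s_aJ_2$ (up to the conjugation convention). Conversely, such blocks visibly commute with $\operatorname{Rot}(\theta_a)$. The point to check carefully is that both nonresonance conditions are needed. The condition $\theta_a\not\equiv\pm\theta_b$ prevents eigenvalues from coinciding across planes, which would let $G$ mix two planes. The condition $\theta_a\not\equiv 0 \pmod{\pi}$ prevents $e^{i\theta_a}=e^{-i\theta_a}$, i.e.\ $\operatorname{Rot}(\theta_a)=\pm I_2$, which would let an arbitrary $2\times2$ block commute. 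It is worth displaying one counterexample of each kind.

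\textbf{Standard frequencies.} The $\theta_j = \Theta^{-2(j-1)/\dk}$ lie in $(0,1]$, so they are nonzero modulo $\pi$. They are distinct and lie in $(0,1]$, so no difference is a nonzero multiple of $2\pi$. Each sum $\theta_a+\theta_b$ lies in $(0,2]$, so no sum is congruent to $0 \pmod{2\pi}$.

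\textbf{Genericity.} The commutant is a linear subspace of dimension $\dk$ inside the $\dk^2$-dimensional matrix space, hence a proper subspace for $\dk\ge2$. A generic (e.g.\ Lebesgue-almost-every) $\Glm$ therefore lies outside it. By the first equivalence, $R_{-n}\Glm R_n$ then depends on $n$, so the scores carry absolute positional information through the conjugation orbit $R_{-n}\Glm R_{n}$.
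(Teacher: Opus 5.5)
Your proposal is correct and follows essentially the same route as the paper. You compare equal-offset pairs to reduce offset-only dependence to $R_{-n}GR_n=G$, pass to the single generator $R_1$, and identify its commutant through the distinct-eigenvalue diagonalization regrouped into real $2\times2$ planes. Your explicit eigenvector computation yields exactly $c_aI_2+s_aJ_2$ with $\lambda_a=c_a+is_a$, so the ``up to the conjugation convention'' hedge is unnecessary. Your treatment of the torus equivalence and the genericity clause only spells out steps the paper leaves brief.
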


\begin{proof}
Offset dependence for all $(n,m)$ is equivalent to
$R_{-n} G R_m = R_{-n'} G R_{m'}$ whenever $m-n = m'-n'$; taking $n'=0$,
$m'=m-n$ gives $R_{-n} G R_n = G$ for all $n$, i.e.\ $G$ commutes with the
cyclic group generated by $R_1$, which holds iff $G$ commutes with the
single matrix $R_1$ (every element is a power of $R_1$). $R_1$ is real
semisimple; after complexification it is diagonal with eigenvalues
$e^{\pm i\theta_a}$, which under the nonresonance hypothesis are $\dk$
\emph{distinct} numbers. The commutant of a diagonalizable matrix with
distinct eigenvalues is the algebra of matrices diagonal in the same
eigenbasis; regrouping the conjugate eigenpairs into their real
$2\times2$ planes, the real matrices in this commutant are exactly the
block-diagonal matrices whose $j$-th block commutes with
$\operatorname{Rot}(\theta_j)$, i.e.\ (for $\theta_j \notin \pi\Z$) lies
in $\{cI_2 + sJ_2\}$, the algebra generated by the rotation itself
(isomorphic to $\mathbb{C}$). Since every $R_t$ is block-diagonal with
blocks in these algebras, any such $G$ commutes with every $R_t$, giving
offset dependence; density of the orbit in the torus is not needed.
\end{proof}

\begin{remark}
\label{rem:rope-twice}
The mathematical content of Proposition~\ref{prop:rope-commutant}
(the commutant of a semisimple matrix with distinct eigenvalues) is
classical, and the commutant's organizing role for rotary attention
is established in the RoPE literature: commuting generators are the
design principle of the generalized-rotation line
\cite{su2021,ostmeier2024,yu2025}; the gauge-symmetry
characterization of \cite{wangwang2025} identifies the invertible
query/key reparameterizations preserving rotary attention with the
rotation commutant; and the functional-equivalence analysis of
\cite{tran2026} ties the symmetries of rotary attention to the same
commutant. What the proposition contributes is the PLGA-specific
formulation: the object constrained here is an \emph{inserted
bilinear operator} $G$ between rotated queries and keys (not a
reparameterization of the projections, and not required to be
invertible), with the
absolutely-position-aware complement quantified per head
(Corollary~\ref{cor:posgap}). It says the
learned operator interpolates between purely relative (commutant-valued $G$) and
absolutely position-aware attention geometries, and quantifies exactly which
degrees of freedom of $G$ break relative invariance. It also suggests a
diagnostic: projecting a trained $\Glm$ onto the commutant measures how much
absolute positional structure the dataset demanded. A second, verified,
implementation fact makes RoPE doubly relevant to PLGA: rotation is applied
\emph{before the density operator is formed}, so RoPE enters the
architecture twice: through the bilinear scores (this subsection) and
through the input of the metric learner, where the position-dependent
conjugation acts as an ergodic ``twirl'' analyzed in
Lemma~\ref{lem:twirl}.
\end{remark}

\subsection{The full decoder map}

\begin{definition}[PLDR-LLM \cite{gokden2024,gokden2025}]
\label{def:pldr}
Fix depth $L$, heads $h$, width $\dmodel = h\dk$, feed-forward width
$d_{f\!f}$, metric-learner width $\Adff$, and vocabulary $\mathcal{V}$. The
PLDR-LLM is the length-indexed family of maps
$F_\theta : \bigcup_{S \le S_{\max}} \mathcal{V}^{S} \to
\bigcup_{S \le S_{\max}} \Delta(\mathcal{V})^{S}$
(row-wise probability simplices), with one output row per input row:
an input of length $S$ is mapped into $\Delta(\mathcal{V})^{S}$, so
$\abs{F_\theta(x)} = \abs{x}$ always (this length-preservation
statement is carried as an explicit predicate in the Lean
development, Appendix~\ref{app:lean}). The computation is as follows. With
$X^{(0)} = \LN\bigl(\sqrt{\dmodel}\,\cdot\,\iota(x)\bigr)$,
for $\ell = 1,\dots,L$ and heads $i = 1,\dots,h$:
\begin{align}
Q^{(\ell,i)} &= X^{(\ell-1)} W_Q^{(\ell,i)} + \mathbf{1}\,b_Q^{(\ell,i)\top},
\qquad
K^{(\ell,i)} = X^{(\ell-1)} W_K^{(\ell,i)} + \mathbf{1}\,b_K^{(\ell,i)\top},
\label{eq:qkv}\\
V^{(\ell,i)} &= X^{(\ell-1)} W_V^{(\ell,i)} + \mathbf{1}\,b_V^{(\ell,i)\top},
\qquad
\widetilde Q^{(\ell,i)} = \RoPE(Q^{(\ell,i)}), \qquad
\widetilde K^{(\ell,i)} = \RoPE(K^{(\ell,i)}),\\
\Vlm^{(\ell,i)} &= \operatorname{PLGA}^{(\ell,i)}
\bigl(\widetilde Q^{(\ell,i)}, \widetilde K^{(\ell,i)}, V^{(\ell,i)}\bigr),
\label{eq:plga-call}\\
U^{(\ell)} &= \LN\Bigl( X^{(\ell-1)} +
\bigl[\Vlm^{(\ell,1)} \Vert \cdots \Vert \Vlm^{(\ell,h)}\bigr] W_O^{(\ell)}
+ \mathbf{1}\,b_O^{(\ell)\top}\Bigr),\\
X^{(\ell)} &= \LN\Bigl( U^{(\ell)} +
\operatorname{SwiGLU\text{-}FFN}^{(\ell)}\bigl(U^{(\ell)}\bigr)\Bigr),
\end{align}
and finally $F_\theta(x) = \softmax\bigl(X^{(L)} W_{\mathrm{vocab}} +
\mathbf{1}\,b_{\mathrm{vocab}}^\top\bigr)$, with $\mathbf{1} \in \R^S$
the all-ones column, so each bias adds its row to every position.
In \eqref{eq:plga-call} each head applies the PLGA operator per
Definition~\ref{def:plga} with
$\widetilde D^{(\ell,i)} = \widetilde Q^{(\ell,i)\top} \widetilde
Q^{(\ell,i)}$ in place of $D_Q$: the density operator is built from the
\emph{rotated} query $\widetilde Q$, so positional phases enter the metric
learner (Remark~\ref{rem:rope-twice}). The displayed affine terms are
those of the released configuration (biases enabled on all attention
projections and on the unembedding); the unembedding is not tied to
the embedding $\iota$, and the gated feed-forward and metric-learner
blocks carry their own internal biases. The deployed generative interface is \emph{final-row online
generation}: at step $t$ the model is invoked on exactly the known
prefix $x_{1:t}$, so the call has $S = t$ rows (one-based; in
zero-based code indexing the rows are $0, \dots, S{-}1$ and the
selected row is $S{-}1$), and only the final output row is consumed,
\begin{equation}
\label{eq:onlinecontract}
p_\theta(\,\cdot \mid x_{1:t}) \;=\; F_\theta(x_{1:t})_{t},
\end{equation}
the final row of the call on the prefix itself.
Every tensor of the call, including the density operator
$\widetilde D = \widetilde Q^\top \widetilde Q$ and the deductive
family derived from it, is a function of $x_{1:t}$ alone, so the
generator defined by \eqref{eq:onlinecontract} is causal: no token
beyond position $t$ enters the conditional that predicts $x_{t+1}$.
The causal mask in \eqref{eq:attn} constrains the \emph{score support}
of every row; it does not make internal rows $r < S$ of a longer call
functions of their own prefixes, and they are not so interpreted
(Section~\ref{subsec:online}).\footnote{One initialization detail
distinguishes the released model generations: in the v510 code accompanying
\cite{gokden2025} the value projection $W_V$ keeps PyTorch's default uniform
initialization while $W_Q, W_K$ are Glorot-uniform with zero bias (a
duplicated initializer call on $W_K$; the detail is shared by the base
v510 model and its DAG and ablation variants in that repository); the code
accompanying \cite{gokden2026} initializes $W_V$ identically to $W_Q, W_K$.
This shifts the near-critical $(\eta_{\max}, T_w)$ region slightly
\cite{gokden2026}.}
\end{definition}

The deductive output of the full model is the indexed family
\begin{equation}
\mathcal{D}(x;\theta) \;=\;
\bigl\{\, A^{(\ell,i)},\ \Alm^{(\ell,i)},\ \Ap^{(\ell,i)},\ \Glm^{(\ell,i)}
\,\bigr\}_{\ell \le L,\, i \le h},
\end{equation}
a point in $\R^{4Lh\dk^2}$, computed alongside the inductive output.

\subsection{Training objective and the DAG regularizer}
\label{subsec:dag}

Pretraining minimizes a \emph{blockwise (global-context)
cross-entropy}. For a training block $(x_0, \dots, x_{T-1})$ the
reference implementations run \emph{one} forward pass on the shifted
input $(x_0, \dots, x_{T-2})$ and apply cross-entropy at every aligned
output row:
\begin{equation}
\label{eq:blockloss}
\mathcal{L}_{\mathrm{block}}(\theta)
\;=\; -\,\E \sum_{t=0}^{T-2}
\log q_{\theta,t}\bigl(x_{t+1};\, x_{0:T-2}\bigr),
\end{equation}
where $q_{\theta,t}$ denotes the row-$t$ conditional of the single
full-block call (row $t$ of $F_\theta(x_{0:T-2})$, evaluated at
$x_{t+1}$). As implemented in both reference trainers, the objective
is the masked, normalized form
\[
\widehat{\mathcal{L}}_{\mathrm{block}}(\theta)
\;=\; -\,\frac{\displaystyle\sum_{b,t} m_{bt}\,
\log q_{\theta,t}\bigl(x_{b,t+1};\, x_{b,0:T-2}\bigr)}
{\displaystyle\sum_{b,t} m_{bt}},
\qquad m_{bt} = \mathbf{1}\{x_{b,t+1} \ne 0\},
\]
with the sums over the batch $b \le B$ and the aligned rows $t$:
padding labels (token id $0$) are excluded, and the normalizer is
the batch's nonpadding-label count. For fixed-length un-padded
blocks this equals \eqref{eq:blockloss} up to the positive constant
$1/(T{-}1)$, reading the expectation as the batch mean; in padded
batches the two differ, and the mask removes padded \emph{labels}
from the loss while padded \emph{rows} still enter the forward pass
and its query Gram (see the padding paragraph of
Section~\ref{subsec:online}).
The final summand ($t = T{-}2$) is exactly the deployed
online conditional $-\log p_\theta(x_{T-1} \mid x_{0:T-2})$ of
\eqref{eq:onlinecontract}, since the held-out target is absent from
the input; the block's Gram aggregates only tokens that precede it.
The first summand ($t = 0$) is also structurally guaranteed: under
the ideal causal mask row $0$ has a single admissible key, so every
attention layer assigns it the weight vector $(1)$ independently of
the operator $\Glm$ and of every later supplied row, and all other
sublayers act row-wise; by induction the row-$0$ output depends
only on $x_0$, and the summand equals the online conditional
$-\log p_\theta(x_1 \mid x_0)$ (exactly so in the implemented
finite-mask arithmetic whenever the masked exponentials underflow
to zero, as on the audited calls; Appendix~\ref{app:numerical}
reports the structural insensitivity of prefix position $0$ on both
released checkpoints).
For the intermediate rows $1 \le t \le T{-}3$ the summands are
auxiliary historical-row predictions
conditioned on the \emph{entire} supplied block through the deductive
operator, and the conditioning must be stated at its sharpest: for
each such $t$ the label $x_{t+1}$ is \emph{itself among the
supplied rows}: it enters the query Gram, hence the operator
$\Glm$, used to produce the row-$t$ prediction, as do all tokens
later than the target. The intermediate summands are therefore
\emph{target-exposed} auxiliary scores, generically target-dependent
absent operator collapse or another degeneracy of the score, value,
or output path (Remark~\ref{rem:cacheable-converse} exhibits such
degeneracies; Appendix~\ref{app:numerical} measures a collapsed
checkpoint on which historical rows do not move), and they are not
next-token log probabilities as a protocol; for $T \ge 3$ exactly
two of the $T{-}1$ summands
per block, the first and the final, are structurally guaranteed to
equal deployed conditionals (for $T = 2$ the single summand is
both), while the remaining $T{-}3$ need not be (though every
summand updates shared parameters), and a low value of
$\mathcal{L}_{\mathrm{block}}$ cannot be read as a low
autoregressive perplexity without a sequential measurement
(Appendix~\ref{app:numerical} reports one on held-out text).
Historical-row prefix consistency
(Definition~\ref{def:prefixconsistent}) at the visited inputs
guarantees that the summands coincide with the chain-rule factors
$-\log p_\theta(x_{t+1} \mid x_{0:t})$; the property is equivalent
to equality of the full row conditionals there (equality of every
possible next-token factor), and it is strictly stronger than
equality of the realized quantities alone, since probability mass
can move among untargeted tokens without changing a realized
factor, and summed losses can agree through cancellation across
positions. The global Gram does not supply prefix consistency in
general (Section~\ref{subsec:online}; measured on the released
checkpoints in Appendix~\ref{app:numerical}).
$\mathcal{L}_{\mathrm{block}}$ is a
coherent surrogate in the limited sense of a well-defined
differentiable criterion that trains the causal final-row generator
while letting all past tokens interact through the operator being
learned; it is not a proper scoring rule for the chain-rule language
distribution, and a sufficiently input-sensitive metric learner
could in principle exploit the target exposure during training,
precisely when the learner is meant to be input-sensitive
(Section~\ref{sec:discussion} lists the along-training measurement
this motivates). In this paper the terms \emph{autoregressive NLL}
and \emph{perplexity} are reserved for the sequential final-row
quantity
$-\sum_t \log p_\theta(x_{t+1} \mid x_{0:t})$, and every likelihood or
benchmark number is labeled with the protocol that produced it
(Appendices \ref{app:repos} and \ref{app:numerical}).

The objective is optionally augmented by the \emph{DAG regularizer} of
the deductive outputs
\cite{gokden2024}, built on the NOTEARS characterization
\cite{zheng2018}. As implemented in the reference code, the per-tensor DAG
loss averages the log of the \emph{normalized} heat trace over the batch
elements $b \le B$ ($B$ the batch size), the layers $\ell \le L$, and the
heads $i \le h$, with $M^{(b,\ell,i)}$ denoting the instance of the tensor
$M$ inferred for batch element $b$ at layer $\ell$, head $i$:
\begin{equation}
\label{eq:dagloss}
\begin{split}
D_L(M) &= \frac{1}{BLh}\sum_{b,\ell,i}
\,\Bigl\lvert\, \log\Bigl( \tfrac{1}{\dk}\,\tr\,
e^{\,M^{(b,\ell,i)} \Had M^{(b,\ell,i)}} \Bigr)
\Bigr\rvert,\\
\mathcal{L} &= \mathcal{L}_{\mathrm{block}}
+ \lambda_1 D_L(\Alm) + \lambda_2 D_L(\Ap) + \lambda_3 D_L(\Glm).
\end{split}
\end{equation}
By Theorem~\ref{thm:dag} below, $\tr e^{M\Had M} \ge \dk$ always, so the
absolute value in \eqref{eq:dagloss} is analytically redundant (it guards
the numerics) and $D_L(M) \ge 0$, with equality iff the support graph of
each summand is acyclic. For the two entrywise-positive tensors $\Alm$
and $\Ap$ that equality is unattainable
(Remark~\ref{rem:dagobstruction}); the mixed tensor $\Glm = a\Ap + b_a$
is not sign-constrained, and no positive floor is asserted for
$D_L(\Glm)$, whose value is an empirical matter (measured on a released
checkpoint in Appendix~\ref{app:numerical}).

\begin{theorem}[Walk-counting characterization of acyclicity
\cite{zheng2018}]
\label{thm:dag}
For $M \in \R^{d\times d}$, let $h(M) = \tr e^{M\Had M} - d$. Then
$h(M) \ge 0$ always, and $h(M) = 0$ if and only if the weighted directed
graph with adjacency $M$ (edge $i\to j$ iff $M_{ij} \ne 0$) has no directed
cycle.
\end{theorem}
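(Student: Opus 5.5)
The plan is the standard NOTEARS walk-counting argument. Set $B = M \Had M$, so $B_{ij} = M_{ij}^2 \ge 0$. The support of $B$ equals the support of $M$, hence $B$ and $M$ define the same directed graph. Expanding the exponential series entrywise and taking traces gives
\[
\tr e^{B} \;=\; \sum_{k\ge 0} \frac{\tr B^k}{k!} \;=\; d + \sum_{k \ge 1} \frac{\tr B^k}{k!},
\]
since $\tr B^0 = \tr I = d$. The series converges absolutely, so exchanging trace and sum is legitimate. Therefore $h(M) = \sum_{k\ge1} \tr(B^k)/k!$.

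The key identity comes next. For a nonnegative matrix,
\[
(B^k)_{ii} = \sum_{i = i_0, i_1,\dots,i_{k-1}, i_k = i} \prod_{r=1}^{k} B_{i_{r-1} i_r},
\]
a sum over closed walks of length $k$ based at $i$, with every term $\ge 0$. A single product is strictly positive if and only if every edge $i_{r-1}\to i_r$ lies in the support, that is, if and only if the walk is a closed walk in the graph of $M$. Hence $\tr B^k \ge 0$ for all $k$, which already gives $h(M) \ge 0$. It also shows that $\tr B^k > 0$ if and only if the graph contains a closed walk of length $k$.

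For the equivalence, $h(M) = 0$ holds if and only if every term $\tr B^k/k!$ vanishes, because all terms are nonnegative. By the identity above, this means there is no closed walk of any length $k \ge 1$. It remains to check that having no closed walk is the same as having no directed cycle. A directed cycle is itself a closed walk. Conversely, any closed walk contains a directed cycle: take a shortest closed sub-walk between two repeated vertices, which exists because a closed walk repeats its start vertex. Self-loops ($M_{ii}\neq 0$) count as cycles of length one, contributing to $\tr B$, and this is consistent with the convention in the statement.

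I expect no real obstacle. The only points needing care are the bookkeeping that nonnegativity of $B$ rules out cancellation in the walk sums, and the finite closed-walk-to-cycle reduction. One could restrict to $k \le d$ by nilpotency: in the acyclic case, ordering the vertices topologically makes $B$ strictly triangular, so $B^d = 0$. This is not needed, since the nonnegative series argument settles both directions at once.
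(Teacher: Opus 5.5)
Your proposal is correct and follows the paper's proof essentially step for step. Both set $N = M\Had M$, expand $\tr e^{N} = d + \sum_{k\ge1}\tr N^k/k!$, read $(N^k)_{ii}$ as a nonnegative sum over closed $k$-walks, and conclude through the equivalence between closed walks and cycles. Your additional details (the supports of $N$ and $M$ coincide, the shortest-repeat argument extracts a cycle, self-loops count as length-one cycles) fill in points the paper leaves implicit.
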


\begin{proof}
Let $N = M \Had M$, which has entries $N_{ij} = M_{ij}^2 \ge 0$. Then
$(N^k)_{ii} = \sum M_{i j_1}^2 M_{j_1 j_2}^2 \cdots M_{j_{k-1} i}^2$ sums
non-negative weights over closed walks of length $k$ through $i$. Hence
\begin{equation}
\label{eq:heattrace}
\tr e^{N} \;=\; d + \sum_{k\ge1} \frac{\tr N^k}{k!}
\;=\; d + \sum_{k \ge 1} \frac{1}{k!}
\bigl(\text{total squared-weight of closed $k$-walks}\bigr) \;\ge\; d,
\end{equation}
with equality iff every term vanishes, i.e.\ iff there is no closed walk of
any length, i.e.\ iff the graph is acyclic. (A directed cycle yields a closed
walk and conversely any closed walk contains a cycle.)
\end{proof}

\begin{remark}[Positivity obstruction: the deductive tensors are never
exactly acyclic]
\label{rem:dagobstruction}
Theorem~\ref{thm:dag} is a generic theorem; its equality case cannot be
attained by the tensors the loss is applied to. By
Proposition~\ref{prop:positivity}, $(\Alm)_{ij} \ge \epsilon = 10^{-9}$
for \emph{all} $(i,j)$, and positive numbers raised to real powers remain
positive, so $\Ap$ is also entry-wise positive: the support graphs of
$\Alm$ and $\Ap$ are complete (in particular every vertex carries a
self-loop) and are never acyclic. Quantitatively,
\[
h(\Alm) \;\ge\; \tr(\Alm \Had \Alm) \;\ge\; \dk\,\epsilon^2 \;>\; 0 ,
\]
since $\tr e^N \ge \tr(I + N) = \dk + \tr N$ for entry-wise nonnegative
$N$. Two floors must be kept apart, one per quantity:
$\dk\,\epsilon^2 \approx 6.4\times10^{-17}$ is the floor of the
NOTEARS quantity $h$, \emph{before} normalization and logarithm. The
implemented per-instance summand of \eqref{eq:dagloss} is the
normalized logarithm, whose own floor follows by monotonicity:
\[
\log\!\Bigl(\frac{\tr e^{N}}{\dk}\Bigr)
\;=\; \log\!\Bigl(1 + \frac{h(\Alm)}{\dk}\Bigr)
\;\ge\; \log\bigl(1+\epsilon^2\bigr) \;\approx\; \epsilon^2
\;=\; 10^{-18} .
\]
A DAG-loss value reported as $0$ is therefore floating-point
underflow of $\log(\tr e^{M\Had M}/\dk)$, not exact acyclicity: both
floors lie below even float64 resolution of a naive $\log(1+x)$
evaluation at an argument near $1$, and the audit of
Appendix~\ref{app:numerical} observes exactly such underflow readings
for $\Alm$. The obstruction concerns $\Alm$ and $\Ap$ only: $\Glm$ is
not entrywise positive by construction, and its measured DAG loss on the
audited checkpoint is reported in Appendix~\ref{app:numerical} without
an asserted floor. The
correct reading of \eqref{eq:dagloss} is as a \emph{cycle-content
penalty}: it applies soft pressure toward small closed-walk weight, and
its minimum over the reachable set is positive for the positive tensors
(the vocabulary is finite and $S \le S_{\max}$, so for a fixed
checkpoint the admissible input set, hence each tensor image, is
finite and the minimum is attained; for $\Alm$ the architectural
floor gives, in addition, a uniform input-independent lower bound).
We accordingly avoid
describing the regularizer as driving the tensors ``toward a causal DAG
structure'': no causal variables are defined over feature coordinates,
and exact acyclicity is unreachable. A regularizer for which
Theorem~\ref{thm:dag}'s equality case is meaningful would have to act on
an explicitly zero-diagonal, sparsifiable adjacency (e.g.\ a thresholded
or gated matrix), an architecture variant not implemented in the
reference code.
\end{remark}

\begin{remark}[Spectral--geometric duality of the DAG loss]
\label{rem:tracedual}
By Schur triangularization, for any square $X$,
$\tr e^{X} = \sum_i e^{\lambda_i(X)}$ with algebraic multiplicities. Thus
each summand of \eqref{eq:dagloss} has two expansions:
\[
\underbrace{\frac{1}{\dk}\sum_i e^{\lambda_i(M\Had M)}}_{\text{spectral
side: mean heat trace}}
\;=\;
\underbrace{\frac{1}{\dk}\sum_{k\ge0} \frac{1}{k!}\,\#\{\text{weighted closed
$k$-walks}\}}_{\text{geometric side: cycle content}} .
\]
This has the combinatorial shape of a trace formula: a spectral sum (a
heat-kernel trace) equated with a geometric sum over closed walks.
Regularizing the DAG loss suppresses the
cycle content of the deductive graphs toward its (positive) floor
(Remark~\ref{rem:dagobstruction}); empirically, regularization recovers
the DAG loss of the deductive outputs from the overflow condition
(unregularized, the DAG losses of $\Ap$ and $\Glm$ diverge beyond
floating-point range) and can improve benchmark scores, under the
published one-pass block protocol
(Appendix~\ref{app:repos}) \cite{gokden2024}.
\end{remark}

\subsection{Inference: KV-cache and G-cache}
\label{subsec:cache}

At inference, PLDR-LLM admits two nested caches, implemented as follows in
the reference code \cite{gokden2025}:
\begin{itemize}[leftmargin=2em]
\item \textbf{KV-cache} $= (K_{\mathrm{cached}}, V_{\mathrm{cached}},
A_{\mathrm{cached}})$: after the prompt pass, the rotated keys, the values,
\emph{and the metric generator $A$} are stored; for each generated token,
the new $k, v$ rows are rotated (with tracked positions) and appended, while
$A$ is \emph{not} recomputed; the prompt-inferred $A$ is reused.
\item \textbf{G-cache} $= (\Alm, \Glm)$: additionally, the outputs of
\eqref{eq:metric}--\eqref{eq:ec} are stored and the parameter maps
$A \mapsto \Alm \mapsto \Ap \mapsto \Glm$ are skipped.
\end{itemize}
New tokens are then scored by
\begin{equation}
\label{eq:gcache}
E = \frac{\widetilde q\;\Glm^{\mathrm{cached}}\;K_{\mathrm{cached}}^\top}
{\sqrt{\dk}}, \qquad
\Elm = \softmax(E), \qquad v_{\mathrm{next}} = \Elm V_{\mathrm{cached}} .
\end{equation}
Three inference semantics must be kept separate, and this paper uses
them under the following fixed names.
\begin{itemize}[leftmargin=2em]
\item \emph{Exact prefix recomputation}: the uncached online loop of
\eqref{eq:onlinecontract}; every step is a fresh $S = t$ call on
the grown prefix, and every tensor (including $A$ and $\Glm$) is
recomputed. This is the reference semantics; it defines the model's
conditionals.
\item \emph{Prompt-frozen conditional generation}: what KV-cache and
G-cache literally compute; $A$ (hence $\Alm, \Glm$) is frozen at the
prompt and reused for every generated token. This is a
\emph{definition} of the cached conditional, not an approximation
claim: given the cached $A$, the maps to $\Alm$ and $\Glm$ are
deterministic functions of $A$ and the fixed parameters, so
\emph{G-cache is exact relative to KV-cache}; it saves compute and
introduces no additional approximation beyond the freeze itself.
\item \emph{Empirical freeze after collapse}: the measured statement
that the first two semantics coincide on a given checkpoint (the
invariance property of Section~\ref{sec:invariance}). On the audited
checkpoint the recomputed $\Glm$ equals the cached value bitwise at
every compared step and the cached-versus-recomputed logit deviation
stays below the decoding margins (Appendix~\ref{app:numerical}),
which is what makes the cached conditional a faithful implementation
of the reference semantics there.
\end{itemize}
Empirically, caching yields a $\sim 3\times$ speedup
\cite{gokden2025}, with the reported aggregate deductive-output
statistics (cross-head RMSE values and maximum determinant magnitudes)
agreeing to $15$ printed decimal digits on the tested prompt and greedy
decoding run \cite{gokden2025}; agreement of these aggregates is
evidence of, but not identical to, uniform element-wise equality of all
tensors on all prompts. The published benchmark evaluations of
\cite{gokden2025} score answer candidates by \emph{one-pass block
scoring} (one full call on context plus candidate, summing candidate
log-probabilities; Appendix~\ref{app:repos} pins the evaluation
wrapper), a protocol in which no generation-time cache participates by
construction; the sequential-versus-block score comparison on the
released checkpoints is reported in Appendix~\ref{app:numerical}.

\subsection{The online generation contract and historical-row prefix
consistency}
\label{subsec:online}

The global density operator makes PLDR-LLM's row-wise dependence
structure different from SDPA's in one specific, easily misread way.
This subsection states the deployed contract and the stronger property
it deliberately does not supply, and fixes the terminology used for
training and evaluation semantics throughout the paper.

The deployed interface is \eqref{eq:onlinecontract}: supply exactly
the known prefix $x_{1:t}$ ($S = t$), read the final row. Two
properties must be distinguished.

\begin{remark}[Online causality]
\label{rem:onlinecausal}
The step-$t$ online conditional \eqref{eq:onlinecontract} is a
function of $x_{1:t}$ alone: if two token sequences agree on their
first $t$ tokens, the step-$t$ calls receive identical inputs and
produce identical conditionals, for \emph{any} decoder map. In
particular every tensor entering the call ($\widetilde D$, the
deductive family, the scores, the mask) is measurable with respect
to the information available at step $t$, so sequential generation is
causal with \emph{no} invariance, collapse, or contraction assumption.
When the emitted token is appended, the next call computes a new
conditional from $x_{1:t+1}$, lawfully using $\Glm(x_{1:t+1}) \ne
\Glm(x_{1:t})$: the new computation cannot alter the earlier
conditional that produced $x_{t+1}$. The statement is formalized at
the wrapper level in the Lean development
(Appendix~\ref{app:lean}).
\end{remark}

\begin{definition}[Historical-row prefix consistency]
\label{def:prefixconsistent}
A decoder map $F_\theta$ is \emph{historical-row prefix consistent} if
for every input $x_{1:S}$ and every row $r$ with $1 \le r \le S$
(one-based, as in \eqref{eq:onlinecontract}),
\begin{equation}
\label{eq:strongprefix}
F_\theta(x_{1:S})_r \;=\; F_\theta(x_{1:r})_r ,
\end{equation}
i.e.\ every row of one parallel call equals the final row of the call
on its own prefix.
\end{definition}

Property \eqref{eq:strongprefix} is what permits a single forward pass
to be read simultaneously as all the sequential conditionals; it is
the bridge between the blockwise objective \eqref{eq:blockloss} and
the chain-rule factorization, and it is exactly what one-pass
likelihood evaluation assumes.

\begin{remark}[Row-local maps have it; the global Gram does not]
\label{rem:prefixmech}
(i) Call a decoder \emph{row-local} if row $r$ of every sublayer
output depends only on input rows $\le r$. Causally masked SDPA is
row-local: attention row $r$ is a convex combination of value rows
$\le r$ with weights computed from query row $r$ and key rows $\le r$,
and LayerNorm, the feed-forward block, and all projections act
row-wise; by induction over sublayers, a row-local decoder satisfies
\eqref{eq:strongprefix} in exact arithmetic. (The implication
row-local $\Rightarrow$ prefix-consistent, and the coincidence of the
block reading of a historical row with the online final-row output for
such maps, are kernel-checked in the Lean development,
Appendix~\ref{app:lean}.)
(ii) PLGA is not row-local: the density operator \eqref{eq:density}
contracts the \emph{entire} supplied sequence axis, so the operator
$\Glm$ shared by all rows depends on all rows. At a single PLGA
layer, perturbing its current-layer input row at position $s > r$
while holding the other current-layer rows fixed leaves query row
$r$ and key rows $\le r$ unchanged and moves the Gram by
$\delta \widetilde D = \delta\widetilde q_s \widetilde q_s^\top +
\widetilde q_s\, \delta\widetilde q_s^\top +
O(\|\delta\widetilde q_s\|^2)$; the deductive chain $\Psi$ is
smooth on a neighborhood of the visited Grams (every LayerNorm
carries the variance floor $\varepsilon > 0$ and every elementwise
power receives a strictly positive base), so $D\Psi$ is locally
Lipschitz (this justifies the quadratic remainder below), and the
direct current-layer contribution is
\begin{equation}
\label{eq:histderiv}
\delta z_{rj} \;=\; \frac{1}{\sqrt{\dk}}\,
\widetilde q_r^\top\, D\Psi[\widetilde D](\delta \widetilde D)\,
\widetilde k_j \;+\; O(\|\delta\widetilde D\|^2),
\qquad j \le r,
\end{equation}
which, for a historical row with at least two allowed keys
($r \ge 2$ in the one-based indexing of
Definition~\ref{def:prefixconsistent}), is generically nonzero and
generically not a common shift of the allowed scores, so such
historical softmax rows and outputs move; row $1$ has the singleton
admissible key set, any perturbation of its single score is a
common shift on a one-point simplex, and its output cannot move
through this channel (measured as prefix position $0$ in
Appendix~\ref{app:numerical}). (A two-token instance violating
\eqref{eq:strongprefix} for a minimal abstract global-aggregation
decoder is kernel-checked in Lean, Appendix~\ref{app:lean}.)
In the full
multilayer decoder the same suffix perturbation also changes
earlier hidden rows through the upstream layers' global Grams, so
the end-to-end derivative additionally contains
$(\delta\widetilde q_r)^\top \Glm \widetilde k_j$ and
$\widetilde q_r^\top \Glm\, \delta\widetilde k_j$ and
induced-state terms: further channels by which historical rows
move, none needed for the conclusion. This is the
\emph{intended} interaction of all currently known tokens through the
learned operator, not an information-flow error: under the online
contract every token entering $\widetilde D$ is already in the
conditioning set. (iii) Whether \eqref{eq:strongprefix} holds
\emph{empirically} is a checkpoint property tied to the collapse
phenomenon of Section~\ref{sec:invariance}: on the audited collapsed
checkpoint, historical-row logits are bitwise invariant under suffix
changes (zero changed entries in $2{,}048{,}000$ randomized
comparisons), while on an earlier released checkpoint of the same
architecture $74.5\%$ of the compared entries move; both measurements
are reported in Appendix~\ref{app:numerical}.
\end{remark}

\paragraph{An optional cumulative construction.}
A masked running Gram restores \eqref{eq:strongprefix} by
construction, at a cost: with validity mask $m_n \in \{0,1\}$,
\begin{equation}
\label{eq:cumgram}
\widetilde D_t^{\,\mathrm{cum}} \;=\; \sum_{n \le t} m_n\,
\widetilde q_n \widetilde q_n^\top,
\qquad
\Glm^{(t)} = \Psi\bigl(\widetilde D_t^{\,\mathrm{cum}}\bigr),
\qquad
z_{tj} = \frac{\widetilde q_t^\top\, \Glm^{(t)}\, \widetilde k_j}
{\sqrt{\dk}} \quad (j \le t),
\end{equation}
computed layerwise. The rank-one updates to the Gram are cheap, but
row $t$ needs $\Psi$ evaluated at its own prefix Gram: a parallel
all-row pass evaluates the deep metric learner at up to $S$ prefixes
per layer and head (or in blocks, trading granularity for cost),
against \emph{one} $O(S\dk^2)$ contraction plus \emph{one} $\Psi$
evaluation per layer and head for the global Gram. Equation
\eqref{eq:cumgram} is an implementation option for callers who want
one-pass all-row autoregressive semantics or padded batching; it is
not required for sequential generation, and the reference
implementations and released checkpoints use the global Gram.

\paragraph{Padding at the Gram boundary.}
The reference implementations form $\widetilde D$ from \emph{every}
supplied query row: rows excluded by the attention mask still enter
the sequence contraction, and the measured final-real-row deviations
under appended attention-masked padding (including dependence on the
padding \emph{content}) are reported in
Appendix~\ref{app:numerical}. The deployed contract is therefore the
\emph{unpadded} $S = t$ interface: sequential generation presents only
genuine prompt tokens, and callers batching with right padding should
strip padding before scoring (stripping restores the unpadded call
bitwise). A Gram-side validity mask as in \eqref{eq:cumgram} is an
implementation option for future releases; the pinned released
checkpoints and their remote code are kept byte-stable instead.

\begin{figure}[t]
\centering
\begin{tikzpicture}[
  every node/.style={font=\scriptsize},
  box/.style={draw, rounded corners=2pt, inner sep=3pt, align=center},
  glob/.style={draw, rounded corners=2pt, inner sep=3pt, align=center,
               densely dashed},
  lab/.style={font=\tiny\itshape, align=center},
  arr/.style={->, semithick},
  garr/.style={->, semithick, densely dashed}
]
\node[box] (tok)  at (-0.15, 2.3) {$x_{1:S}$};
\node[box] (emb)  at (1.3, 2.3)  {$X^{(0)}$};
\node[box] (qkv)  at (3.8, 2.3)  {$\widetilde Q, \widetilde K, V$\\
                                  \tiny affine + RoPE (row-wise)};
\node[box] (sco)  at (7.6, 2.3)  {$E = \widetilde Q\, \Glm
                                  \widetilde K^\top\!/\sqrt{\dk}$};
\node[box] (sm)   at (10.8, 2.3) {mask + softmax\\ \tiny score support
                                  $j \le i$};
\node[box] (out)  at (13.0, 2.3) {$\Vlm$};
\node[glob, text width=3.1cm] (gram) at (3.8, 0.55)
  {$\widetilde D = \widetilde Q^\top \widetilde Q$:\\ \tiny sequence
   axis contracted;\\ \tiny \emph{all} supplied rows enter};
\node[glob, text width=3.3cm] (psi)  at (7.6, 0.55)
  {$\Psi$: LN, shared row map $\varphi$, iSwiGLU${}+\epsilon$,\\
   \tiny elementwise power $\Had P$, coupling $a(\cdot) + b_a$};
\node[glob, text width=1.9cm] (glm)  at (10.8, 0.55)
  {$\Glm$ ($\dk \times \dk$)\\ \tiny shared by all rows};
\draw[arr] (tok) -- (emb);
\draw[arr] (emb) -- (qkv);
\draw[arr] (qkv) -- (sco);
\draw[arr] (sco) -- (sm);
\draw[arr] (sm) -- (out);
\draw[garr] (qkv) -- (gram);
\draw[garr] (gram) -- (psi);
\draw[garr] (psi) -- (glm);
\draw[garr] (glm) -- (sco);
\node[lab] at (3.8, -0.55) {padding rows included unless stripped\\
                            (Appendix~\ref{app:numerical})};
\node[lab] at (10.8, 3.05) {causal mask acts here only};
\end{tikzpicture}
\caption{Information flow in one PLGA head. Solid path: row-local
operations (row $i$ uses rows $\le i$ through the masked softmax).
Dashed path: the global density operator; the sequence axis is
contracted \emph{before} any mask acts, so the learned operator
$\Glm$ depends on every supplied row. Under the online contract
\eqref{eq:onlinecontract} all supplied rows are known context;
historical rows of a longer call are recomputed under the enlarged
context (Remark~\ref{rem:prefixmech}).}
\label{fig:infoflow}
\end{figure}

\paragraph{Terminology for training and evaluation semantics.}
Five distinct semantics appear in this paper, under these fixed
names: \emph{blockwise (global-context) cross-entropy}
\eqref{eq:blockloss}, the implemented training objective (one
full-block pass, cross-entropy at every aligned row);
\emph{sequential autoregressive NLL} (equivalently
\emph{perplexity}), $-\sum_t \log p_\theta(x_{t+1} \mid x_{0:t})$
computed by repeated final-row prefix calls, the chain-rule quantity,
equal to the blockwise value whenever \eqref{eq:strongprefix} holds
at the visited inputs (equality of the two realized scalar totals
alone does not imply \eqref{eq:strongprefix});
\emph{one-pass (whole-candidate) block scoring}, the
published evaluation protocol (one call on context plus candidate,
candidate log-probabilities summed; Appendix~\ref{app:repos});
\emph{sequential generation}, the deployed online loop of
\eqref{eq:onlinecontract}; and \emph{prompt-frozen cache inference},
the KV/G-cache semantics of Section~\ref{subsec:cache}. The measured
gaps between one-pass and sequential scoring on both released
checkpoints are reported in Appendix~\ref{app:numerical}.

\section{Deductive Outputs as Invariant Operators}
\label{sec:invariance}

\subsection{The steady state and the order parameter}

\begin{definition}[$\varepsilon$-invariance and the order parameter]
\label{def:orderparam}
Let $\mu$ be a distribution over admissible inputs (prompt plus stochastic
continuation). A deductive output $T(x) \in \R^{\nu}$ ($\nu$ the number of
entries of $T$) is
\emph{$\varepsilon$-invariant} if there exists a constant tensor $T^\ast$
with $\norm{T(x)-T^\ast}_{\mathrm{rms}} \le \varepsilon$ for $\mu$-almost
all $x$, where $\norm{\cdot}_{\mathrm{rms}} = \norm{\cdot}_F/\sqrt{\nu}$.
Following \cite{gokden2026} in substance, define the \emph{order
parameter} of a trained model $\theta$ from two independent generation
runs $x^{(1)}, x^{(2)}$ (or a run against the cached pass):
\begin{equation}
\label{eq:orderparam}
m(\theta) \;=\;
\frac{\norm{\,\mathcal{D}(x^{(1)};\theta) - \mathcal{D}(x^{(2)};\theta)\,}
_{\mathrm{rms}}}{\operatorname{rms}(\mathcal{D})},
\qquad
\operatorname{rms}(\mathcal{D}) = \tfrac12\bigl(
\norm{\mathcal{D}(x^{(1)};\theta)}_{\mathrm{rms}} +
\norm{\mathcal{D}(x^{(2)};\theta)}_{\mathrm{rms}}\bigr),
\end{equation}
the RMSE between deductive outputs across runs normalized by the RMS
magnitude of their entries; it is computed and reported \emph{per tensor
type}, with $\Glm$ (or $A$, the most sensitive) used as representative.
The statistic is defined \emph{piecewise}: $m = 0$ whenever the
numerator vanishes (equal tensors; this covers the all-zero pair, for
which the denominator also vanishes and the plain quotient would be
$0/0$), and by the displayed quotient otherwise, in which case the RMS
denominator is strictly positive, since it can vanish only if both
tensors, hence the numerator, vanish.\footnote{The source paper
\cite{gokden2026} normalizes by
the absolute value of the signed global mean, $\abs{\mu_{\mathcal D}}$,
which can be arbitrarily small by cancellation for tensors centered near
zero; for that normalization the case ``denominator zero, numerator
positive'' does occur (means cancel) and the statistic is $+\infty$
there. The RMS denominator used here is stable; both normalizations are
computed side by side in Appendix~\ref{app:numerical}. A reported value
$m = 0$ means agreement at floating-point resolution, not exact
invariance.}
\end{definition}

\begin{proposition}[Invariance suffices for exact cacheability]
\label{prop:cacheable}
\leavevmode
\begin{enumerate}[label=(\roman*),leftmargin=2.2em]
\item (Sufficiency.) If, for every layer and head,
$\Glm^{(\ell,i)}(x) = G^{\ast(\ell,i)}$ for all admissible inputs $x$,
then the G-cache \eqref{eq:gcache} computes the identical function to the
full forward pass for every prompt and continuation. Invariance of the
upstream tensors $A, \Alm, \Ap$ is \emph{not} necessary for this
conclusion.
\item (Definitional equivalence.) For a fixed tensor type $T$,
$m(\theta) = 0$ in exact arithmetic over all input pairs if and only if
$T$ is exactly input-invariant (with the piecewise convention of
Definition~\ref{def:orderparam}, which makes this hold without
exception, including for identically zero tensors).
\item (Two-level characterization.) \emph{Row level (iff):} two finite
allowed score rows induce the same softmax distribution if and only if
they differ by a common additive scalar. \emph{Model level (if):} if on
every reachable decoding state every recomputed allowed-score row
differs from its cached counterpart by a row-wise additive constant,
then cached and full attention rows, head outputs, and decoding
distributions agree for all admissible continuations. Constancy of
$\Glm$ is sufficient but not necessary for the row condition, and
\emph{no converse from full-model output equality to score-row equality
is asserted}: output equality does not imply row equality
(Remark~\ref{rem:cacheable-converse}).
\end{enumerate}
\end{proposition}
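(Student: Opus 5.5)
The proof goes part by part, in the order (ii), (iii), (i), since the row-level fact in (iii) is also the cleanest route to (i).

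For (ii) I unpack Definition~\ref{def:orderparam} directly. If $T$ is exactly input-invariant, then $T(x^{(1)})=T(x^{(2)})$ for every pair. The numerator vanishes, and the piecewise convention gives $m=0$, including the all-zero case. Conversely, suppose $m=0$ on every pair and the quotient branch is in force. Then its denominator is strictly positive, so the numerator $\norm{T(x^{(1)})-T(x^{(2)})}_{\mathrm{rms}}$ must vanish. In the other branch the numerator vanishes by definition. Either way $T(x^{(1)})=T(x^{(2)})$ for all pairs, and fixing one reference input gives the constant $T^\ast$.

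For the row-level claim in (iii), the ``if'' direction is the shift invariance $e^{z_i+c}/\sum_{j\in\mathcal J}e^{z_j+c}=e^{z_i}/\sum_{j\in\mathcal J} e^{z_j}$ of $\softmax_{\mathcal J}$. For ``only if'', equal distributions give, for all $i,j\in\mathcal J$,
\[
z_i-z_j=\log\frac{p_i}{p_j}=z'_i-z'_j .
\]
Since every allowed entry is strictly positive, the difference $z_i-z'_i$ is the same for all $i\in\mathcal J$, which is the common additive scalar. For the model-level claim I induct over layers and decoding steps. On each reachable state, the row hypothesis together with the row-level iff gives equal attention rows. Head outputs $\Elm V$ then agree, provided the cached $V$ rows agree with the recomputed ones; I would argue this inductively from equality of all earlier hidden rows, with the new rows computed identically on both paths. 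From there the row-wise sublayers (projection, LayerNorm, FFN, unembedding) propagate equality to the decoding distribution. Because the decoded token is then identical, the reachable state at the next step is identical as well, so the induction closes over all admissible continuations. Sufficiency of constant $\Glm$ for the row condition is immediate, since then the rows are equal and differ by the constant $0$. Non-necessity needs an example. I would take $\Glm$ varying by an additive term $u\,w^\top$ with $w^\top\widetilde k_j$ independent of $j$; this shifts every score in the row by $\widetilde q^\top u\,(w^\top \widetilde k)/\sqrt{\dk}$, so the scores move while the softmax row does not. The lack of a converse is deferred to Remark~\ref{rem:cacheable-converse}.

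For (i), under invariance $\Glm^{(\ell,i)}(x)=G^{\ast(\ell,i)}$, the cached operator stored at the prompt is $G^\ast$. The operator the full pass would recompute at any later step is also $G^\ast$. Hence the cached scores \eqref{eq:gcache} coincide exactly with the recomputed scores \eqref{eq:scores}, the differences being the zero constant, and the model-level statement of (iii) yields identical outputs. Upstream tensors $A,\Alm,\Ap$ enter the scores only through $\Glm$, so their invariance is not needed.

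The main obstacle is the bookkeeping that the cached $K,V$ rows equal the recomputed ones in a multilayer model. Those rows are produced by upstream layers whose own $\Glm$ is global, so the induction must be organized jointly over layers and over historical rows, using the invariance (or the row-shift hypothesis) at every layer simultaneously. This is where I would take the most care.
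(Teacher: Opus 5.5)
Your proposal is correct and follows the paper's proof essentially step for step: (ii) by unpacking the piecewise definition of $m$, the row-level iff by shift invariance and logarithms of probability ratios, and both the model-level claim and (i) by induction over decoding steps, with the cached operator equal to the recomputed one. Your explicit joint induction over layers and historical rows makes precise a step that the paper's one-line induction leaves implicit: it shows the cached $K,V$ rows coincide with the recomputed ones, since with a constant operator the decoder is row-local.
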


\begin{proof}
(i) At every decoding step the cached operator equals the operator the
full network would recompute, so scores, attention rows, and outputs
agree token-by-token. (ii) is immediate from \eqref{eq:orderparam} with an
exact-arithmetic reading of $0$ and the piecewise convention. (iii) Row
level: sufficiency of a common shift is invariance of the softmax under
adding a constant to an allowed row; necessity follows by taking
logarithms of the ratio of the two distributions, which shows the score
difference $z'_i - z_i = \log\sum_{j}e^{z'_j} - \log\sum_{j}e^{z_j}$ is
the same for every allowed $i$ (both directions are machine-checked;
Appendix~\ref{app:lean}). Model level: equal attention rows on every
reachable state give equal head outputs, hence equal states and logits,
by induction over decoding steps; this direction only.
\end{proof}

\begin{remark}[Why the converse directions fail]
\label{rem:cacheable-converse}
A natural strengthening would be the three-way equivalence ``$m = 0$
iff every deductive output invariant iff G-cache exact.'' Only the
directions stated in Proposition~\ref{prop:cacheable} hold; simple
counterexamples break necessity at every level. With coupling
$a = 0$ and $b_a = G^\ast$, the tensors $A, \Alm, \Ap$ vary arbitrarily
with the input while $\Glm \equiv G^\ast$ and G-cached inference is
exact. With $W_K = 0$ and $b_K = 0$, every key vanishes, so every
allowed score is identically zero and each attention row is the
uniform distribution over its allowed positions, independently of
$\Glm$; the query Gram, and hence $\Glm$, still varies with the
input, so cached and recomputed inference coincide while
$m(\Glm) \ne 0$: exact cache equivalence does not force operator
invariance. (The key map must be the one silenced here: zeroing the
\emph{query} map instead would freeze the query Gram and with it
$\Glm$ itself, leaving nothing that varies.) More generally $\Delta\Glm$ may lie in directions annihilated by
all reachable query--key pairs, or may shift each allowed score row by a
scalar, which the softmax removes exactly. Nor does equality of
full-model \emph{outputs} imply the row condition of
Proposition~\ref{prop:cacheable}(iii): take cached and recomputed score
rows $z = (0,0)$ and $z' = (0,1)$, which are not related by a common
shift and have different softmax distributions, and let $V = 0$; both
attention outputs vanish and all downstream states and tokens agree.
The same failure arises from a zero output projection, from attention
differences lying in a value/output nullspace, or from downstream blocks
identifying the differing head outputs. A full-model ``iff'' would
require injectivity assumptions along the value/output path and the
downstream network that no result of this paper supplies. The order
parameter is
therefore a \emph{diagnostic} whose vanishing (per tensor type, in exact
arithmetic) certifies invariance of that tensor and, for $\Glm$,
cacheability; its non-vanishing does not by itself preclude exact
functional cache equivalence. A separate scope note, distinguishing
three exactness senses. \emph{(a)} G-cache is
\emph{unconditionally} exact relative to the already-frozen KV-cache
semantics: given the cached $A$, the maps to $\Alm$ and $\Glm$ are
deterministic functions of $A$ and the parameters, so storing their
outputs adds no approximation beyond the freeze itself
(Section~\ref{subsec:cache}). \emph{(b)} Exactness relative to
\emph{full prefix recomputation} is a different, conditional
statement: it is what Proposition~\ref{prop:cacheable}(i) delivers
\emph{under operator invariance}, and an empirical matter otherwise.
\emph{(c)} Nothing in
Proposition~\ref{prop:cacheable} asserts historical-row prefix
consistency (Definition~\ref{def:prefixconsistent}) of the uncached
map, which the global Gram does not supply in general
(Remark~\ref{rem:prefixmech}).
\end{remark}

Empirically \cite{gokden2026}: models pretrained at near-criticality have
$m \sim 10^{-6}$ to $10^{-11}$ (and $0$ at float resolution for
$\Ap, \Glm$ of the model trained on $41$B tokens), while sub-critical
models have $m \sim 1$--$50$. The order parameter thus separates the two
phases sharply in the published sample and agrees with benchmark rankings
at the phase level; within the near-critical group the relationship is
not strictly monotone (small reversals occur between models whose
benchmark averages differ by fractions of a point), and no uncertainty
estimates are available, so we do not claim that $m(\theta)$ precisely
ranks reasoning ability. Establishing (or refuting) a finer-grained
relationship requires the multi-seed protocol of
Section~\ref{sec:discussion}.

\subsection{The inference-collapse theorem}

\begin{theorem}[SDPA as a special case; inference collapse;
training--inference asymmetry]
\label{thm:collapse}
Consider a PLDR-LLM $F_\theta$ per Definition~\ref{def:pldr}.
\begin{enumerate}[label=(\roman*),leftmargin=2.2em]
\item If $\Glm^{(\ell,i)} \equiv I$ for all layers and heads, then
$F_\theta$ is exactly a decoder-only transformer with scaled dot-product
attention (with RoPE, SwiGLU FFN, and the stated normalizations): SDPA-LLM is
the point of PLDR-LLM model space with identity energy--curvature tensor.
\item If every deductive output is exactly input-invariant with values
$G^{\ast(\ell,i)}$, then the inference map of $F_\theta$ coincides with the
map of the architecture in which the subnetwork
\eqref{eq:metricgen}--\eqref{eq:ec} is deleted and replaced by the constants
$G^{\ast(\ell,i)}$, a \emph{generalized SDPA} with learned bilinear forms.
If additionally RoPE is absent, this is exactly an SDPA-LLM with query
projections $W_Q^{(\ell,i)}\, G^{\ast(\ell,i)}$. With RoPE:
\emph{(sufficiency, for the model at hand)} if each $G^{\ast(\ell,i)}$
lies in the ambient torus commutant of
Proposition~\ref{prop:rope-commutant},
the constant operator is absorbable into pre-rotation projections and
the model is exactly an SDPA-LLM; \emph{(necessity, at the unrestricted
operator level)} if the head's bilinear score family
$(n,m,q,k) \mapsto q^\top R_{-n} G^\ast R_m k$ is required to be
SDPA-realizable for \emph{all} positions and all
$q, k \in \R^{\dk}$, then $G^\ast$ must lie in the commutant; this is
Corollary~\ref{cor:posgap}, which carries the quantifiers. For a fixed
trained model, whose reachable queries and keys span restricted
subspaces, commutant membership is sufficient but \emph{not} necessary
for SDPA-realizability of the inference map ($W_Q = 0$ \emph{and}
$b_Q = 0$, so that the affine query map, and with it every score,
vanishes identically regardless of $G^\ast$, is the degenerate
witness); no model-level necessity is claimed.
\item (Asymmetry, structural.) The gradient of the loss in the full
parameterization decomposes as the collapsed-parameterization terms plus
the chain-rule term
$\sum_{\ell,i}\bigl\langle \partial\mathcal{L}/\partial \Glm^{(\ell,i)},\;
\partial \Glm^{(\ell,i)}/\partial\theta \bigr\rangle$ flowing through
$\partial \Glm / \partial(\Phi_{\mathrm{res}}, W, P, a, b_a, W_Q)$
(the products denote adjoint-Jacobian, i.e.\ vector--Jacobian,
actions, not scalar multiplication); this
term is absent by construction from the collapsed model, where $\Glm$ is
a constant. \emph{Under the explicit hypothesis that this term is
nonzero at the evaluation point}, the instantaneous gradients of the two
parameterizations differ. (Comparing \emph{training dynamics} further
requires fixing an optimizer and a correspondence between the parameter
spaces; the empirical separation of loss curves and benchmark scores
between learned, transferred, random, and identity $\Glm$
\cite{gokden2025} is cited as evidence that the difference is realized
in practice, not as part of the statement.)
\end{enumerate}
\end{theorem}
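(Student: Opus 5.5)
The plan is to prove the three parts separately, each by direct substitution into Definition~\ref{def:pldr} and Definition~\ref{def:plga}. All analytic content reduces to two facts already available: Proposition~\ref{prop:rope-commutant} for rotary absorption, and the chain rule for (iii).

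\textbf{Part (i).} Set $\Glm^{(\ell,i)} = I$ in \eqref{eq:scores}. Every head then computes $\softmax[\operatorname{mask}(\widetilde Q\widetilde K^\top/\sqrt{\dk})]V$, which is rotary SDPA. The deductive chain \eqref{eq:density}--\eqref{eq:ec} still runs, but its only downstream consumer is $\Glm$. Since $\Glm$ is now fixed, the chain is dead code and can be deleted without changing $F_\theta$. The remaining sublayers (affine maps, RoPE, LayerNorm, SwiGLU-FFN, unembedding) are copied verbatim. Hence $F_\theta$ is literally the stated SDPA decoder. A formal proof is a pointwise equality of functions, by induction over the layers $\ell$.

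\textbf{Part (ii).} First I will use Proposition~\ref{prop:cacheable}(i), which says the Gram-to-$\Glm$ path influences outputs only through $\Glm$. So replacing that subnetwork by the constants $G^{\ast(\ell,i)}$ leaves every layer's output unchanged, again by induction over layers.

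Without RoPE, I will write
\[
Q G^\ast K^\top = (X W_Q G^\ast + \mathbf{1} b_Q^\top G^\ast) K^\top,
\]
so the head is SDPA with query projection $W_Q G^\ast$ and bias $G^{\ast\top} b_Q$.

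With RoPE and $G^\ast$ in the commutant, I will use the block structure from Proposition~\ref{prop:rope-commutant}: $G^\ast$ commutes with every $R_n$. This gives
\[
(R_n q_n)^\top G^\ast R_m k_m = (R_n G^{\ast\top} q_n)^\top R_m k_m,
\]
since $G^{\ast\top}$ is also commutant-valued (its blocks are $c_jI_2 - s_jJ_2$). So the operator is absorbed into the pre-rotation query projection $W_Q G^\ast$, with bias transformed likewise. For the unrestricted-necessity clause, I will defer to Corollary~\ref{cor:posgap}, as the statement does. The argument is: SDPA-realizability for all $q,k$ and positions forces $R_{-n}G^\ast R_m$ to be offset-only, and the commutant criterion of Proposition~\ref{prop:rope-commutant} then applies. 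Finally, I will verify the degenerate witness: $W_Q=0$ and $b_Q=0$ give $\widetilde Q = 0$, so all scores vanish for any $G^\ast$.

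\textbf{Part (iii).} I will write $\mathcal{L}(\theta) = \mathcal{L}_c(\theta_c, \Glm(\theta))$. Here $\mathcal{L}_c$ is the loss with $\Glm$ treated as an independent argument. The multivariable chain rule then gives
\[
\nabla_\theta\mathcal{L} = \nabla_\theta\mathcal{L}_c + \sum_{\ell,i}\bigl(\partial\Glm^{(\ell,i)}/\partial\theta\bigr)^{*}\,\partial\mathcal{L}/\partial\Glm^{(\ell,i)}.
\]
The collapsed model's gradient is exactly the first term. Under the stated nonvanishing hypothesis on the second term, the two gradients differ.

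The one subtle point is $W_Q$, which enters both directly through the scores and through $\widetilde D$. The chain rule must split these two contributions, and only the Gram contribution belongs to the extra term. This bookkeeping is where I expect the main care is needed; everything else is substitution.
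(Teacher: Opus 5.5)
Your proposal is correct and follows the paper's proof essentially step for step. For (i) you substitute $\Glm = I$ directly. For (ii) you use exact invariance, the affine absorption $W_Q' = W_Q G^\ast$, $b_Q' = G^{\ast\top} b_Q$, and the commutation of $G^\ast$ with every $R_{-t}$; your transposed form $(R_n q_n)^\top G^\ast = (R_n G^{\ast\top} q_n)^\top$ is the paper's $\RoPE(QG^\ast) = \RoPE(Q)\,G^\ast$ written columnwise, necessity is deferred to Corollary~\ref{cor:posgap}, and you use the same $W_Q = 0$, $b_Q = 0$ witness. For (iii) you use the chain rule. One small point: Proposition~\ref{prop:cacheable}(i) is about the G-cache pipeline, not the deleted-subnetwork architecture. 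For the first step of (ii) it is cleaner to argue directly, as the paper does, that the deductive chain reaches the scores only through $\Glm$, which equals $G^\ast$ at every step.
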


\begin{proof}
(i) Substituting $\Glm = I$ into \eqref{eq:scores}--\eqref{eq:out} gives
$E = \widetilde Q \widetilde K^\top/\sqrt{\dk}$,
$\Elm = \softmax(\operatorname{mask}(E))$, $\Vlm = \Elm V$: the SDPA
equations \cite{vaswani2017}; all remaining blocks in
Definition~\ref{def:pldr} are shared.
(ii) Under exact invariance, at every decoding step the recomputed
$\Glm(x)$ equals $G^\ast$; replacing the computation by the constant yields
the same scores, hence the same distribution over continuations. Without
RoPE, $Q G^\ast K^\top = (Q G^\ast) K^\top = Q' K^\top$ with
\[
Q' \;=\; Q\,G^\ast \;=\;
X\,\bigl(W_Q G^\ast\bigr) + \mathbf{1}\,\bigl(b_Q^\top G^\ast\bigr),
\]
an SDPA parameterization in which the affine query map transforms as a
whole: $W_Q' = W_Q G^\ast$ \emph{and} $b_Q'^{\,\top} = b_Q^\top
G^\ast$ (the projection of Definition~\ref{def:pldr} is bias-bearing,
so absorbing $G^\ast$ into the weight alone would change the model;
the affine identity is machine-checked, Appendix~\ref{app:lean}).
With RoPE, sufficiency:
if $G^\ast$ commutes with every $R_t$, then
$R_{-n} G^\ast R_m = G^\ast R_{m-n}$ and the score
\eqref{eq:rope-score} is the pure offset form realized by SDPA with the
same absorbed affine projection applied before rotation: a commuting
$G^\ast$ also commutes with each $R_t^{\top} = R_{-t}$, so, position
by position,
\[
\RoPE(Q\, G^\ast) \;=\; \RoPE(Q)\, G^\ast .
\]
Necessity at the unrestricted operator level is
Corollary~\ref{cor:posgap}, whose proof is self-contained (equality of
the score \emph{functions} for all positions and all $q, k$ forces
commutant membership); for a fixed model only the restrictions of
$R_{-n} G^\ast R_m$ to the reachable query/key subspaces are observable,
so no necessity is claimed there.
(iii) Write the loss as
$\mathcal{L}(\theta) = \mathcal{L}\bigl(\text{blocks};\,
\Glm(\cdot;\theta)\bigr)$; the chain rule gives the stated decomposition,
and the extra term is identically absent from the collapsed
parameterization. If the term is nonzero at $\theta$, the two gradients
differ at $\theta$ by exactly that term.
\end{proof}

\begin{remark}[Scope of the asymmetry]
\label{rem:asymmetry-scope}
Theorem~\ref{thm:collapse}(iii) is a statement about \emph{parameterized
training dynamics}: the two parameterizations induce different gradient
flows (a mechanism with contemporaneous precedent in standard attention
itself, where factorization alone implicitly rescales the query--key
and output--value circuits' relative learning rates under gradient
flow \cite{vashisht2026}). It does not assert that no SDPA-LLM of any size realizes the same
input--output function as a trained PLDR-LLM; strict function-class
containment at matched depth, width, and positional scheme would require a
nonrepresentability theorem that we do not have (see
\S\ref{subsec:adv-family}). Similarly, the claim in (ii) that a constant
operator ``loses no inference-time expressivity'' is a statement about a
\emph{particular} trained model already observed to have (approximately)
constant $\Glm$; as a family-wide statement it would be false, since an
input-dependent $\Glm$ can realize score maps unavailable to any
constant-$G$ head.
\end{remark}

\subsection{Quantitative stability of the collapse}

In practice invariance is $\varepsilon$-exact rather than exact. The
following bounds show that the observed $\varepsilon$ (down to $10^{-11}$
relative) propagates to a perturbation of the output distribution that is
bounded by explicit constants. Whether the resulting bound is small
enough to force bit-identical decoding is a separate, quantitative
question: it requires comparing the evaluated end-to-end constant with
the realized logit margins, which we do on a released checkpoint in
Appendix~\ref{app:numerical}. The bit-identical cached-versus-uncached
benchmark scores reported in \cite{gokden2025} were obtained under
one-pass block scoring, whose scoring path performs a single uncached
pass by construction (Appendix~\ref{app:repos}); the informative
empirical facts here are the measured cached-versus-recomputed
operator and logit deviations of Appendix~\ref{app:numerical}, which
are consistent with these bounds but are an empirical observation, not
a corollary of them.

\begin{lemma}[Softmax is $1$-Lipschitz \cite{gao2017}]
\label{lem:softmax}
For $z, z' \in \R^S$,
$\norm{\softmax(z) - \softmax(z')}_2 \le \norm{z - z'}_2$.
\end{lemma}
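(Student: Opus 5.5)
The plan is to use the mean value theorem along the segment from $z'$ to $z$, which reduces the claim to a uniform bound on the Jacobian of $\softmax$. Write $p = \softmax(z)$. Differentiating $p_i = e^{z_i}/\sum_j e^{z_j}$ gives the Jacobian
\[
J(z) \;=\; \operatorname{diag}(p) - p\,p^\top ,
\]
which is symmetric. For any path $\gamma(t) = z' + t(z - z')$ we have $\softmax(z) - \softmax(z') = \int_0^1 J(\gamma(t))\,(z - z')\,dt$. Hence it suffices to show $\norm{J(w)}_2 \le 1$ for every $w \in \R^S$, since then the integral is bounded by $\sup_w \norm{J(w)}_2\,\norm{z-z'}_2$.

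To bound the spectral norm, I would use that $J$ is symmetric, so $\norm{J}_2$ is its largest absolute eigenvalue and we need only control the quadratic form $v^\top J v$ on unit vectors. First, $J$ is positive semidefinite:
\[
v^\top J v \;=\; \sum_i p_i v_i^2 - \Bigl(\sum_i p_i v_i\Bigr)^2
\]
is the variance of $v$ under the distribution $p$, hence nonnegative. Second, the same expression is at most $\sum_i p_i v_i^2 \le \max_i v_i^2 \le \norm{v}_2^2$. Together these give every eigenvalue in $[0,1]$, so $\norm{J}_2 \le 1$. An alternative route is Gershgorin: row $i$ has diagonal $p_i(1-p_i)$ and off-diagonal absolute sum $p_i(1-p_i)$, so every eigenvalue lies in $[0,\,2p_i(1-p_i)] \subseteq [0,\tfrac12]$. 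This even yields the constant $\tfrac12$, but the stated constant $1$ is all that is needed.

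There is no real obstacle here. The only points requiring care are, first, that the mean value theorem is applied in integral form, since the vector-valued map has no pointwise mean-value equality; and second, that the statement concerns the unmasked softmax on $\R^S$. If the ideal masked version is wanted later (as in Proposition~\ref{prop:perturb}), the same argument applies on the allowed coordinates $\mathcal{J}$, with zero rows and columns elsewhere, so the bound transfers verbatim. I would also note that the result is the one cited from \cite{gao2017}, so the argument above simply reproduces it for self-containedness.
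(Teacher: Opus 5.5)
Your proof is correct. The paper gives no argument of its own for this lemma: it imports it by citation from \cite{gao2017}, and the Lean appendix lists it as unformalized. What you have written therefore supplies the missing self-contained proof rather than competing with one. It is also the natural proof in the framing of the cited work, which treats $\softmax$ as the gradient map of the convex log-sum-exp function. In that framing your Jacobian $\operatorname{diag}(p)-pp^\top$ is the Hessian of log-sum-exp, your variance identity is its convexity, and the bound $\norm{J}_2\le 1$ is its $1$-smoothness; the Lipschitz estimate then follows by the integral mean value theorem, exactly as you do.

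Two small remarks. First, your Gershgorin sentence should say that each eigenvalue lies in $[0,2p_i(1-p_i)]$ for \emph{some} $i$ (the union of the discs), not for every $i$. The conclusion $\norm{J}_2\le\tfrac12$ is unaffected. It is also sharp: at $S=2$, $z=0$ the Jacobian has eigenvalue $\tfrac12$ along $(1,-1)$. So the lemma as stated, and with it the bounds of Proposition~\ref{prop:perturb}, is loose by a factor $2$. Second, your masked-softmax remark is right and matches how Proposition~\ref{prop:perturb} actually uses the lemma. Indeed, $\softmax_{\mathcal{J}}(z)$ is $\softmax$ of the restriction $z_{\mathcal{J}}$ padded by zeros, and restricting to $\mathcal{J}$ cannot increase $\norm{z-z'}_2$.
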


\begin{proposition}[Perturbation bound for G-caching]
\label{prop:perturb}
Fix a head with rotated inputs $\widetilde Q, \widetilde K$ and values $V$,
and let $\Elm, \Vlm$ and $\Elm', \Vlm'$ be computed with operators $G$ and
$G'$ where $\norm{G - G'}_2 \le \varepsilon$. Then, row-wise for each
position $t$,
\begin{equation}
\norm{\Elm[t,\cdot] - \Elm'[t,\cdot]}_2
\;\le\; \frac{\norm{\widetilde q_t}_2\, \norm{\widetilde K}_2}
{\sqrt{\dk}}\;\varepsilon,
\qquad
\norm{\Vlm[t,\cdot] - \Vlm'[t,\cdot]}_2
\;\le\; \frac{\norm{\widetilde q_t}_2\, \norm{\widetilde K}_2\,
\norm{V}_2}{\sqrt{\dk}}\;\varepsilon .
\end{equation}
Consequently, if every non-PLGA block of the network is Lipschitz on the
relevant compact set (true of the $\varepsilon$-LayerNorm globally,
Lemma~\ref{lem:ln}(iv), and of linear maps,
SwiGLU, and softmax), the final logits of the cached and uncached models
differ by at most $C(\theta)\,\varepsilon$ for a constant depending on
operator norms of the trained weights, uniformly over inputs.
\end{proposition}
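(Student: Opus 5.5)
The plan is to prove the row-wise bounds first, by composing Lemma~\ref{lem:softmax} with an elementary bilinear estimate, and then to obtain the end-to-end constant by Lipschitz composition over the layers of Definition~\ref{def:pldr}.

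\emph{Score perturbation.} Fix a position $t$ and write the score row as $z_t = \widetilde q_t^\top G \widetilde K^\top/\sqrt{\dk}$, with $z'_t$ the same expression using $G'$. Then
\[
z_t - z'_t = \widetilde q_t^\top (G-G')\widetilde K^\top/\sqrt{\dk},
\]
so
\[
\norm{z_t - z'_t}_2 \le \norm{\widetilde q_t}_2\,\norm{G-G'}_2\,\norm{\widetilde K^\top}_2/\sqrt{\dk} \le \norm{\widetilde q_t}_2\norm{\widetilde K}_2\,\varepsilon/\sqrt{\dk}.
\]

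\emph{Passing through the mask and softmax.} With the ideal masked softmax, only the allowed coordinates $\mathcal{J}_t$ carry mass. On those coordinates the map is an ordinary softmax, and Lemma~\ref{lem:softmax} applies to the restricted score vectors. Restriction to $\mathcal{J}_t$ can only shrink the Euclidean norm of the score difference, and the masked coordinates are zero in both outputs. This gives the first inequality.

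\emph{Passing through the values.} We have $\Vlm[t,\cdot]-\Vlm'[t,\cdot] = (\Elm[t,\cdot]-\Elm'[t,\cdot])V$. Its norm is at most $\norm{V^\top}_2$ times the attention-row difference, and $\norm{V^\top}_2 = \norm{V}_2$. This gives the second inequality.

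\emph{Model-level constant.} I would propagate layer by layer. Write $\Delta_\ell$ for the row-wise discrepancy between the cached and uncached hidden states $X^{(\ell)}$. At layer $\ell$ the head output difference splits into two parts:
\begin{itemize}
\item the operator part, bounded as above with $\varepsilon$;
\item the part caused by $\Delta_{\ell-1}$ entering $\widetilde Q, \widetilde K, V$ through the affine maps and RoPE. RoPE is orthogonal, so it does not increase norms.
\end{itemize}
The second part is Lipschitz because the score map $(Q,K)\mapsto QGK^\top$ is bilinear, hence Lipschitz on bounded sets. The relevant set is bounded because LayerNorm outputs have norm controlled by $\sqrt{\dmodel}$ times the affine gains. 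The input set is also finite ($S\le S_{\max}$, finite vocabulary), so compactness is automatic.

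Composing with $W_O$, the residual sum, the $\varepsilon$-LayerNorm (Lipschitz by Lemma~\ref{lem:ln}(iv)), and the SwiGLU FFN (smooth, hence Lipschitz on the bounded image) yields a recursion
\[
\Delta_\ell \le \beta_\ell\,\Delta_{\ell-1} + \gamma_\ell\,\varepsilon, \qquad \Delta_0 = 0.
\]
This telescopes to $\Delta_L \le \bigl(\sum_\ell \gamma_\ell\prod_{k>\ell}\beta_k\bigr)\varepsilon$. The unembedding then contributes the factor $\norm{W_{\mathrm{vocab}}}_2$.

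The step I expect to be the main obstacle is making the constant genuinely uniform over inputs. The factors $\norm{\widetilde K}_2$ and $\norm{V}_2$ grow like $\sqrt{S}$ times row-norm bounds, and the recursion also needs $\norm{G}_2$ bounded at the visited points. I would handle this by bounding row norms through the LayerNorm outputs and taking the maximum over the finite admissible input set with $S\le S_{\max}$. That makes $C(\theta)$ finite, though possibly large, which is consistent with the paper's caveat that the constants do not close the decoding margins.
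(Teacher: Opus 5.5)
Your proposal is correct and follows the paper's proof. The row-wise part uses the same steps: the sub-multiplicativity bound on the score-row difference, the observation that restricting to the common allowed support $\mathcal{J}_t$ only deletes coordinates, Lemma~\ref{lem:softmax} applied row-wise, and $\norm{u^\top V}_2 \le \norm{u}_2\norm{V}_2$ for the head output. Your recursion $\Delta_\ell \le \beta_\ell\Delta_{\ell-1} + \gamma_\ell\varepsilon$ is an explicit form of the paper's one-line network-level step (composition of Lipschitz maps, with each injected perturbation multiplied by the product of the downstream constants); you additionally spell out uniformity via the finite admissible input set, which the paper leaves implicit.
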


\begin{proof}
The score rows differ by
$\Delta e_t = \widetilde q_t^\top (G - G') \widetilde K^\top/\sqrt{\dk}$, so
by sub-multiplicativity
\[
\norm{\Delta e_t}_2 \;\le\; \norm{\widetilde q_t}_2\, \norm{G-G'}_2\,
\norm{\widetilde K}_2 / \sqrt{\dk};
\]
under the ideal masked softmax (Remark~\ref{rem:finitemask}) the
restriction to the common allowed support only removes coordinates of
$\Delta e_t$ and cannot increase the norm. Apply
Lemma~\ref{lem:softmax} row-wise for the first bound. For the second,
$\Delta \Vlm[t,\cdot] = (\Delta \Elm[t,\cdot]) V$ and
$\norm{u^\top V}_2 \le \norm{u}_2\norm{V}_2$. The network-level statement is
the composition of Lipschitz maps, each perturbation entering additively
with the product of downstream Lipschitz constants.
\end{proof}

\begin{remark}[Mach's principle in embedding space]
The source papers offer a physical reading \cite{gokden2025}: $\Glm$ is
generated locally by each input, yet is (numerically) determined by the
global distribution of all training data: ``local inertial frames are
determined by the large-scale distribution of matter'' \cite{misner1973}.
Section~\ref{sec:criticality} offers a complementary
statistical-mechanical reading of the same phenomenon (order parameter
of a phase). That one empirical fact supports consistent readings at
different levels is evidence it is structural, not incidental.
\end{remark}

\subsection{The origin of the invariance: how the metric learner generates
an invariant metric generator}
\label{subsec:origin}

The results so far characterize the invariant steady state and its
consequences; this subsection analyzes \emph{how} the
metric generator $A = \Phi_{\mathrm{res}}(\LN(\widetilde D))$ becomes
input-invariant.
The analysis rests on three verified structural facts: (F1) the density
operator is built from \emph{rotary-rotated} queries
(Definition~\ref{def:pldr}); (F2) LayerNorm is applied to the density
operator row-wise before the residual network; (F3) the metric learner is a
single shared row map $\varphi$ applied independently to every row and every
head (Proposition~\ref{prop:rowfact}). The proposed mechanism decomposes
into three stages: a phase-averaging at the source, statistical
concentration of what remains, and contraction inside the learned row
map. The epistemic status of each stage differs and is flagged in place:
Stage 1 is an exact bound whose constant is honest but large at the
reference context length; Stage 2 is a conditional theorem under an
independence idealization; Stage 3 is a conditional theorem whose
contraction hypothesis is \emph{measured}, not proved
(Appendix~\ref{app:numerical}). The mechanism as a whole is therefore a
quantitative hypothesis with proved ingredients, not a theorem.

\subsubsection{Stage 1: the rotary twirl projects the density operator onto
a commutant}

Because queries are rotated before the Gram product, the density operator of
a head is a \emph{position-twisted} second moment:
\begin{equation}
\label{eq:twisted}
\widetilde D \;=\; \widetilde Q^\top \widetilde Q
\;=\; \sum_{n=1}^{S} R_n\, q_n q_n^\top\, R_n^\top .
\end{equation}
The position-dependent conjugation acts as a \emph{twirl} (an average
over a group orbit) and suppresses every component of the summands that
does not commute with the ambient RoPE torus $T$:

\begin{lemma}[Quantitative RoPE twirl]
\label{lem:twirl}
Let $R_n$ be RoPE rotations with angles $\theta_1,\dots,\theta_{\dk/2}$ such
that none of the finitely many frequencies
$\omega \in \{2\theta_a\} \cup \{\theta_a \pm \theta_b : a\ne b\}$ is a
multiple of $2\pi$. Let $P_T$ denote the orthogonal projection of
$\R^{\dk\times\dk}$ onto the commutant of the ambient RoPE torus $T$
(the block-diagonal algebra of
Proposition~\ref{prop:rope-commutant}). Then for
every fixed $M$,
\[
\Bigl\lVert\, \frac1S \sum_{n=1}^{S} R_n M R_n^\top \;-\; P_T(M)
\,\Bigr\rVert_F
\;\le\; \frac{C_\Theta}{S}\, \norm{M}_F,
\qquad
C_\Theta = \max_{\omega \ne 0}\ \frac{1}{\,\abs{\sin(\omega/2)}\,},
\]
the maximum over the nonzero frequencies above.
\end{lemma}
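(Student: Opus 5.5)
We work block by block in the rotation-plane decomposition $\R^{\dk} = \bigoplus_{a=1}^{\dk/2}\R^2$, under which $R_n = \bigoplus_a \operatorname{Rot}(n\theta_a)$. For $M$ split into $2\times 2$ blocks $M_{ab}$, conjugation acts blockwise:
\[
(R_n M R_n^\top)_{ab} = \operatorname{Rot}(n\theta_a)\, M_{ab}\, \operatorname{Rot}(-n\theta_b).
\]
The plan is to diagonalize this action on $\R^{2\times 2}$ into one-dimensional complex eigenspaces, reduce the average over $n$ to scalar geometric sums, and then reassemble using Frobenius orthogonality.

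\emph{Step 1: split each block.} Write $\R^{2\times2} = \mathcal{C}\oplus\mathcal{A}$, where $\mathcal{C} = \{cI_2 + sJ_2\}$ is the commuting part and $\mathcal{A} = \{cK + sJ_2K\}$, with $K = \operatorname{diag}(1,-1)$, is the anticommuting (reflection-type) part. These two subspaces are Frobenius-orthogonal. Every $X\in\mathcal{A}$ satisfies $X\operatorname{Rot}(\psi) = \operatorname{Rot}(-\psi)X$. Hence:
\begin{itemize}
\item for $X\in\mathcal{C}$, $\operatorname{Rot}(\phi)X\operatorname{Rot}(-\psi) = \operatorname{Rot}(\phi-\psi)X$;
\item for $X\in\mathcal{A}$, $\operatorname{Rot}(\phi)X\operatorname{Rot}(-\psi) = \operatorname{Rot}(\phi+\psi)X$.
\end{itemize}
Each of $\mathcal{C}$ and $\mathcal{A}$ is a two-dimensional real space, identified with $\mathbb{C}$ via left multiplication by $\operatorname{Rot}$. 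Left multiplication by $\operatorname{Rot}(\omega)$ is then multiplication by $e^{i\omega}$, which is a Frobenius isometry on that space.

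It follows that the component of $M_{ab}$ in $\mathcal{C}$ rotates with frequency $\theta_a-\theta_b$, and its component in $\mathcal{A}$ rotates with frequency $\theta_a+\theta_b$. On diagonal blocks ($a=b$) these frequencies are $0$ and $2\theta_a$.

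\emph{Step 2: identify $P_T$.} By Proposition~\ref{prop:rope-commutant}, the commutant consists exactly of the block-diagonal matrices with $\mathcal{C}$-valued diagonal blocks. Since the block decomposition and the $\mathcal{C}\oplus\mathcal{A}$ splitting are both orthogonal, $P_T(M)$ keeps precisely the $\mathcal{C}$-parts of the diagonal blocks, which are the zero-frequency components, and discards everything else. Every discarded component carries a frequency $\omega$ from the list in the hypothesis, so $\omega\notin 2\pi\Z$. We then also need $\theta_a-\theta_b\not\equiv 0$ for $a\neq b$, which the listed $\theta_a\pm\theta_b$ includes.

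\emph{Step 3: geometric sums.} On a component $X$ of frequency $\omega$ the average is
\[
\frac1S\sum_{n=1}^S e^{in\omega}X .
\]
The standard bound gives
\[
\Bigl|\frac1S\sum_{n=1}^S e^{in\omega}\Bigr| = \frac{\abs{\sin(S\omega/2)}}{S\abs{\sin(\omega/2)}} \le \frac{1}{S\abs{\sin(\omega/2)}} \le \frac{C_\Theta}{S}.
\]
Zero-frequency components are reproduced exactly, so the error $\frac1S\sum_n R_nMR_n^\top - P_T(M)$ is the orthogonal sum of the nonzero-frequency components, each scaled by a complex number of modulus at most $C_\Theta/S$. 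Pythagoras then gives an error of at most $(C_\Theta/S)\norm{M - P_T(M)}_F \le (C_\Theta/S)\norm{M}_F$.

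\emph{Main obstacle.} The step needing care is Step 1: verifying that the conjugation action is diagonal, and isometric, on the orthogonal decomposition into $\mathcal{C}$- and $\mathcal{A}$-components of each block. Concretely, I would check the anticommutation identity $K\operatorname{Rot}(\psi) = \operatorname{Rot}(-\psi)K$ and that $\{I,J_2,K,J_2K\}$ is a Frobenius-orthogonal basis of $\R^{2\times 2}$. Once that is in place, the remainder is the scalar geometric-series estimate.
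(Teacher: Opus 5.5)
Your proof is correct and is essentially the paper's argument in real form: splitting each $2\times2$ block into the commuting part $\mathcal{C}$ (frequency $\theta_a-\theta_b$) and the anticommuting part $\mathcal{A}$ (frequency $\theta_a+\theta_b$) is exactly the pairing of complex-conjugate entries in the paper's complexified torus eigenbasis, after which both proofs apply the same geometric-sum bound and assemble the Frobenius estimate by orthogonality. Your real bookkeeping avoids the unitary change of basis and gives the slightly sharper bound $\frac{C_\Theta}{S}\norm{M-P_T(M)}_F$.
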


\begin{proof}
Complexify each rotation plane: in the eigenbasis of the torus, $R_n$ is
diagonal with entries $e^{\pm i\theta_a n}$, and conjugation multiplies the
$(u,v)$ matrix entry of $M$ (in this basis) by $e^{i(\omega_u - \omega_v)n}$
with $\omega_u, \omega_v \in \{\pm\theta_a\}$. The difference frequencies
are exactly $0$, $\pm2\theta_a$, and $\pm(\theta_a \mp \theta_b)$. The
zero-frequency entries span precisely the commutant (the $c I_2 + s J_2$
blocks of Proposition~\ref{prop:rope-commutant}), and are left fixed by the
average. Every nonzero-frequency entry is multiplied by
$\frac1S \sum_{n=1}^S e^{i\omega n}$, whose modulus is
$\abs{\sin(S\omega/2)}/(S\abs{\sin(\omega/2)}) \le 1/(S\abs{\sin(\omega/2)})$
by the geometric sum. Since the basis change is unitary, the entry-wise
bounds assemble into the stated Frobenius bound.
\end{proof}

Applied to \eqref{eq:twisted} \emph{under the stationarity idealization
of Proposition~\ref{prop:conc}} (a common second moment
$\Sigma = \E[q q^\top]$ across positions; the lemma itself concerns one
fixed matrix $M$, and passing to the empirical sum requires this
additional assumption), Lemma~\ref{lem:twirl} shows that the
deterministic part of $\widetilde D/S$ converges at rate $O(1/S)$ to
$P_T(\Sigma)$. Two qualifications keep this honest. First, since every
summand $q_nq_n^\top$ is symmetric, the projection of $\Sigma$ onto the
commutant has vanishing $J_2$-components: the effective target is the
smaller algebra of \emph{scalar} blocks $c_j I_2$. Second, the
constant matters. For the standard frequencies at $\dk = 64$, base
$10^4$, the exact worst constant is
$C_\Theta = \max_{\omega\neq0} \abs{\sin(\omega/2)}^{-1} \approx
4.4968 \times 10^4$ (attained by the smallest cross-plane difference
frequency), so at the reference context length $S = 1024$ the uniform
bound $C_\Theta/S \approx 43.9$ is vacuous. The suppression is
frequency-resolved: the exact per-frequency multiplier is
$\abs{\sin(S\omega/2)}/(S\abs{\sin(\omega/2)})$, which is strongly
suppressing for the many fast frequencies but approaches $1$ for the
slowest ones (the worst component has multiplier $\approx 0.99991$ at
$S = 1024$; even the slowest within-plane frequency $2\theta_{\dk/2}$
has multiplier $\approx 0.9969$). The twirl therefore erases
position-of-occurrence structure carried by the fast rotary planes but
leaves the slowest planes essentially untouched at practical context
lengths; how much instance structure it removes \emph{in practice}
depends on how the energy of $\widetilde D$ distributes across twirl
frequencies, which we measure on a released checkpoint in
Appendix~\ref{app:numerical} rather than infer from the asymptotic. With
these qualifications, rotary embeddings, adopted in \cite{gokden2024}
for their training benefits, also act as a partial self-averaging
channel for the deductive path.

\subsubsection{Stage 2: statistical concentration of the normalized density
operator}

What the twirl does not remove, the fluctuation of the empirical second
moment around its mean, concentrates statistically:

\begin{proposition}[Concentration of the metric-learner input; conditional]
\label{prop:conc}
Model the (rotated-frame) query vectors $q_1,\dots,q_S$ as independent,
$\norm{q_n}_2 \le B_q$, with common second moment $\Sigma$. This is an
\emph{idealization}: contextual transformer queries are causally
dependent, non-identically distributed across position, and trained
jointly with RoPE, so independence and a common second moment are
modeling assumptions, not architectural facts; the conclusion is
conditional on them. Then for any
$\delta \in (0,1)$, with probability at least $1-\delta$,
\[
\Bigl\lVert\, \frac{\widetilde D}{S} - P_T(\Sigma) \Bigr\rVert_2
\;\le\;
\underbrace{\frac{C_\Theta}{S}\norm{\Sigma}_F}_{\text{twirl remainder}}
\;+\;
\underbrace{B_q^2\sqrt{\frac{2\log(2\dk/\delta)}{S}}
+ \frac{2B_q^2\log(2\dk/\delta)}{3S}}_{\text{matrix Bernstein}} .
\]
In particular the metric-learner input fluctuates around a single
dataset-level object at rate $O(S^{-1/2})$.
\end{proposition}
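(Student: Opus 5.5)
The proof splits the error by the triangle inequality into a deterministic twirl part and a random fluctuation part. Write $\widetilde D/S = \frac1S\sum_n R_n q_nq_n^\top R_n^\top$ and insert its expectation:
\[
\frac{\widetilde D}{S} - P_T(\Sigma)
= \Bigl(\frac1S\sum_{n=1}^S R_n (q_nq_n^\top - \Sigma) R_n^\top\Bigr)
+ \Bigl(\frac1S\sum_{n=1}^S R_n\Sigma R_n^\top - P_T(\Sigma)\Bigr).
\]
The second bracket is deterministic. Lemma~\ref{lem:twirl} applied with $M = \Sigma$ bounds it in Frobenius norm by $C_\Theta\norm{\Sigma}_F/S$. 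Since $\norm{\cdot}_2 \le \norm{\cdot}_F$, this is the twirl remainder, and it holds with probability one. The nonresonance hypothesis of the lemma is the standing one; the standard frequencies satisfy it by Proposition~\ref{prop:rope-commutant}.

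The first bracket is $\frac1S\sum_n X_n$ with $X_n = R_n(q_nq_n^\top - \Sigma)R_n^\top$. These are independent (the $R_n$ are deterministic), symmetric $\dk\times\dk$, and mean zero because $\E[q_nq_n^\top] = \Sigma$ for every $n$. I will apply the matrix Bernstein inequality (Tropp) in its Hermitian form:
\[
\Pr\Bigl[\bigl\lVert\textstyle\sum_n X_n\bigr\rVert_2 \ge t\Bigr] \le 2\dk \exp\!\Bigl(\frac{-t^2/2}{\sigma^2 + Lt/3}\Bigr).
\]
This needs two inputs.

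\textbf{Uniform bound.} Conjugation by an orthogonal $R_n$ preserves the spectral norm. Both $q_nq_n^\top$ and $\Sigma$ are PSD with spectral norm at most $B_q^2$, since $\norm{\Sigma}_2 \le \E\norm{q}_2^2 \le B_q^2$. The difference of two PSD matrices each with norm at most $B_q^2$ has norm at most $B_q^2$, so $L = B_q^2$. This choice, rather than the cruder $2B_q^2$, is what makes the constants match the statement.

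\textbf{Variance.} $\E X_n^2 = R_n(\E[\norm{q_n}^2 q_nq_n^\top] - \Sigma^2)R_n^\top \preceq B_q^2 R_n\Sigma R_n^\top$. Hence $\sigma^2 = \norm{\sum_n \E X_n^2}_2 \le S B_q^2\norm{\Sigma}_2 \le S B_q^4$.

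Setting the right-hand side equal to $\delta$ and solving the quadratic in $t$ gives
\[
t \le \sqrt{2\sigma^2\log(2\dk/\delta)} + \tfrac{2L}{3}\log(2\dk/\delta),
\]
using $\sqrt{a^2+b} \le a + \sqrt b$. Dividing by $S$ produces exactly $B_q^2\sqrt{2\log(2\dk/\delta)/S} + 2B_q^2\log(2\dk/\delta)/(3S)$. The $O(S^{-1/2})$ conclusion follows because the twirl term is $O(S^{-1})$ and the Bernstein term is dominated by $S^{-1/2}$.

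The main obstacle is matching the paper's explicit constants exactly. The version of Bernstein used (tail $2\dk$ versus $\dk$, constant $L/3$) and the bound $L = B_q^2$ rather than $2B_q^2$ must be pinned down carefully. If the tight PSD-difference bound were unavailable, I would fall back to $L = 2B_q^2$ and recheck the second-order term. A smaller point to keep explicit is that the common-second-moment hypothesis is used twice: once to center each $X_n$, and once to reduce the deterministic part to a single matrix $M = \Sigma$ so that Lemma~\ref{lem:twirl} applies.
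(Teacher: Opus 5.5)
Your proposal is correct and follows the paper's proof step for step. You use the same split into the deterministic twirl term (Lemma~\ref{lem:twirl} with $M=\Sigma$) and the centered fluctuation average, the same L\"owner-order argument giving summand norm $B_q^2$, variance proxy $B_q^4$ per summand, and Tropp's matrix Bernstein inequality with dimension factor $2\dk$. Your explicit $\norm{\cdot}_2 \le \norm{\cdot}_F$ step and the intermediate bound $S B_q^2\norm{\Sigma}_2$ only spell out what the paper leaves implicit.
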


\begin{proof}
Write $\widetilde D/S - P_T(\Sigma) = \bigl(\frac1S\sum_n R_n \Sigma
R_n^\top - P_T(\Sigma)\bigr) + \frac1S\sum_n R_n (q_nq_n^\top - \Sigma)
R_n^\top$. The first term is Lemma~\ref{lem:twirl}. The second is an average
of independent, mean-zero, symmetric random matrices; from the
L\"owner-order bounds $0 \preceq q_nq_n^\top \preceq B_q^2 I$ and
$0 \preceq \Sigma \preceq B_q^2 I$ one gets
$-B_q^2 I \preceq q_nq_n^\top - \Sigma \preceq B_q^2 I$, hence
$\norm{R_n(q_nq_n^\top-\Sigma)R_n^\top}_2 \le B_q^2$ (conjugation by
orthogonal matrices preserves norms), and the summand variance is bounded
by $B_q^4$; the matrix Bernstein inequality
\cite[Thm.~1.6.2]{tropp2015} applied to the average (summand bound
$B_q^2/S$, variance proxy $B_q^4/S$) gives exactly the displayed tail.
\end{proof}

We emphasize a bookkeeping point: $S$ here is the \emph{context length}.
It governs concentration within one forward pass and must not be
substituted for the pretraining token count $N_{\mathrm{tokens}}$, which
controls a different limit (how well the learned parameters approximate
dataset-level statistics); the two enter the invariance question through
different mechanisms.

The LayerNorm stage then \emph{standardizes} this concentrating input.
The implemented map carries the regularization constant
$\varepsilon_{\LN} = 10^{-6}$, and its invariances must be stated with
that constant in place:\footnote{Exact positive-scale invariance and a
spherical range hold only at $\varepsilon_{\LN} = 0$ and are false for
the implemented map (e.g.\ for
$r = (10^{-4}, -10^{-4})$, $\gamma = \mathbf{1}$, $\beta = 0$, the
normalized coordinates are $\pm0.0995$ at $c = 1$ and $\pm0.1961$ at
$c = 2$). The lemma below therefore keeps exact shift invariance,
quantifies the scale-invariance error, and replaces the sphere by a
ball.}

\begin{lemma}[$\varepsilon$-LayerNorm: exact shift invariance,
approximate scale invariance, compact range]
\label{lem:ln}
Let
$\rho_{\LN}^{\varepsilon}(r) = \gamma \Had
\dfrac{r - \bar r\mathbf{1}}{\sqrt{v(r)+\varepsilon}} + \beta$
be the implemented LayerNorm row map with learned $(\gamma,\beta)$, where
$\bar r$ and $v(r)$ are the mean and (biased) variance of the entries of
$r \in \R^{\dk}$ and $\varepsilon = \varepsilon_{\LN} > 0$. Then:
\begin{enumerate}[label=(\roman*),leftmargin=2.2em]
\item (Exact shift invariance.)
$\rho_{\LN}^{\varepsilon}(r + d\mathbf{1}) = \rho_{\LN}^{\varepsilon}(r)$
for all $d \in \R$.
\item (Approximate scale invariance.) For $c > 0$ and nonconstant $r$,
\[
\rho_{\LN}^{\varepsilon}(cr) - \rho_{\LN}^{\varepsilon}(r)
= \gamma \Had \frac{r - \bar r\mathbf{1}}{\sqrt{v(r)}}
\left(\frac{1}{\sqrt{1 + \varepsilon/(c^2 v(r))}}
    - \frac{1}{\sqrt{1 + \varepsilon/v(r)}}\right),
\]
so that
$\norm{\rho_{\LN}^{\varepsilon}(cr) - \rho_{\LN}^{\varepsilon}(r)}_2
\le \norm{\gamma}_\infty \sqrt{\dk}\,
\dfrac{\varepsilon}{2\min(1, c^2)\, v(r)}$.
For $\gamma \ne 0$, scale invariance for \emph{all} $c > 0$ and all
nonconstant rows holds iff $\varepsilon = 0$ (at $c = 1$, or when
$\gamma = 0$, the two sides coincide trivially for any $\varepsilon$).
In particular
$\LN(\widetilde D) = \LN(\widetilde D/S)$ holds only approximately, with
per-row relative error in the centered component bounded by the
\emph{upper proxy} $\varepsilon/(2 v_i)$, where $v_i$ is the row
variance of $\widetilde D/S$ (equivalently $\varepsilon S^2/(2V_i)$ in
terms of the row variance $V_i$ of $\widetilde D$); the exact
first-order coefficient for this pair of scales is
$(1 - S^{-2})\,\varepsilon/(2 v_i)$, so the proxy overestimates already
at first order by the (small) factor $(1-S^{-2})^{-1}$, is valid as an
expansion only for $\varepsilon \ll v_i$, and overestimates grossly
outside that regime (on an exactly constant row the true discrepancy is
$0$ while the proxy diverges). The discrepancy itself is measured
directly, row by row with the checkpoint's
$\gamma, \beta, \varepsilon$, in Appendix~\ref{app:numerical}.
\item (Compact range: a ball, not a sphere.) Every output row lies in the
fixed compact set
$\Sigma_{\LN} = \{\gamma \Had u + \beta : \mathbf{1}^\top u = 0,\
\norm{u}_2 \le \sqrt{\dk}\}$; indeed the normalized vector
$u = (r - \bar r\mathbf{1})/\sqrt{v(r)+\varepsilon}$ satisfies exactly
$\norm{u}_2^2 = \dk\, v(r)/(v(r)+\varepsilon) < \dk$, approaching the
sphere only as $v(r)/\varepsilon \to \infty$, and $u = 0$ for constant
rows.
\item (Global Lipschitz on variance-floored rows.) On rows with
$v(r) \ge v_0 \ge 0$, $\rho_{\LN}^{\varepsilon}$ is Lipschitz with
constant at most $\norm{\gamma}_\infty/\sqrt{v_0 + \varepsilon}$
(an exact Jacobian bound); the
$\varepsilon$ floor makes the constant finite on \emph{all} rows
($v_0 = 0$), removing the null-set caveat of the idealized map.
\end{enumerate}
\end{lemma}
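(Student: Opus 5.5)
The plan is to reduce everything to the centered vector and the scalar map $t \mapsto (t+\varepsilon)^{-1/2}$. Write $Cr = r - \bar r\mathbf{1}$ for the centering projection, so that $\rho^\varepsilon_{\LN}(r) = \gamma\Had Cr\,(v(r)+\varepsilon)^{-1/2} + \beta$ with $v(r) = \norm{Cr}_2^2/\dk$.

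\textbf{Part (i).} Since $C(r+d\mathbf{1}) = Cr$ and $v$ depends only on $Cr$, the shift invariance is immediate.

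\textbf{Part (ii).} I use $C(cr) = c\,Cr$ and $v(cr) = c^2 v(r)$. Then
\[
\frac{c\,Cr}{\sqrt{c^2v+\varepsilon}} = \frac{Cr}{\sqrt v}\,\Bigl(1+\frac{\varepsilon}{c^2v}\Bigr)^{-1/2},
\]
which gives the displayed identity after subtracting the $c=1$ case.

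For the norm bound, note $\norm{Cr/\sqrt v}_2 = \sqrt{\dk}$ and $\norm{\gamma\Had x}_2 \le \norm{\gamma}_\infty\norm{x}_2$. Let $f(t) = (1+t)^{-1/2}$, which lies in $(0,1]$ and is $\tfrac12$-Lipschitz on $t\ge 0$. Then
\[
\bigl|f(\varepsilon/(c^2v)) - f(\varepsilon/v)\bigr| \le \tfrac12\,\bigl|\varepsilon/(c^2v) - \varepsilon/v\bigr| \le \frac{\varepsilon}{2\min(1,c^2)\,v},
\]
because $|c^{-2}-1| \le \max(c^{-2},1) = 1/\min(1,c^2)$.

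For the iff claim:
\begin{itemize}
\item The case $\varepsilon = 0$ is trivial.
\item Conversely, if $\varepsilon>0$ and $\gamma\ne0$, pick a nonconstant $r$ with $\gamma\Had Cr\ne0$. This is possible: if $\gamma_i\ne0$, take $r = e_i - e_j$ with $j\ne i$, which requires $\dk\ge2$. Since $f$ is strictly monotone, any $c\ne1$ gives distinct values.
\end{itemize}

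Specialize to $c = 1/S$ applied to $\widetilde D$ row-wise. Expanding $f$ to first order, the relative error coefficient is $\tfrac12\varepsilon(S^2 v_i)^{-1}\cdot$ difference. More precisely, $f(\varepsilon/v_i) - f(\varepsilon S^2/V_i)$ with $V_i = S^2 v_i$ gives $\tfrac{\varepsilon}{2v_i}(1 - S^{-2})$ to first order. The proxy $\varepsilon/(2v_i)$ exceeds this by the factor $(1-S^{-2})^{-1}$. On a constant row both sides vanish by part (i), while the proxy diverges. These are remarks, so they need only one-line checks.

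\textbf{Part (iii).} Set $u = Cr/\sqrt{v+\varepsilon}$. Then $\mathbf{1}^\top u = 0$ and $\norm{u}_2^2 = \dk v/(v+\varepsilon) < \dk$, so the output lies in the image of a compact ball under the affine map $u \mapsto \gamma\Had u + \beta$.

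\textbf{Part (iv).} I expect this to be the main obstacle, namely getting the Jacobian constant exactly $\norm{\gamma}_\infty/\sqrt{v_0+\varepsilon}$ rather than a multiple of it. Work with $g(r) = Cr/\sqrt{v+\varepsilon}$. Since $\partial v/\partial r = 2Cr/\dk$, the Jacobian is
\[
J = \frac{1}{\sqrt{v+\varepsilon}}\Bigl(C - \frac{(Cr)(Cr)^\top}{\dk\,(v+\varepsilon)}\Bigr).
\]
This is a symmetric operator on $\operatorname{range} C$. It acts as the identity on the complement of $Cr$ within $\operatorname{range}C$, and it multiplies $Cr$ by $1 - v/(v+\varepsilon) \in (0,1]$. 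Hence $0 \preceq \sqrt{v+\varepsilon}\,J \preceq C$, so $\norm{J}_2 \le (v+\varepsilon)^{-1/2} \le (v_0+\varepsilon)^{-1/2}$. Composing with $\gamma\Had\cdot$ multiplies the bound by $\norm{\gamma}_\infty$.

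To turn the pointwise Jacobian bound into a Lipschitz bound, I need to integrate along segments that stay inside the variance-floored set. This is where care is needed:
\begin{itemize}
\item For $v_0 = 0$ the set is all of $\R^{\dk}$, so the mean-value argument is direct.
\item For $v_0>0$ the set $\{v \ge v_0\}$ is not convex. The cleanest route is to observe that the pointwise bound $(v+\varepsilon)^{-1/2}$ holds everywhere, and to state the Lipschitz claim for pairs whose connecting segment stays in the floored set.
\item Alternatively, interpret the claim as a local bound, i.e.\ the Jacobian bound the statement itself labels as exact.
\end{itemize}
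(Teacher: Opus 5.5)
Parts (i)--(iii) and the pointwise Jacobian bound in (iv) are correct and follow the paper. Your spectral description of $\sqrt{v+\varepsilon}\,J$ on $\operatorname{range}C$ is the paper's factorization $J=(I-uu^\top/\dk)C/\sqrt{v+\varepsilon}$. Your $\tfrac12$-Lipschitz bound on $f$ does the same job as the paper's $1-(1+x)^{-1/2}\le x/2$.

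The gap is the Lipschitz claim in (iv) for $v_0>0$. The statement asserts $\norm{\rho^\varepsilon_{\LN}(r_1)-\rho^\varepsilon_{\LN}(r_2)}_2\le \norm{\gamma}_\infty(v_0+\varepsilon)^{-1/2}\norm{r_1-r_2}_2$ for \emph{every} pair of rows in $\{v\ge v_0\}$. You correctly note that this set is not convex, so the segment between two floored rows can cross the low-variance region, where your pointwise bound $(v+\varepsilon)^{-1/2}$ exceeds the target constant. But you then replace the claim by a segment-restricted or purely local version. That is a weaker statement, not a proof of the one given. Integrating the Jacobian along a detour around the low-variance region does not rescue it either: the detour is longer than the segment, so the constant degrades.

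The missing idea is the radial structure of the normalized map. Put $w=Cr$, which is $1$-Lipschitz in $r$, and $s=\norm{w}_2$. Then $g(w)=h(s)\,w/s$ with $h(s)=s(s^2/\dk+\varepsilon)^{-1/2}$. Set $s_0=\sqrt{\dk v_0}$ and $L=(v_0+\varepsilon)^{-1/2}$. On $[s_0,\infty)$ one has $h(s)/s\le L$ and $0\le h'(s)=\varepsilon(s^2/\dk+\varepsilon)^{-3/2}\le L$.

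The paper finishes with the law of cosines. For floored $w_1,w_2$ at angle $\theta$, the quantity $L^2\norm{w_1-w_2}_2^2-\norm{g(w_1)-g(w_2)}_2^2$ is affine in $\cos\theta$ with slope $2(h_1h_2-L^2s_1s_2)\le0$. It is therefore minimal at $\cos\theta=1$, where it equals $L^2(s_1-s_2)^2-(h_1-h_2)^2$. This is $\ge0$ by the mean value theorem on the convex interval $[s_0,\infty)$.

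An equivalent fix that reuses your argument: replace $h$ on $[0,s_0]$ by $s\mapsto Ls$, which matches $h$ at $s_0$. The resulting radial map agrees with $g$ on the floored set. Its Jacobian norm is at most $L$ everywhere on the centered hyperplane, which is convex, so your mean-value argument applies piecewise along segments.

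A minor slip: at $c=1/S$ you write $f(\varepsilon/v_i)-f(\varepsilon S^2/V_i)$, which vanishes identically since $\varepsilon S^2/V_i=\varepsilon/v_i$. The intended difference is $f(\varepsilon/V_i)-f(\varepsilon/v_i)$. The upper-proxy bound itself is just your (ii) inequality at $c=1/S$ and $v(r)=V_i$, where $\min(1,c^2)\,V_i=v_i$.
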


\begin{proof}
(i) Centering removes $d\mathbf{1}$ and $v$ is shift-invariant.
(ii) Both terms share the unit vector
$(r-\bar r\mathbf{1})/\sqrt{v(r)}$ scaled by
$(1+\varepsilon/(c^2v))^{-1/2}$ and $(1+\varepsilon/v)^{-1/2}$
respectively; the difference of the scalars is bounded, via
$1 - (1+x)^{-1/2} \le x/2$ for $x \ge 0$, by
$\varepsilon/(2\min(1,c^2)v)$, and
$\norm{(r-\bar r\mathbf 1)/\sqrt{v}}_2 = \sqrt{\dk}$.
(iii) $\norm{r - \bar r\mathbf{1}}_2^2 = \dk\,v(r)$ gives the exact norm
identity. (iv) With $P = I - \mathbf{1}\mathbf{1}^\top/\dk$ (centering),
$u(r) = Pr/\sqrt{v(r)+\varepsilon}$, a direct computation gives the
Jacobian factorization
$J_u(r) = \bigl(I - u u^\top/\dk\bigr)\,P \big/ \sqrt{v(r)+\varepsilon}$.
Both matrix factors are symmetric with spectrum in $[0,1]$ (for the
first, $\norm{u}_2^2 \le \dk$ by (iii)), so
$\norm{J_u(r)}_2 \le 1/\sqrt{v(r)+\varepsilon}$ at every floored row
(the pointwise bound used in the sample-extrema budget proxy of
Appendix~\ref{app:numerical}), and with the affine output map
contributing at most $\norm{\gamma}_\infty$, the mean value inequality
gives the Lipschitz claim globally for $v_0 = 0$, where the domain is
all of $\R^{\dk}$. For $v_0 > 0$ the floored set $\{v(r) \ge v_0\}$ is
the complement of a convex set and is not convex, so segments between
floored rows may leave it; the global claim follows instead from a
radial argument. Write $w = Pr$, so $v(r) = \norm{w}_2^2/\dk$ and
$u = g(w)$ with $g(w) = w\,(\norm{w}_2^2/\dk + \varepsilon)^{-1/2}$, a
radial map $g(w) = h(\rho)\,\hat w$ with $\rho = \norm{w}_2$ and
$h(\rho) = \rho\,(\rho^2/\dk + \varepsilon)^{-1/2}$. Set
$\rho_0 = \sqrt{\dk v_0}$ and $L = (v_0+\varepsilon)^{-1/2}$. On
$[\rho_0, \infty)$: $h(\rho)/\rho = (\rho^2/\dk+\varepsilon)^{-1/2}
\le L$, and $0 \le h'(\rho) = \varepsilon\,(\rho^2/\dk +
\varepsilon)^{-3/2} \le \varepsilon\,(v_0+\varepsilon)^{-3/2} \le L$,
so $\abs{h(\rho_1) - h(\rho_2)} \le L \abs{\rho_1 - \rho_2}$ by the
mean value theorem on the \emph{interval} $[\rho_0, \infty)$, which is
convex. For $w_1, w_2$ with $\rho_i \ge \rho_0$ and angle $\theta$
between them,
$\norm{g(w_1) - g(w_2)}_2^2 = h_1^2 + h_2^2 - 2h_1 h_2 \cos\theta$
and $\norm{w_1 - w_2}_2^2 = \rho_1^2 + \rho_2^2 -
2\rho_1\rho_2\cos\theta$, so
$\Phi(c) = L^2\norm{w_1-w_2}_2^2 - \norm{g(w_1)-g(w_2)}_2^2$ is affine
in $c = \cos\theta$ with slope $2(h_1 h_2 - L^2\rho_1\rho_2) \le 0$ by
the first inequality; $\Phi$ is therefore minimized at $c = 1$, where
$\Phi(1) = L^2(\rho_1-\rho_2)^2 - (h_1-h_2)^2 \ge 0$ by the second.
Hence $g$ is $L$-Lipschitz on the whole floored set, and composing
with the $1$-Lipschitz centering $P$ and the $\norm{\gamma}_\infty$
diagonal gives the stated constant, convexity of the floored set
nowhere used.
\end{proof}

Stages 1--2 together say, \emph{under their stated idealizations}: the
input to the learned row map $\varphi$ is confined to the compact set
$\Sigma_{\LN}$ and, across inputs $x$, concentrates in an
$O(S^{-1/2})$-neighborhood of a single point set, the normalized rows of
$P_T(\Sigma)$. Invariance of $A$ is now a property of what $\varphi$ does
on that neighborhood.

\subsubsection{Stage 3: a conditional contraction bound for the trained
row map}

\begin{proposition}[Conditional contraction bound, collapse, and the
converse]
\label{prop:contract}
Let $\mathcal{S} \subset \Sigma_{\LN}$ be the set of normalized density-%
operator rows visited across inputs (all heads pooled), and let
$\varphi = u_{N_{\mathrm{res}}} \circ \cdots \circ u_1$ be the row map of
Proposition~\ref{prop:rowfact}. \emph{Hypothesis (not proved; sampled
diagnostics for it, at the level of the full composition, are reported
in Appendix~\ref{app:numerical}): each unit $u_j$ is $L_j$-Lipschitz on
the relevant tube around the set it receives.}
\begin{enumerate}[label=(\roman*),leftmargin=2.2em]
\item $\operatorname{diam} \varphi(\mathcal{S}) \le
\bigl(\prod_j L_j\bigr) \operatorname{diam}\mathcal{S}$. If additionally
$L_j \le \kappa < 1$ for all $j$ (the \emph{contractive regime}, an
assumption about the trained weights that residual connections and
LayerNorm do not automatically deliver), the output diameter decays
exponentially in depth, $\operatorname{diam}\varphi(\mathcal{S}) \le
\kappa^{N_{\mathrm{res}}} \operatorname{diam}\mathcal{S}$, and the output
of $\varphi$ on $\mathcal{S}$ is confined to a small set. We emphasize
that the $u_j$ are \emph{distinct} learned maps: this is a diameter
bound for a composition, not a Banach fixed-point iteration, and no
fixed point or attractor dynamics is asserted.
\item In that regime, $A(x) \approx \mathbf{1}\alpha^{\ast\top}$
simultaneously for every head and layer sharing $\varphi$, where
$\alpha^\ast$ may be taken to be $\varphi(r_0)$ for any fixed
$r_0 \in \mathcal{S}$ (consistent with the rank-one,
cross-head-identical singularity condition of
Proposition~\ref{prop:rankone} and the observations of
\cite{gokden2025}). The transfer from the Stage-2 concentration
statement (about the normalized rows $r_x$ of
$\widetilde D_x/S_x$) to the \emph{implemented} row-map input
$\LN(\widetilde D_x)$ passes through the
$\varepsilon$-LayerNorm scale discrepancy: writing
$\delta_{\LN}(x) = \norm{\LN(\widetilde D_x) - \LN(r_x)}$ per row, on a
domain where the constants $\operatorname{Lip}(\varphi)$ (row-map) and
$L_{\LN}$ (Lemma~\ref{lem:ln}(iv)) are valid,
\[
\norm{\varphi(\LN(\widetilde D_x)) - \varphi(\LN(\widetilde D_y))}
\;\le\; \operatorname{Lip}(\varphi)\,\bigl[\delta_{\LN}(x)
\;+\; L_{\LN}\norm{r_x - r_y}
\;+\; \delta_{\LN}(y)\bigr],
\]
whose three ingredients are exactly the directly measured quantities of
Appendix~\ref{app:numerical}: the scale discrepancies $\delta_{\LN}$
(small at the median, with a disclosed nonuniform tail), the
concentration radius of Proposition~\ref{prop:conc} controlling
$\norm{r_x - r_y}$, and, absent a tube certificate for
$\operatorname{Lip}(\varphi)$, the \emph{sampled} composite
Jacobians, which are local linearizations, not a uniform
constant.\footnote{A seemingly natural alternative adds a
transferred-input term to an output-diameter term over the
\emph{actual} visited rows, which the diameter already bounds: such a
sum is not false, but it double-counts rather than derives the
transfer, and it hides the role of the measured $\delta_{\LN}$ tail.
Consequently Stage~2
concentration and the sampled Stage-3 diagnostics are two separate
pieces of evidence for the collapse mechanism; they combine into a
single quantitative budget only under a tube-uniform bound on
$\operatorname{Lip}(\varphi)$, which is not claimed.}
\item (Converse, pairwise.) If for a pair of visited rows $r, r'$ one
has the lower bound
$\norm{\varphi(r)-\varphi(r')} \ge c_0 \norm{r-r'}$, then the
corresponding rows of $A$ differ by at least $c_0\norm{r - r'}$: that
selected pair is not collapsed. A lower bound on the \emph{sampled}
order parameter of Definition~\ref{def:orderparam} requires, in
addition, a distributional assumption (that the two stochastic
continuations produce separated row pairs with positive probability,
together with control of the RMS denominator); wherever such a bound is
invoked it is under that stated assumption. The pairwise statement is
consistent with the sub-critical phenomenology
of \cite{gokden2026}.
\end{enumerate}
\end{proposition}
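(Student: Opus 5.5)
The proof has three independent parts, each resting on elementary Lipschitz composition together with the row-factorization of Proposition~\ref{prop:rowfact}.

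For (i), I will induct on the units. For $x,y \in \mathcal{S}$, the intermediate images $u_{j}\circ\cdots\circ u_1(x)$ lie in the set received by $u_{j+1}$, which lies inside its tube. The hypothesis therefore gives $\norm{u_{j+1}(\cdot)-u_{j+1}(\cdot)} \le L_{j+1}\norm{\cdot-\cdot}$ at each step. Telescoping yields $\norm{\varphi(x)-\varphi(y)} \le (\prod_j L_j)\norm{x-y}$, and taking suprema gives the diameter bound. The contractive case follows from $\prod_j L_j \le \kappa^{N_{\mathrm{res}}}$. The remark that no fixed point is asserted needs only the observation that the $u_j$ differ, so nothing beyond the composite bound is claimed.

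For (ii), I fix $r_0 \in \mathcal{S}$ and set $\alpha^\ast = \varphi(r_0)$. By Proposition~\ref{prop:rowfact}, every row of every head's $A(x)$ is $\varphi(r)$ for some visited $r$, and $\varphi$ is shared across heads. Each row is therefore within $\operatorname{diam}\varphi(\mathcal{S}) \le \kappa^{N_{\mathrm{res}}}\operatorname{diam}\mathcal{S}$ of $\alpha^\ast$. Summing in Frobenius norm gives $\norm{A(x)-\mathbf{1}\alpha^{\ast\top}}_F \le \sqrt{\dk}\,\kappa^{N_{\mathrm{res}}}\operatorname{diam}\mathcal{S}$ uniformly over heads. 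The displayed transfer inequality is the triangle inequality
\[
\norm{\LN(\widetilde D_x)-\LN(\widetilde D_y)} \le \delta_{\LN}(x) + \norm{\LN(r_x)-\LN(r_y)} + \delta_{\LN}(y),
\]
with the middle term bounded by $L_{\LN}\norm{r_x-r_y}$ via Lemma~\ref{lem:ln}(iv), which holds globally at $v_0=0$. Applying $\operatorname{Lip}(\varphi)$ on the stated domain then gives the display. The one point needing care is that this domain must contain all the relevant inputs. Since segments are never used, Lipschitz continuity on a set that contains the endpoints suffices.

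For (iii), Proposition~\ref{prop:rowfact} identifies the corresponding rows of $A$ as $\varphi(r)$ and $\varphi(r')$, so the lower bound transfers verbatim. The sampled-order-parameter part is stated only under an explicit distributional assumption, so the proof should just name that assumption and not argue for it.

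I expect the main obstacle to be bookkeeping rather than analysis: showing that each unit's Lipschitz hypothesis applies to the set actually received, so that the induction in (i) is legitimate, and keeping the domains of $\operatorname{Lip}(\varphi)$ and $L_{\LN}$ consistent in (ii).
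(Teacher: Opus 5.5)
Your proposal is correct and follows the paper's proof essentially step for step. Part (i) composes the unit Lipschitz bounds along the received sets and takes suprema. Part (ii) centers at $\alpha^\ast=\varphi(r_0)$ with radius equal to the diameter, then uses the three-term triangle inequality together with Lemma~\ref{lem:ln}(iv) and $\operatorname{Lip}(\varphi)$. Part (iii) reads the lower bound through the row factorization of Proposition~\ref{prop:rowfact}. Your Frobenius aggregation $\norm{A(x)-\mathbf{1}\alpha^{\ast\top}}_F \le \sqrt{\dk}\,\kappa^{N_{\mathrm{res}}}\operatorname{diam}\mathcal{S}$ is a harmless quantitative sharpening of the statement's ``$\approx$''.
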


\begin{proof}
(i) is the sub-multiplicativity of Lipschitz constants under composition
applied to the diameter of the image. (ii): every row of every head's
normalized density operator lies in $\mathcal{S}$, and for any fixed
$r_0 \in \mathcal{S}$ every output lies within
$\operatorname{diam}\varphi(\mathcal{S}) \le
\kappa^{N_{\mathrm{res}}}\operatorname{diam}\mathcal{S}$ of
$\alpha^\ast = \varphi(r_0)$ (a set of diameter $D$ need not lie in a
ball of radius $D/2$ in dimension $> 1$, but any of its points serves as
a center at radius $D$); the same $\varphi$ serves all heads of the
layer (Proposition~\ref{prop:rowfact}(iii)). The transfer display is
the triangle inequality
$\norm{\LN(\widetilde D_x) - \LN(\widetilde D_y)} \le
\delta_{\LN}(x) + \norm{\LN(r_x) - \LN(r_y)} + \delta_{\LN}(y)$
followed by $\norm{\LN(r_x) - \LN(r_y)} \le L_{\LN}\norm{r_x - r_y}$
(Lemma~\ref{lem:ln}(iv)) and the Lipschitz bound on $\varphi$.
(iii) is immediate from the lower bound.
\end{proof}

Whether the trained map is in fact contractive on the visited tube is
an empirical question about the trained weights, and the quantity that
controls diameters is the \emph{composition}, not the individual units:
per-unit Lipschitz numbers do not compose informatively (singular-vector
alignment matters), and no product of per-unit summaries is a
measurement of the composite. Appendix~\ref{app:numerical} therefore
reports, for a released checkpoint, the largest singular values of the
Jacobian of the \emph{full} row map $\varphi$ at rows visited by every
audit prompt, together with empirical pairwise contraction ratios
$\norm{\varphi(r)-\varphi(r')}/\norm{r-r'}$ within and across prompts.
These are sampled pointwise statistics on visited data: they can support
or refute the contraction hypothesis on the sample, and they do not
constitute a tube-uniform Lipschitz certificate, which would require a
maximum (or a justified high-probability bound) over a specified tube
and remains future work.

\begin{corollary}[End-to-end invariance budget, with evaluable constants]
\label{cor:budget}
Suppose the visited inputs confine the entries of the preactivation
$WA + b_W$ to a compact interval $[-U, U]$ and the entries of $\Alm$ to
$[m_A, M_A]$ with $m_A \ge \epsilon$. (Such $U$ exists on the visited
domain: rows of $A$ lie in the image under the continuous $\varphi$ of
the compact LayerNorm range of Lemma~\ref{lem:ln}(iii), and a priori
$U \le \norm{W}_\infty \sup_{\text{visited}}\norm{A}_{\max}\dk +
\norm{b_W}_{\max}$; sharper, measured values are used in
Appendix~\ref{app:numerical}.) Then the parameter maps
\eqref{eq:metric}--\eqref{eq:ec} satisfy an explicit Lipschitz
estimate in two \emph{named} regimes, distinguished by the lower
endpoint used for $m_A$: with the \emph{measured} minimum entry of
$\Alm$ on the visited states the constant is $C_G^{\mathrm{samp}}$,
valid for every \emph{pair of visited states} (stagewise scalar
mean-value bounds at the endpoint values, whose connecting segments
stay inside the measured ranges); with the architectural floor
$m_A = \epsilon$ it is $C_G^{\mathrm{tube}}$, which bounds the
derivative along \emph{arbitrary} perturbation paths within the
stated bounded domain (the upper bounds $M_A$ and $U$ are hypotheses
along the path), the only
certificate valid off the visited sample. Both share the expression
\[
C_G \;=\; \norm{a}_2 \cdot
\max_{ij}\Bigl(\abs{P_{ij}}\max\bigl(m_A^{P_{ij}-1},
M_A^{P_{ij}-1}\bigr)\Bigr)\cdot
\sup_{\abs{u} \le U}\abs{\iswiglu'(u)}\cdot \norm{W}_2 ,
\]
\enlargethispage{3pt}%
in which every factor is finite,\footnote{The restriction of the
$\iswiglu$-derivative supremum to $[-U,U]$ is essential:
$\iswiglu'(u) \to 2u$ as $u \to +\infty$, so the unrestricted supremum is
infinite and a constant built from it would be vacuous. The
zero-crossing caveat attaches to $C_G^{\mathrm{tube}}$ \emph{only}:
along a continuous input-space path between two visited states the
$\iswiglu$ preactivation can cross zero, where the only lower bound
on $\Alm$ entries is the architectural floor $\epsilon = 10^{-9}$,
at which exponents $P_{ij} < 1$ make $m_A^{P_{ij}-1}$ astronomically
large. It does \emph{not} invalidate $C_G^{\mathrm{samp}}$ for pairs
of visited states: the stagewise argument compares endpoint values
within each stage's measured range, not along any input-space path.
Appendix~\ref{app:numerical}
evaluates both named constants and labels each accordingly.}
with the norms of the conclusion fixed explicitly. The derivative
argument proves the Frobenius-to-Frobenius inequality
\[
\norm{\Delta\Glm}_F \;\le\; C_G\,\norm{\Delta A}_F
\]
(an entry-wise scalar map with derivative bounded by $L$ is
$L$-Lipschitz in Frobenius norm, and left multiplication by $W$ or by
$a$ has induced Frobenius operator norm at most the multiplier's
spectral norm). If the compared contexts satisfy the uniform per-row
bound $\max_i \norm{\Delta A_{i,:}}_2 \le \varepsilon_A$ (the form
delivered by the left side of Proposition~\ref{prop:contract}(ii)'s
transfer display), row aggregation gives
$\norm{\Delta A}_F \le \sqrt{\dk}\,\varepsilon_A$, hence
\[
\norm{\Delta\Glm}_2 \;\le\; \norm{\Delta\Glm}_F
\;\le\; C_G\,\sqrt{\dk}\;\varepsilon_A \;=:\; \varepsilon_G ,
\]
and by Proposition~\ref{prop:perturb} (whose hypothesis is the
spectral norm) the logits of cached and uncached inference differ by
at most $C(\theta)\,\varepsilon_G$.\footnote{The $\sqrt{\dk}$
aggregation cannot be dropped in general: for
$\Delta A = \mathbf{1}v^\top$ with $\norm{v}_2 = 1$, every row obeys
the per-row bound with $\varepsilon_A = 1$ while
$\norm{\Delta A}_2 = \norm{\Delta A}_F = \sqrt{\dk}$ (an
all-rows-equal perturbation that a row-combining $W$ transports
undiminished). A sharper constant would
require a proved $\ell_{2,\infty}$-to-spectral mixed-norm analysis of
the composite map, not undertaken here.
Appendix~\ref{app:numerical} carries the factor $\sqrt{\dk} = 8$
explicitly wherever this conversion is used; its assembled
end-to-end chain enters at a different point, a directly
hypothesized spectral radius
$\norm{\Delta\Glm}_2 = \varepsilon_{\mathrm{spec}}$, and is
unaffected by the conversion.} Two further conversions are required
before comparing with measured quantities: reported
relative-RMS invariances $\varepsilon_{\mathrm{rms}}$ enter the spectral
hypothesis of Proposition~\ref{prop:perturb} only through
$\norm{G - G'}_2 \le \norm{G - G'}_F = \dk\,\norm{G - G'}_{\mathrm{rms}}$
times the normalization scale; and a bound $B$ on the logit perturbation
forces bit-identical \emph{greedy} decoding under the sufficient
condition $2B < \Delta$, where $\Delta$ is the minimum realized top-two
logit margin along the decoded paths: the factor $2$ is necessary in
this criterion, since the leading logit can move down by $B$ while the
runner-up moves up by $B$ (for an $\ell_2$ bound the two-coordinate
sufficient comparison is $\sqrt{2}\,B < \Delta$; for
stochastic sampling, only a distributional bound under coupled
randomness follows). This budget is a certificate only to the extent
that its evaluated value closes the halved margin \emph{and} its
factors are certified on a domain containing every compared state; on
the released
checkpoint audited in Appendix~\ref{app:numerical} the chain, with
every factor evaluated as a sample-extrema proxy (certified analytic
derivative envelopes; all other extrema sampled over the audit
prompts),
assembles to a coefficient tens of orders of magnitude too large to
certify decoding decisions, and the honest summary of cache fidelity
remains
\emph{empirically indistinguishable under tested decoding}, with the
budget quantifying the mechanism rather than certifying the outcome.
\end{corollary}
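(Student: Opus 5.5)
The plan is to follow the chain \eqref{eq:metric}--\eqref{eq:ec} stage by stage. At each stage I bound the perturbation with a Lipschitz estimate in Frobenius norm, then multiply the constants.

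Write $\Delta A = A - A'$ for two states, and set $Z = WA + b_W$, so that $\Delta Z = W\,\Delta A$. Stage 1 is left multiplication by $W$. Since $\norm{WX}_F \le \norm{W}_2\norm{X}_F$ (apply the spectral bound column by column), this gives $\norm{\Delta Z}_F \le \norm{W}_2 \norm{\Delta A}_F$.

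Stage 2 is the entrywise map $\iswiglu(\cdot) + \epsilon$. Both endpoint preactivations lie in $[-U,U]$, an interval, so the scalar segment between them stays there. The mean value theorem then gives $\abs{\Delta(\Alm)_{ij}} \le \sup_{\abs{u}\le U}\abs{\iswiglu'(u)}\,\abs{\Delta Z_{ij}}$, and summing squares gives the Frobenius bound. This supremum is finite because $\iswiglu'(u) = 2u\varsigma(u) + u^2\varsigma(u)(1-\varsigma(u))$ is continuous on a compact set.

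Stage 3 is the entrywise power $t \mapsto t^{P_{ij}}$ on $[m_A, M_A]$. Its derivative $P_{ij}t^{P_{ij}-1}$ is monotone in $t$, so its absolute value is maximized at an endpoint, which yields the factor $\abs{P_{ij}}\max(m_A^{P_{ij}-1}, M_A^{P_{ij}-1})$. Taking the maximum over $(i,j)$ and using the mean value theorem entrywise again gives the Frobenius bound. Stage 4 is $\Glm = a\Ap + b_a$, so $\norm{\Delta\Glm}_F \le \norm{a}_2\norm{\Delta\Ap}_F$, and the bias cancels. Multiplying the four constants gives $C_G$ and the inequality $\norm{\Delta\Glm}_F \le C_G\norm{\Delta A}_F$.

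The two named regimes differ only in which interval the scalar segments are required to lie in. For $C_G^{\mathrm{samp}}$, each pair of visited states has all its entries inside the measured ranges $[m_A, M_A]$ and $[-U,U]$. Each stage compares endpoint values within one coordinate, so only these interval facts are needed. For $C_G^{\mathrm{tube}}$, I take $m_A = \epsilon$, which is justified by Proposition~\ref{prop:positivity}. I then integrate the same pointwise derivative bound along any path whose upper bounds stay within $M_A$ and $U$ by hypothesis.

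For finiteness of $U$, rows of $A$ are $\varphi$-images of the compact set $\Sigma_{\LN}$ from Lemma~\ref{lem:ln}(iii). By continuity these images are bounded, and the bound on $\norm{W}_\infty\norm{A}_{\max}\dk + \norm{b_W}_{\max}$ follows directly.

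Next comes row aggregation: $\norm{\Delta A}_F^2 = \sum_i \norm{\Delta A_{i,:}}_2^2 \le \dk\varepsilon_A^2$. Together with $\norm{\cdot}_2 \le \norm{\cdot}_F$, this gives $\varepsilon_G$. Feeding $\varepsilon_G$ into Proposition~\ref{prop:perturb} gives the logit bound $C(\theta)\varepsilon_G$.

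The remaining conversions are direct. First, $\norm{\cdot}_F = \dk\norm{\cdot}_{\mathrm{rms}}$ on $\dk\times\dk$ tensors. Second, for the decoding criterion: if every logit moves by at most $B$ and the top-two gap is $\Delta > 2B$, then the argmax is unchanged. To see that the factor $2$ is tight, move the leader down by $B$ and the runner-up up by $B$. For an $\ell_2$ bound, the extreme two-coordinate move is $(\!-B/\sqrt2, B/\sqrt2)$, which gives the criterion $\sqrt2 B < \Delta$.

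The main obstacle is the domain bookkeeping in stage 3 rather than any computation. When exponents satisfy $P_{ij} < 1$, the derivative blows up near $\epsilon$. I would therefore state the sampled constant strictly as a pairwise endpoint statement, and make explicit that the tube constant is only as valid as the assumed upper bounds along the path. The final claims about numerical magnitudes are not proved here; they are deferred to the audit in Appendix~\ref{app:numerical}.
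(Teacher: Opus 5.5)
Your proposal is correct and is essentially the paper's proof. Both compose the stagewise Lipschitz factors and read off the row aggregation, norm conversions, and the $2B<\Delta$ criterion directly: $\norm{W}_2$ for the left action of $W$, the supremum of $\abs{\iswiglu'}$ on $[-U,U]$, the power-law derivative $\abs{P_{ij}}\,t^{P_{ij}-1}$ extremized at the endpoints of $[m_A,M_A]$, and $\norm{a}_2$, with the biases cancelling (incidentally, your ``follows directly'' for the a priori bound on $U$ holds with room to spare, since $\sum_k \abs{W_{ik}} \le \norm{W}_\infty$ gives it without the factor $\dk$).
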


\begin{proof}
Compose the Lipschitz constants of the three maps on the stated domain:
$A \mapsto WA + b_W$ contributes $\norm{W}_2$; the entry-wise
$\iswiglu$ contributes its derivative bound on $[-U,U]$; the entry-wise
power \eqref{eq:hadpower} has derivative $P_{ij} u^{P_{ij}-1}$,
extremized at the ends of $[m_A, M_A]$; the left action of $a$
contributes $\norm{a}_2$; biases drop out of differences. Finiteness of
each factor is by compactness of the stated domain.
\end{proof}

\subsubsection{Why training at criticality might select the constant map:
a hypothesis}

The preceding results reduce the invariance question to: \emph{why does
gradient training drive the row map $\varphi$ into the contractive,
constant-output regime?} A derivation from the training dynamics is open
(Section~\ref{sec:discussion}, Problem 1). We state the proposed
selection mechanism explicitly as a hypothesis, with the epistemic status
of each ingredient tagged; it is consistent with the observations cited
but is not derived from them.

\begin{hypothesis}[Selection of the constant map at criticality]
\label{hyp:selection}
\leavevmode
\begin{enumerate}[leftmargin=2em]
\item \textbf{Expressivity permits it} (proved, for the collapsed
instance). By Theorem~\ref{thm:collapse}(ii) and
Remark~\ref{rem:asymmetry-scope}, replacing an (approximately) constant
$\Glm$ by its constant value preserves the trained model's inference
map: all instance-specific information can be carried by the linear
$Q, K, V$ path. The constant-map manifold is therefore loss-competitive
\emph{for models near it}.
\item \textbf{The gradient signal favors it} (heuristic). The loss
reaches $\varphi$ through batch averages of the concentrated input
(Proposition~\ref{prop:conc}, under its idealization): the
instance-specific component of $\partial\mathcal{L}/\partial\varphi$ is
argued to have $O(S^{-1/2})$ leverage relative to its dataset-mean
component. This is a plausibility argument: concentrated inputs do not
by themselves force instance-specific gradients to be negligible, and no
measurement of the gradient decomposition has been made.
\item \textbf{Stability at criticality enforces it} (heuristic). An
input-sensitive $\varphi$ (Proposition~\ref{prop:contract}(iii)) is
argued to have larger Jacobians through the deductive path, hence larger
loss curvature in the PLGA parameters, so that at the large maximum
learning rates of the near-critical regime such directions are annealed
away or trigger dragon-king events \cite{gokden2026,sornette2012}, while
at small learning rates (sub-critical) input-sensitive solutions survive
and the order parameter grows, as observed \cite{gokden2026}. The
curvature claim is not established; input sensitivity does not in
general imply larger loss curvature.
\end{enumerate}
\end{hypothesis}

Three empirical signatures discriminate this hypothesis from alternatives:
(a) the \emph{cross-head identity} of $A$ \cite{gokden2025} is naturally
explained by collapse in the shared row map (Stage 3) and not by
head-specific statistics, and the Stage-3 reading is directly supported
on the audited checkpoint by the composite-Jacobian and pairwise
measurements of Appendix~\ref{app:numerical}; (b) invariance \emph{improves with training data}
($m \to 0$ at float resolution for the $41$B-token model
\cite{gokden2026}), consistent with longer annealing of Stage 3 (note
that training-token count is distinct from the context length $S$ that
drives Stage 1--2 concentration); and (c) DAG
regularization, which constrains the downstream tensors and hence deforms
the attractor, measurably \emph{increases} the caching perturbation
\cite{gokden2025}, a trade-off expected if invariance is an attractor
property rather than an architectural identity.

\section{Power Laws, Scale Invariance, and Self-Organized Criticality}
\label{sec:criticality}

\subsection{Why power laws: the unique scale-equivariant interaction}

\begin{proposition}[Scale covariance forces power laws]
\label{prop:scalecov}
Let $f : (0,\infty) \to (0,\infty)$ be measurable and suppose there exists
$g$ with $f(\lambda u) = g(\lambda) f(u)$ for all $\lambda, u > 0$
(\emph{scale covariance}: rescaling the input rescales the output
independently of $u$). Then there exist $c > 0$ and $p \in \R$ with
$f(u) = c\,u^{p}$ and $g(\lambda) = \lambda^p$.
\end{proposition}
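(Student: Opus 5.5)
The plan is to reduce scale covariance to the multiplicative Cauchy equation and then solve it with the classical measurable-solutions theorem.

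\textbf{Step 1: identify $f$ and $g$ through one normalization.} Setting $u=1$ gives $f(\lambda)=g(\lambda)f(1)$. Put $c=f(1)>0$; this is positive because $f$ takes values in $(0,\infty)$. Then $g(\lambda)=f(\lambda)/c$, so $g$ is measurable and strictly positive, being a positive multiple of $f$.

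\textbf{Step 2: derive the multiplicative Cauchy equation.} Substituting $f=cg$ back into the covariance relation gives $c\,g(\lambda u)=g(\lambda)\,c\,g(u)$. Hence $g(\lambda u)=g(\lambda)g(u)$ for all $\lambda,u>0$, with $g(1)=1$.

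\textbf{Step 3: pass to the additive equation.} Define $h(t)=\log g(e^t)$ on $\R$. This is well defined because $g>0$, and it is measurable as a composition of a measurable map with continuous ones. The equation becomes $h(s+t)=h(s)+h(t)$.

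\textbf{Step 4: invoke regularity of measurable Cauchy solutions.} A measurable additive function on $\R$ is linear: $h(t)=pt$ for some $p\in\R$.

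This is the only nontrivial step and the expected obstacle, since without measurability there are Hamel-basis pathologies. I would cite the classical theorem (Fr\'echet/Banach/Sierpi\'nski). For a self-contained route I would first try the Steinhaus argument:
\begin{itemize}[leftmargin=2em]
\item Choose $N$ with $E=\{\abs{h}\le N\}$ of positive measure.
\item By Steinhaus, $E-E$ contains an interval $(-\eta,\eta)$, so $h$ is bounded by $2N$ there.
\item Additivity then gives continuity at $0$, hence everywhere.
\item Additivity gives $h(qt)=q\,h(t)$ for rational $q$; combined with continuity this yields linearity.
\end{itemize}

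\textbf{Step 5: unwind.} Linearity gives $g(\lambda)=e^{h(\log\lambda)}=\lambda^{p}$, and then $f(u)=c\,u^{p}$. Conversely, every such pair satisfies the covariance relation, so the characterization is exact. I would also note that measurability cannot be dropped, since nonmeasurable additive $h$ yield non-power-law solutions; this is why the hypothesis appears in the statement.
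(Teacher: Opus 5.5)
Your proposal is correct and follows the paper's own argument. Like the paper, you set $u=1$ to get $g=f/f(1)$, reduce to the multiplicative Cauchy equation, and invoke the classical theorem on its measurable solutions; the only difference is that the paper cites this directly from Acz\'el, while you pass to the additive form via logarithms and sketch the Steinhaus proof. One small precision: if ``measurable'' means Lebesgue measurable, $g\circ\exp$ is measurable not because $\exp$ is continuous (inner composition with a continuous map need not preserve Lebesgue measurability) but because $\exp$ is a $C^1$ diffeomorphism whose inverse maps null sets to null sets; for Borel measurable $f$ your justification is fine as stated.
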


\begin{proof}
Setting $u=1$: $f(\lambda) = g(\lambda) f(1)$, so $g = f/f(1)$ and $g$
satisfies $g(\lambda\mu) = g(\lambda)g(\mu)$. Every measurable solution of
the multiplicative Cauchy equation on $(0,\infty)$ has the form
$g(\lambda) = \lambda^p$
for some real $p$ \cite[Ch.~2]{aczel1966}. Then $f(u) = f(1) u^p$.
\end{proof}

\begin{corollary}[Elementwise scale covariance]
\label{cor:whypower}
Among measurable \emph{element-wise} maps of a single positive entry, the
family $u \mapsto c\,u^{p}$ used in \eqref{eq:potential} is, up to the
learned constants, the only one that transforms covariantly under
rescaling of its argument. In this element-wise sense the potential stage
is the minimal scale-covariant interaction ansatz.
\end{corollary}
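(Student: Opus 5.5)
The plan is to reduce Corollary~\ref{cor:whypower} directly to Proposition~\ref{prop:scalecov}. First I will make precise what ``transforms covariantly under rescaling of its argument'' means for an element-wise map of one positive entry: a measurable $f:(0,\infty)\to(0,\infty)$ is scale-covariant if there is a function $g$ with $f(\lambda u)=g(\lambda)f(u)$ for all $\lambda,u>0$. This is exactly the hypothesis of Proposition~\ref{prop:scalecov}. The codomain restriction to $(0,\infty)$ is natural in this setting, because the potential stage acts on strictly positive entries: $\Alm\ge\epsilon$ by Proposition~\ref{prop:positivity}, and its outputs $\Ap$ are positive. I will state this scope explicitly, so that the ``only one'' claim is asserted only within this class and not for sign-changing maps.

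Given that definition, the forward direction is immediate. Proposition~\ref{prop:scalecov} yields $f(u)=c\,u^{p}$ with $c=f(1)>0$ and $p\in\R$. The converse direction must also be checked so that the family is exactly characterized. Every $u\mapsto c\,u^{p}$ with $c>0$ is measurable, maps $(0,\infty)$ to itself, and satisfies the covariance relation with $g(\lambda)=\lambda^{p}$, since $c(\lambda u)^{p}=\lambda^{p}\,c\,u^{p}$.

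The last step matches this family to \eqref{eq:potential}. Entry $(i,j)$ of $\Ap$ is $u\mapsto u^{P_{ij}}$, which is the case $c=1$, $p=P_{ij}$. The multiplicative constant $c$ is not lost. Row $i$ of $\Glm=a\Ap+b_a$ combines the entries $(\Ap)_{kj}$ through the learned couplings $a_{ik}$, so a per-entry scale enters the architecture through $a$. That is the sense of ``up to the learned constants'' in the statement. I will phrase this as an identification of the ansatz, namely an exponent learned per entry with scale absorbed downstream. I will not claim that $a$ reproduces an arbitrary entrywise $c_{kj}$: a left multiplication by $a$ can rescale only by row $k$, so I will keep the corollary strictly element-wise, as its statement already does.

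I do not expect a real obstacle. The only delicate point is interpretive: ``minimal'' should be read as ``unique within the scale-covariant measurable class,'' and the measurability hypothesis must be kept. Without it, Hamel-basis solutions of the Cauchy equation give non-power covariant maps, so I will note that measurability is where Proposition~\ref{prop:scalecov}'s appeal to \cite{aczel1966} does its work, and that the potential maps are continuous, hence measurable.
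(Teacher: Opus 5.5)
Your proposal is correct and is essentially the paper's argument: the paper states the corollary without a separate proof, as a direct specialization of Proposition~\ref{prop:scalecov} (its only listed proof input), and your reduction (the same covariance hypothesis, the forward direction from that proposition, the converse via $c(\lambda u)^{p}=\lambda^{p}\,c\,u^{p}$, and the identification $c=1$, $p=P_{ij}$ in \eqref{eq:potential}) is exactly that. Your added caveats are sound and consistent with the element-wise scope of Remark~\ref{rem:scalefree-scope}: measurability is essential, and left multiplication by $a$ can absorb only row scales $c_k$, not arbitrary entrywise $c_{kj}$.
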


\begin{remark}[What the corollary does not say]
\label{rem:scalefree-scope}
Corollary~\ref{cor:whypower} is a statement about one scalar entry at a
time. It does \emph{not} make the architecture globally scale-covariant:
under a global rescaling $\Alm \mapsto \lambda\Alm$ the entries of $\Ap$
transform as $\lambda^{P_{ij}}(\Ap)_{ij}$ with heterogeneous exponents,
which is not a common covariance law unless the relevant $P_{ij}$
coincide or a vector-valued group action is specified; and the additive
biases $b_W, b_a$, the $\epsilon = 10^{-9}$ floor, the LayerNorm
$\varepsilon_{\LN}$, and the mixing step all introduce scales. For the
same reason we refer to the learned exponents $P_{ij}$ as \emph{learned
exponents of an elementwise power feature map}; calling them physical
``scaling dimensions'' would require an identified symmetry action on
inputs and outputs, an identifiability/gauge analysis, and empirical
covariance under that action, none of which is currently available (the
scaling-dimension reading is used below only as an explicitly labeled
analogy). Likewise, the phrase ``scale-free attention'' is used in this
paper only for the element-wise potential stage, not for the
architecture as a whole. Scale-free interactions are the signature of
critical systems \cite{stanley1999,newman2005}, of the renormalization
group at fixed points \cite{wilson1974,goldenfeld1992}, and of natural
language statistics \cite{zipf1949}; the connection of PLGA to them is
by construction of the potential stage, and by analogy beyond it.
\end{remark}

\subsection{Criticality of the attention dynamics: a conditional spectral
dictionary}

The claim ``the correlation length diverges at criticality''
\cite{gokden2026} suggests an operator reading through the Markov
structure of Proposition~\ref{prop:markov}. Making it precise requires
care on three points. (a) A fixed finite matrix has an atomic spectral
measure and cannot carry a density accumulating at $1$, so the critical
case can only concern an infinite-dimensional operator or a limit of a
family. (b) A row-stochastic matrix is generally nonnormal, so
eigen-expansions require eigenbasis conditioning and the stationary
inner product. (c) The $t$-th spectral moment is the autocorrelation
$\ip{f}{E^t f}$, not the norm $\norm{E^t f}$, whose decay exponent
differs by a factor of two. The statement below builds all three into
its hypotheses: it is a \emph{dictionary lemma} for a family of
reversible operators, and its application to PLDR-LLM is
Conjecture~\ref{conj:spectral}.

\begin{proposition}[Spectral gap vs.\ critical slowing down, for
reversible families]
\label{prop:gap}
Let $(E_d)_{d}$ be a family of Markov operators, each reversible with
respect to a stationary distribution $\pi_d$ (hence self-adjoint on
$L^2(\pi_d)$ with real spectrum in $[-1,1]$), and let $f_d$ be
observables with $\pi_d$-mean zero, normalized in $L^2(\pi_d)$, with
spectral measures $\mu_{f_d}$ (with respect to $E_d$).
\begin{enumerate}[label=(\roman*),leftmargin=2.2em]
\item (Gapped/off-critical.) If the spectral edges are uniformly
controlled, $\operatorname{supp}\mu_{f_d} \subseteq [-1+\delta', 1-\delta]$
with $\delta, \delta' > 0$, then the autocorrelation obeys
\[
\abs{\ip{f_d}{E_d^t f_d}_{\pi_d}} \;\le\; r^t, \qquad
r = \max(1-\delta,\; 1-\delta'),
\]
with correlation ``length'' $\xi = -1/\log r$: exponential decay with a
finite scale. \emph{Both} edges enter: reversibility does not exclude
spectrum near $-1$, and an eigenvalue there produces slowly decaying
alternating correlations even under a large gap at
$+1$.\footnote{Omitting the lower edge leads to a genuine error, not a
technicality: for the reversible two-state chain
$E = \bigl(\begin{smallmatrix}0.05&0.95\\0.95&0.05\end{smallmatrix}\bigr)$
with uniform $\pi$, the normalized mean-zero observable has eigenvalue
$-0.9$ and autocorrelation $(-0.9)^t$, so with $\delta' = 0.1$,
$\delta = 0.5$ the one-sided bound $(1-\delta)^t$ fails already at
$t = 1$ ($0.9 \not\le 0.5$). This is near-periodicity, not
nonnormality.}
\item (Critical.) Assume additionally a uniform gap at the lower edge:
$\operatorname{supp}\mu_{f_d} \subseteq [-1+\delta_0, 1]$ for a fixed
$\delta_0 > 0$ independent of $d$ (automatic for lazy or positive
semidefinite families, whose spectrum lies in $[0,1]$), and that
$\mu_{f_d} \to \mu$ weakly, so that \emph{in the iterated sense that
the family limit is taken first at each fixed $t$ and the
$t \to \infty$ asymptotics are those of the limit measure},
$\lim_{d\to\infty}\ip{f_d}{E_d^t f_d} = \int \lambda^t\, d\mu$. Gap
closing with power law accumulation at the upper edge (necessarily
along the family: each finite-$d$ measure is atomic) then yields
power law decay, at two levels of hypothesis:
\begin{enumerate}[label=(ii.\alph*),leftmargin=2.4em]
\item (Comparability.) If
$d\mu(1-s) \asymp s^{\vartheta-1}\,ds$ as $s \downarrow 0$ (two-sided
bounds with unspecified positive constants), then
\[
\int \lambda^t\, d\mu \;=\; \Theta\bigl(t^{-\vartheta}\bigr),
\qquad
\Bigl(\int \lambda^{2t}\, d\mu\Bigr)^{1/2}
\;=\; \Theta\bigl(t^{-\vartheta/2}\bigr),
\qquad t\to\infty:
\]
the decay \emph{exponent} is determined, the coefficient is not, and
no asymptotic equivalent is implied (comparability constants may
oscillate).
\item (Exact edge density.) If moreover
$d\mu(1-s) = (c + o(1))\, s^{\vartheta-1}\,ds$ as $s \downarrow 0$ for
some constant $c > 0$, with no singular component at the edge, then
\[
\int \lambda^t\, d\mu \;\sim\; c\,\Gamma(\vartheta)\, t^{-\vartheta},
\qquad
\Bigl(\int \lambda^{2t}\, d\mu\Bigr)^{1/2} \;\sim\; c_\vartheta\,
t^{-\vartheta/2},
\qquad
c_\vartheta = \sqrt{c\,\Gamma(\vartheta)}\; 2^{-\vartheta/2}.
\]
\end{enumerate}
In either case correlations decay as a \emph{power law} with no
characteristic scale; $\xi = \infty$. (The autocorrelation and the
norm decay with different exponents; both are recorded to prevent
conflation. No uniformity in $t$ of the family convergence is claimed;
without the lower-edge gap the conclusion fails, e.g.\ for an atom at
$-1$.)\footnote{The two-level split is forced: under
comparability alone the exact equivalent is false (a normalized
measure whose edge density is $2s^{\vartheta-1}$ satisfies
$d\mu(1-s) \asymp s^{\vartheta-1}ds$ yet contributes
$2\Gamma(\vartheta)t^{-\vartheta}$, and oscillating comparability
constants can prevent any asymptotic equivalent from existing), so
(ii.a) claims only $\Theta$, and the exact constants live in (ii.b),
where the density hypothesis supports them.}
\end{enumerate}
\end{proposition}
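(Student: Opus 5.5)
The whole argument runs through the spectral theorem for the self-adjoint operator $E_d$ on $L^2(\pi_d)$. Reversibility makes $E_d$ self-adjoint there, and the Markov property gives spectrum in $[-1,1]$. For each observable $f_d$ there is then a scalar spectral measure $\mu_{f_d}$ with $\ip{f_d}{g(E_d) f_d}_{\pi_d} = \int g\,d\mu_{f_d}$ for bounded Borel $g$. Since $f_d$ is normalized, $\mu_{f_d}$ is a probability measure, and $\ip{f_d}{E_d^t f_d}_{\pi_d} = \int \lambda^t\,d\mu_{f_d}(\lambda)$. Every claim then reduces to estimating moments of a measure on $[-1,1]$.

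\emph{Part (i).} If $\operatorname{supp}\mu_{f_d} \subseteq [-1+\delta', 1-\delta]$, then $\abs{\lambda^t} \le r^t$ on the support with $r = \max(1-\delta, 1-\delta')$. Integrating against a probability measure gives the bound, and $\xi = -1/\log r$ is a definition. The footnote example is a check of the lower edge only: the two-state chain has spectrum $\{1, -0.9\}$, and the mean-zero normalized observable is the $-0.9$ eigenvector.

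\emph{Part (ii).} Under the family hypotheses, weak convergence together with continuity of $\lambda^t$ on the compact interval gives $\lim_d \int\lambda^t d\mu_{f_d} = \int \lambda^t d\mu$ for each fixed $t$. The limit measure inherits $\operatorname{supp}\mu \subseteq [-1+\delta_0, 1]$ because that set is closed. I then split $\int\lambda^t d\mu$ into three ranges:
\begin{itemize}
\item On $[-1+\delta_0, 0]$ the contribution is bounded by $(1-\delta_0)^t$, which is exponentially small.
\item On $[0, 1-\eta]$, for a fixed small $\eta$, it is bounded by $(1-\eta)^t$, also exponentially small.
\item The edge window $\lambda = 1-s$ with $s \in [0,\eta]$ carries the power law.
\end{itemize}
In the window, $(1-s)^t$ is sandwiched between $e^{-ts(1+O(\eta))}$ and $e^{-ts}$. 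Under comparability (ii.a) I bound the integral above and below by constants times $\int_0^\eta e^{-ts}s^{\vartheta-1}ds$. After the substitution $u = ts$ this equals $t^{-\vartheta}\int_0^{t\eta} e^{-u}u^{\vartheta-1}du = \Theta(t^{-\vartheta})$. The exponentially small pieces are negligible against this. Both bounds must be positive, which holds because the integrand is nonnegative on the window, while the other pieces are $O(e^{-ct})$ in absolute value. The norm statement follows by the same argument applied to $\lambda^{2t}$, giving $\Theta((2t)^{-\vartheta}) = \Theta(t^{-\vartheta})$, whose square root is $\Theta(t^{-\vartheta/2})$.

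For the exact edge density (ii.b), I fix $\varepsilon>0$ and choose $\eta$ so that the density lies in $(c\pm\varepsilon)s^{\vartheta-1}$ on $[0,\eta]$. The absence of a singular edge component makes the window mass exactly this absolutely continuous part. A Watson-lemma computation, using $(1-s)^t = e^{-ts + O(ts^2)}$ and noting that the main contribution comes from $s \lesssim t^{-1}\log t$ where $ts^2 \to 0$, gives $t^{\vartheta}\int \lambda^t d\mu \to$ a value within $O(\varepsilon)$ of $c\,\Gamma(\vartheta)$. Letting $\varepsilon\to0$ yields the asymptotic. Replacing $t$ by $2t$ gives $c\,\Gamma(\vartheta)(2t)^{-\vartheta}$, whose square root is $c_\vartheta t^{-\vartheta/2}$.

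The main obstacle is the Watson step in (ii.b): the error from replacing $(1-s)^t$ by $e^{-ts}$ must be controlled uniformly on the window. I would handle it with the elementary inequalities $e^{-ts/(1-s)} \le (1-s)^t \le e^{-ts}$ for $s\in[0,\eta]$, rescaling $s$ by the factor $1/(1-\eta)$ in the lower bound, so that both bounds differ from $c\,\Gamma(\vartheta)t^{-\vartheta}$ only by factors $(1\pm O(\eta))$. The failure remarks (an atom at $-1$, and oscillating comparability constants) are handled by direct inspection of the examples in the footnotes.
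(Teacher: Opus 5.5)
Your proposal is correct and follows essentially the paper's route: spectral calculus for the self-adjoint $E_d$, the support bound for (i), weak convergence at each fixed $t$, an exponentially small bulk split off at $1-\eta$, and the power law extracted from the edge window after the substitution $\lambda = 1-s$. The only difference is technical: the paper evaluates the edge integral through $B(t+1,\vartheta)\sim\Gamma(\vartheta)t^{-\vartheta}$ (Stirling) and uses dominated convergence for the exact constant, whereas you use the sandwich $e^{-ts/(1-s)}\le(1-s)^t\le e^{-ts}$ to reduce to an incomplete Gamma integral with an $\varepsilon$--$\eta$ squeeze, which is equally rigorous; you also state explicitly, via closedness of $[-1+\delta_0,1]$, that the limit measure $\mu$ inherits the lower-edge support bound, which the paper uses only implicitly.
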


\begin{proof}
Both parts are the spectral calculus of a self-adjoint contraction:
$\ip{f}{E^t f} = \int \lambda^t\, d\mu_f(\lambda)$ and
$\norm{E^t f}^2 = \int \lambda^{2t}\, d\mu_f(\lambda)$. (i): on the
support, $\abs{\lambda} \le \max(1-\delta, 1-\delta') = r$, so
$\abs{\int \lambda^t d\mu_f} \le r^t$. (ii): for fixed $t$,
$\lambda \mapsto \lambda^t$ is bounded and continuous on $[-1,1]$, so
weak convergence of the compactly supported $\mu_{f_d}$ gives the limit
$\int \lambda^t d\mu$. Splitting $\mu$ at $1 - \eta$ for any fixed
$\eta \in (0, \min\{\delta_0, 1\})$: the contribution of
$[-1+\delta_0, 1-\eta]$ is bounded in absolute value by
$\max(1-\eta, 1-\delta_0)^t$, exponentially small. Near the upper edge
the substitution $\lambda = 1-s$ gives Beta-type integrals:
$\int_0^\eta (1-s)^t s^{\vartheta-1} ds \sim \Gamma(\vartheta)
t^{-\vartheta}$ (via
$B(t+1,\vartheta) = \Gamma(t+1)\Gamma(\vartheta)/\Gamma(t+1+\vartheta)
\sim \Gamma(\vartheta)\,t^{-\vartheta}$, Stirling). (ii.a): the
hypothesis sandwiches the edge contribution between $c_1$ and $c_2$
times this integral for some $0 < c_1 \le c_2$, giving the $\Theta$
bounds and nothing stronger. (ii.b): writing the edge density as
$(c + o(1))s^{\vartheta-1}$ and splitting off the $o(1)$ factor at a
radius where it is uniformly small, dominated convergence carries the
constant through, $\int \lambda^t d\mu \sim c\,\Gamma(\vartheta)
t^{-\vartheta}$. The norm computations replace $t$ by $2t$; in (ii.b)
this gives $c\,\Gamma(\vartheta)2^{-\vartheta}t^{-\vartheta}$ inside
the square root, i.e.\ the displayed $c_\vartheta$.
\end{proof}

Whether this dictionary describes PLDR-LLM is an open question, for two
reasons: trained attention matrices are neither reversible nor constant
across layers and tokens (repeated powers of a single $E$ do not model a
depth-$L$ network), and no spectral measurement of trained operators
near criticality has been published. We therefore record the intended
application as Conjecture~\ref{conj:spectral} and use the following
correspondence only as interpretive language: \emph{sub-critical
training} $\leftrightarrow$ gapped learned operators, exponential decay
of influence; \emph{critical training} $\leftrightarrow$ gap closing
with power law spectral accumulation, scale-free propagation of
constraints across the context.

\subsection{The SOC training picture as a phenomenological framework}
\label{subsec:soc}

Following \cite{gokden2026}, pretraining is modeled as a slowly driven
dissipative system in the sense of \cite{bak1988,dickman2000}. We
emphasize the epistemic status before the definitions: what the published
experiments establish is \emph{critical-like optimizer phenomenology
associated with low sampled deductive-output fluctuation}, observed in
single training runs per condition with externally tuned schedules.
Establishing self-organized criticality in the technical sense would
additionally require an identified self-tuning feedback mechanism,
separation of drive and relaxation scales, and standard discriminants
(finite-size scaling and data collapse, susceptibility, avalanche or
$1/f$ statistics), none of which has been measured; see the empirical
program in Section~\ref{sec:discussion}. The definitions below therefore
fix the \emph{vocabulary} of the source papers as a phenomenological
framework, not as established physics.

\begin{definition}[Control and order parameters of PLDR-LLM pretraining;
phenomenological]
\label{def:soc}
Let $\eta_{\max}$ be the maximum learning rate and $T_w$ the linear warm-up
step count of the schedule (cosine annealing to $0.1\,\eta_{\max}$). The pair
$(\eta_{\max}, T_w)$ are the \emph{control parameters}: token batches under
forward propagation are the slow external drive; gradient updates under
backward propagation are the dissipation. A trained model is:
\begin{itemize}[leftmargin=2em]
\item \emph{near-critical} if its order parameter \eqref{eq:orderparam}
satisfies $m(\theta) \approx 0$ (empirically $\lesssim 10^{-2}$, typically
$\le 10^{-5}$) while text generation is non-degenerate;
\item \emph{sub-critical} if $m(\theta) = O(1)$ or larger; loss is lower
(overfit-like) but generation degenerates and benchmark scores drop to
near-chance \cite{gokden2026};
\item subject to \emph{dragon-king events} \cite{sornette2012} (sharp loss
spikes from self-amplifying drive/dissipation imbalance), which mark
departures from power law criticality and degrade the final state even when
the loss trajectory recovers \cite{gokden2026}.
\end{itemize}
\end{definition}

The phenomenology reported in \cite{gokden2026} is \emph{consistent with}
a second-order-transition reading: near-critical models across a range of
$(\eta_{\max}, T_w)$ collapse onto nearly identical loss trajectories;
the deductive outputs reach a metastable steady state, the
$\varepsilon$-invariant operators of Section~\ref{sec:invariance}; and
proximity of $m(\theta)$ to zero separates the observed phases in
agreement with benchmark rankings at the phase level (not strictly
monotonically within phases; see after
Remark~\ref{rem:cacheable-converse}). Three caveats bound what this
shows. The near/sub-critical labels are substantially defined through the
order parameter and generation quality they are then used to explain, so
an independent phase criterion fixed in advance is needed to break the
circularity; the $(\eta_{\max}, T_w)$ pairs are externally selected, so
the evidence shows schedule-tuned critical-like behavior rather than
self-organization; and each condition has a single training run with no
uncertainty quantification. Within those limits, the parallel with
criticality hypotheses for cortical dynamics \cite{beggs2003,hesse2014}
and with the ubiquity of SOC in natural systems \cite{markovic2014}
motivates, but does not support beyond motivation, the conjecture that
invariant operators transfer across domains within a universality class
(Conjecture~\ref{conj:functorial}).

\begin{remark}[RG reading of the architecture; analogy]
The composition depth of the model implements a sequence of
coarse-grainings of the token field (the layer maps are distinct; no
semigroup or closed composition family is claimed); the learned
exponents $P$ play the role of scaling
dimensions in the analogical sense of
Remark~\ref{rem:scalefree-scope}; invariance of $\Glm$ under change
of instance is fixed-point behavior under the ``flow'' of data; and
universality (insensitivity of the critical trajectory to microscopic
details like tokenizer choice, observed in the ablations of
\cite{gokden2024}) is the hallmark of an RG fixed point
\cite{wilson1974,goldenfeld1992}. We do not claim a derived RG map; we
record the correspondence because it is the frame in which the
operator-transfer conjecture (Conjecture~\ref{conj:functorial}) is natural.
\end{remark}

\section{Advantages of PLDR-LLM over SDPA-LLM}
\label{sec:advantages}

Theorem~\ref{thm:collapse}(i) places the two architectures inside a single
family: an SDPA-LLM is the point of PLDR-LLM model space at which the
operator $\Glm$ is pinned to the identity. The comparison in this
section is therefore not between rival designs but between a family and its
base point, and the one-sentence form of the whole comparison is:
\emph{PLDR-LLM learns and exposes an operator sector that SDPA fixes in
head space, at additional training cost}. Every advantage below is a
statement about what the \emph{learned}
operator sector buys relative to the \emph{frozen} one, and every claim
is tied to a result of this paper \emph{at that result's epistemic
level} (several are conditional or prospective, and are so marked) and,
where available, to an experimental anchor in the source papers. We state
the costs with equal explicitness (\S\ref{subsec:costs}). The comparison
is summarized first:

\begin{center}
\small
\begin{tabular}{p{0.30\textwidth}p{0.30\textwidth}p{0.30\textwidth}}
\toprule
Advantage & Formal basis & Empirical anchor \\
\midrule
Larger parameterized family with distinct training dynamics &
Thm.~\ref{thm:collapse}(i),(iii); Rem.~\ref{rem:asymmetry-scope};
\S\ref{subsec:bilinear} regime 3 &
Learned $\Glm$ outperforms identity/random/transferred constants under
matched training \cite{gokden2025} \\
Head-level positional operator codimension, present even after caching
(conditional) &
Cor.~\ref{cor:posgap}; Prop.~\ref{prop:rope-commutant} &
No anchor yet (diagnostic proposed; Open Problem 3) \\
Inspectable operator representation with proved structure &
\S\ref{sec:plga} propositions; Cor.~\ref{cor:whypower} &
DAG loss as metric/regularizer improves benchmarks without scaling
\cite{gokden2024} \\
Intrinsic evaluation diagnostic (prospective) &
Def.~\ref{def:orderparam}; Prop.~\ref{prop:cacheable};
Rem.~\ref{rem:vacuous} &
$m(\theta)$ separates phases, agrees with benchmarks at phase level
\cite{gokden2026}; validation protocol open \\
Phase-aware, self-instrumented training &
Def.~\ref{def:soc}; Prop.~\ref{prop:gap} (conjectural reading) &
Loss curves fail to discriminate phases; $m(\theta)$ succeeds in the
published sample \cite{gokden2026} \\
Fast inference via caching exact relative to KV-cache; PLGA weights
omittable from a fixed-inference package &
\S\ref{subsec:cache}; Prop.~\ref{prop:perturb}; Cor.~\ref{cor:budget}
(bounds evaluated, App.~\ref{app:numerical}) &
$\sim3\times$ speedup; scores bit-identical on tested workloads
\cite{gokden2025}; cross-stack speed comparison confounded \\
Deployment asymmetry (proposed; no threat model analyzed) &
Thm.~\ref{thm:collapse}(iii) &
Proposed in \cite{gokden2025} \\
Elementwise scale-covariant stage; transfer program &
Cor.~\ref{cor:whypower}; Rem.~\ref{rem:scalefree-scope};
Conj.~\ref{conj:functorial} &
Prospective (Conjecture~\ref{conj:functorial}) \\
\bottomrule
\end{tabular}
\end{center}

\subsection{A larger parameterized family with distinct training
dynamics}
\label{subsec:adv-family}

Three claims must be kept separate here. (1) \emph{Algebraic inclusion}
(proved, machine-checked): SDPA is exactly the $\Glm = I$ point of the
family (Theorem~\ref{thm:collapse}(i)). (2) \emph{Distinct training
dynamics} (generic): at training time PLGA operates in regime 3 of
\S\ref{subsec:bilinear}, the score map is generically nonlinear in the
input (degenerate choices such as $a = 0$ linearize it,
\S\ref{subsec:bilinear}), and gradients flow through the metric
learner and the power law
parameters, so by Theorem~\ref{thm:collapse}(iii), \emph{where its
nonvanishing hypothesis holds}, the two
parameterizations induce different gradient flows. (3) \emph{Strict
function-class containment at matched resources} (not proved): different
training dynamics do not show that no SDPA-LLM realizes the same
input--output function; a nonrepresentability theorem at fixed
depth/width/positional scheme would be required, and we do not have one
(Remark~\ref{rem:asymmetry-scope}). The claims below rest on (1) and
(2) only. The ablations of \cite{gokden2025} instantiate (2)
experimentally: under matched data and schedules, the model
with a learnable $\Glm$ attains the best average benchmark scores
(under the published one-pass block protocol,
Appendix~\ref{app:repos}), ahead
of models trained with the operator frozen to a transferred, identity
(i.e.\ SDPA-equivalent), or random constant, and its loss trajectory is
distinct, tracking the transferred-operator model early in training and
the identity-operator model late, a signature of the learned operator
moving through the family rather than sitting at a fixed point of it.

Even at \emph{inference}, after the collapse of
Theorem~\ref{thm:collapse}(ii), the cached model's raw score functions
generically remain outside those realizable by an SDPA head, and the
difference can be counted at the level of head-space operators:

\begin{corollary}[Head-level positional operator codimension;
conditional]
\label{cor:posgap}
Assume the nonresonance hypothesis of
Proposition~\ref{prop:rope-commutant}, and consider score functions
$(n,m,q,k) \mapsto q^\top R_{-n} G R_m k$ with the arguments $q, k$
ranging over all of $\R^{\dk}$ (\emph{unrestricted-reachability
assumption}) and the RoPE parameterization fixed. Then the map
$G \mapsto \bigl[(n,m,q,k) \mapsto q^\top R_{-n} G R_m k\bigr]$ from
operators to score functions is injective, and the frozen-$G$ PLGA head
realizes an SDPA-realizable score function (for any linear
reparametrization of the projections applied before rotation) if and only
if $G$ lies in the RoPE commutant. Consequently, \emph{inside} the
$\dk^2$-dimensional frozen-$G$ PLGA family, the score functions also
realizable by pre-RoPE-linearly-reparameterized SDPA form exactly the
commutant subfamily, of real dimension $2 \cdot \dk/2 = \dk$, hence
have codimension
\[
\dk^2 - \dk \quad\text{real dimensions per head}
\]
within it, all complementary directions being
absolute-position-sensitive. This is a codimension statement about the
SDPA-realizable \emph{intersection} inside the frozen-$G$ family (no
nesting of the unrestricted global PLGA and SDPA families is asserted)
and an \emph{operator-dimension count for a single head
under the stated assumptions}, not a function-class separation between
complete LLM families: learned projections restrict the reachable
$(q,k)$, prior layers and alternative positional encodings can simulate
score functions, and no full-model expressivity claim is made.
\end{corollary}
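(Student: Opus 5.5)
I will prove Corollary~\ref{cor:posgap} in three steps: injectivity, then the commutant characterization via Proposition~\ref{prop:rope-commutant}, then the dimension count.

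\emph{Injectivity.} Suppose $q^\top R_{-n} G R_m k = q^\top R_{-n} G' R_m k$ for all $(n,m,q,k)$. Specializing to $n=m=0$ gives $R_0 = I$, so $q^\top (G-G') k = 0$ for all $q,k\in\R^{\dk}$. Taking $q,k$ to be standard basis vectors yields $G = G'$. This step needs nothing beyond unrestricted reachability.

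\emph{Characterization.} First I fix what ``SDPA-realizable'' means. A score function is SDPA-realizable if there are linear maps $M_Q, M_K$ on $\R^{\dk}$, applied before rotation, with $q^\top R_{-n} G R_m k = (M_Q q)^\top R_{m-n} (M_K k)$ for all arguments. Equivalently, $R_{-n} G R_m = M_Q^\top R_{m-n} M_K$ for all $n,m$, and the right side depends only on $m-n$. Proposition~\ref{prop:rope-commutant} then forces $G$ into the commutant. Conversely, if $G$ is in the commutant, take $M_Q = G^\top$ and $M_K = I$. Since $G$ commutes with every $R_t$, we get $R_{-n} G R_m = G R_{m-n}$. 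Moreover $G^\top$ also lies in the commutant, because its blocks are $c_j I_2 - s_j J_2$, so $G R_{m-n} = (G^\top)^\top R_{m-n}$. This is the same absorption already used in Theorem~\ref{thm:collapse}(ii).

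The main obstacle is the quantifier in the necessity direction. The reparameterized SDPA head may use arbitrary, possibly singular, $M_Q$ and $M_K$. So the identity to exploit is at the level of operators, $R_{-n} G R_m = M_Q^\top R_{m-n} M_K$, which holds by injectivity applied to each position pair. No invertibility is needed for that step: the offset-only dependence of the right side alone feeds into Proposition~\ref{prop:rope-commutant}.

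\emph{Dimension count.} By the explicit description in Proposition~\ref{prop:rope-commutant}, the commutant is $\bigoplus_{j=1}^{\dk/2}\{c_jI_2+s_jJ_2\}$. This is a linear subspace of real dimension $2\cdot\dk/2=\dk$, parameterized by $(c_j,s_j)$. Because the map $G\mapsto$ score function is linear and injective, it is a linear isomorphism onto its $\dk^2$-dimensional image. Under this isomorphism the SDPA-realizable intersection corresponds exactly to the commutant, which gives codimension $\dk^2-\dk$.

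Finally, every $G$ outside the commutant has $R_{-n}GR_n\ne G$ for some $n$, by the same proposition. Injectivity turns this into score dependence on absolute position, which justifies the phrase ``absolute-position-sensitive.''
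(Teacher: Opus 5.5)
Your proof is correct and follows essentially the same route as the paper: injectivity from the $n=m=0$ specialization; necessity from the operator identity $R_{-n}GR_m = M_Q^\top R_{m-n}M_K$, where the paper specializes directly to $m=n$ to get $R_{-n}GR_n = M_Q^\top M_K = G$, which is exactly the step inside Proposition~\ref{prop:rope-commutant} that you invoke; sufficiency via $M_Q^\top = G$, $M_K = I$; and the dimension count from the block description of the commutant. Your aside that $G^\top$ lies in the commutant is harmless but unnecessary, since $(G^\top)^\top = G$ holds trivially.
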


\begin{proof}
Injectivity: if $q^\top R_{-n}(G - G')R_m k = 0$ for all $q, k$ and all
$n, m$, then $R_{-n}(G-G')R_m = 0$ for some (hence all) $n,m$, so
$G = G'$. Realizability: suppose linear pre-rotation
reparameterizations $A, B$ of the query and key projections realize
the frozen-$G$ score family, i.e.
\[
(R_n A q)^\top (R_m B k) \;=\; q^\top R_{-n}\, G\, R_m\, k
\qquad\text{for all } n, m \text{ and all } q, k \in \R^{\dk}.
\]
Equality of the bilinear forms gives
$A^\top R_{m-n} B = R_{-n} G R_m$ for all $n, m$. At $n = m = 0$ this
reads $A^\top B = G$; at $m = n$ the left-hand side is again
$A^\top B$, so $R_{-n} G R_n = G$ for every $n$: $G$ lies in the
commutant. Conversely, if $G$ commutes with every rotation, the
choice $A^\top = G$, $B = I$ (or any factorization of $G$ across the
two projections) realizes the family, since
$A^\top R_{m-n} B = G R_{m-n} = R_{-n} G R_m$. No invertibility of
$A$ or $B$ is assumed. The commutant
is $\bigoplus_j \{c_j I_2 + s_j J_2\}$
(Proposition~\ref{prop:rope-commutant}), of real dimension $\dk$, inside
$\R^{\dk\times\dk}$ of dimension $\dk^2$; position sensitivity of the
complement is Proposition~\ref{prop:rope-commutant}'s converse direction.
\end{proof}

For the reference head width $\dk = 64$ this is $4032$ additional
head-level score-function dimensions (under the corollary's assumptions)
that an SDPA head with RoPE and linear reparametrization before rotation
cannot express. The count itself is an arithmetic consequence of the
rotation-commutant principle discussed in
Remark~\ref{rem:rope-twice} (established in the RoPE literature,
\cite{wangwang2025,tran2026}); what is specific to PLGA is the object
being counted: the inserted head-space operator sector that the
architecture generates, exposes, and can cache.
Whether trained models actually exploit them is
measurable (project $G^\ast$ onto the commutant and report the
residual), which is precisely the diagnostic of Open Problem 3 in
Section~\ref{sec:discussion}. SDPA is not operator-free: in
pre-projection coordinates its scores carry the learned constant
bilinear form $W_Q W_K^\top$, and operator-level regularization or
diagnostics \emph{can} be posed for that object; PLGA's distinction is
that the head-space operator is input-conditioned, directly exposed,
and generated anew from the current input's
$\widetilde Q^\top \widetilde Q$ rather than factored into fixed learned
projections, which makes the corresponding questions direct rather than
reconstructive.

\subsection{An inspectable law representation}
\label{subsec:adv-law}

In an SDPA-LLM the dataset-level structure of attention is diffused
through the projection weights; the operator sector is, in the language of
\cite{gokden2026}, a hidden variable pinned to $I$ at all times. In a
PLDR-LLM the same structural role is played by exposed tensors with proved
properties: strictly positive $\Alm$ with Perron--Frobenius spectral
structure (Theorem~\ref{thm:pf}), a potential tensor generated by the
unique scale-equivariant elementwise interaction
(Corollary~\ref{cor:whypower}), a rank-one collapse with an explicit
algebraic profile (Proposition~\ref{prop:rankone}), and an (empirically)
invariant $G^\ast$ with a quantified caching perturbation
(Corollary~\ref{cor:budget}). Exposure is
not cosmetic; it is what makes the following \emph{operations} available
in direct form (for SDPA, operator-level analogues can at best be posed
for the fixed pre-projection form $W_Q W_K^\top$; there is no
input-conditioned, exposed operator to regularize, read, or
transplant):
\begin{itemize}[leftmargin=2em]
\item \textbf{Regularization and metrics on the operator sector.} The DAG
loss \eqref{eq:dagloss} monitors and shapes the cycle content of the
learned operators; applied as a regularizer it improved benchmark scores
(one-pass block protocol) over the unregularized base model without
scaling model or data
\cite{gokden2024}, and as a metric it separates models whose loss curves
are indistinguishable \cite{gokden2024}.
\item \textbf{Exponent readout.} The exponents $P$ and couplings $a$ are
directly readable from, and monitorable in, the trained model; reading
them as scaling laws of the training domain is the analogy of
Remark~\ref{rem:scalefree-scope}, testable but not established.
\item \textbf{Operator transplantation.} $G^\ast$ is a portable artifact:
it can be extracted from one model and installed in another
\cite{gokden2025}; this is the experimental substrate of the
operator-transfer program (Conjecture~\ref{conj:functorial}).
\end{itemize}

\subsection{An intrinsic evaluation diagnostic (prospective)}
\label{subsec:adv-eval}

The order parameter $m(\theta)$ (Definition~\ref{def:orderparam})
separates near-critical from sub-critical PLDR-LLMs using nothing but the
model's own deductive fluctuations, in agreement with curated-benchmark
rankings at the phase level in the published sample \cite{gokden2026};
whether it resolves finer differences than the phase is open (the
within-phase reversals noted after Remark~\ref{rem:cacheable-converse}
are unresolved without uncertainty estimates), so as an evaluation tool
it is a diagnostic that separates the labeled conditions \emph{in the
published sample}, and a \emph{prospective} tool beyond it: validation
outside that sample (multiple seeds, uncertainty estimates, held-out
thresholds, an independent phase criterion) has not been carried out.
The diagnostic has no direct SDPA counterpart, for a reason worth
stating precisely:

\begin{remark}[The SDPA order parameter is degenerate]
\label{rem:vacuous}
For an SDPA-LLM the operator sector is constant by construction
($\Glm \equiv I$), so the operator-sector fluctuation that defines
$m(\theta)$ vanishes identically for every model, trained or untrained,
generalizing or not: the SDPA architecture satisfies the steady-state
condition vacuously and the order parameter carries zero information. In
a PLDR-LLM, invariance is an outcome of training observed in the
near-critical runs (Section~\ref{subsec:origin}; whether the regime is
critical in a technical sense is itself unestablished, \S\ref{subsec:soc},
and the strongest invariance observation, the $41$B-token model, is
confounded by training volume): it can fail, and its failure
is exactly the sub-critical phenomenology, which is why $m(\theta)$ is
informative. The diagnostic value of the order parameter is thus not an
accessory of the architecture but a consequence of making the operator
sector learnable.
\end{remark}

The practical weight of this advantage, if the diagnostic survives the
validation protocol of Section~\ref{sec:discussion}, is largest exactly
where benchmarks are weakest: small models, non-language domains, and
data-limited settings, where holding out a benchmark suite is
unaffordable and an intrinsic indicator of generalization is the
difference between a validatable and an unvalidatable model.

\subsection{Phase-aware, self-instrumented training}
\label{subsec:adv-phase}

The SOC formalization (Definition~\ref{def:soc}) gives PLDR-LLM training
an internal phase diagnostic that SDPA training lacks. The empirical
record shows why this matters: sub-critical PLDR-LLMs achieve
\emph{lower} training loss while generating token salad, and models with
different tokenizers or warm-up schedules can be indistinguishable on
loss/accuracy while differing sharply in DAG loss and deductive behavior
\cite{gokden2024,gokden2026}. The loss is the standard intrinsic scalar
a pipeline monitors, and it fails to discriminate these phases; an
SDPA-LLM has other internal signals (attention entropy,
activation and gradient statistics, calibration, the pre-projection
form $W_Q W_K^\top$), but none of them is a fluctuation statistic of an
exposed operator sector; a PLDR pipeline monitors $m(\theta)$, the DAG losses, and
dragon-king events (Definition~\ref{def:soc}) and can reject bad runs on
model-internal evidence. Proposition~\ref{prop:gap} together with
Conjecture~\ref{conj:spectral} supplies a candidate operator-level
meaning of the phases (gap closing versus gapped spectra); until that
conjecture is tested, the diagnostics are predictive rather than
interpreted.

\subsection{Inference efficiency and a deployment asymmetry}
\label{subsec:adv-inference}

With KV-cache and G-cache enabled, the deep PLGA subnetwork is executed
once per prompt and then removed from the loop (\S\ref{subsec:cache});
empirically this yields a $\sim3\times$ speedup over the uncached model,
with aggregate deductive-output statistics stable to $15$ printed decimal
digits \cite{gokden2025}; the published cached-versus-uncached
benchmark evaluations are unchanged (scored under the one-pass block
protocol; see \S\ref{subsec:cache} and
Appendix~\ref{app:repos}). A fixed-inference deployment package \emph{can}
additionally omit the PLGA weights, since the cached path never
evaluates them; enabling caching by itself does not shrink the stored
parameter set, and the smaller artifact is a separate packaging step.
The reported $27$--$39\%$ speed
advantage over a comparably sized SDPA reference \cite{gokden2025} is a
cross-stack comparison (custom PLDR code versus a Hugging Face pipeline,
different kernels and generation wrappers) and is not attributable to
the attention architecture alone without a matched-implementation
profile. Two structural points frame the caching result precisely.
First, G-cache is \emph{exact relative to KV-cache by construction}
(\S\ref{subsec:cache}); the only approximation in either cache is
freezing $A$ at the prompt, whose effect is measured (empirically
indistinguishable under tested decoding) and bounded by the explicit
budgets of Proposition~\ref{prop:perturb} and Corollary~\ref{cor:budget},
which quantify the mechanism but, as evaluated in
Appendix~\ref{app:numerical}, are far too loose to certify bit-identical
decoding by themselves. Second, the training/inference asymmetry
(Theorem~\ref{thm:collapse}(iii)) separates the deployable inference
artifact ($G^\ast$ plus the linear pathways) from the trainable asset
(the PLGA network): the cached artifact reproduces inference but does
not permit equivalent continued training of the operator sector, so the
nonlinear network can be withheld at deployment, the concealment
application \emph{proposed} in \cite{gokden2025}. We flag its status
plainly: withholding weights is not established security; a deployed
artifact can be queried, distilled, or fine-tuned, and turning the
asymmetry into a security property requires a threat model and attack
evaluation that have not been carried out. An SDPA-LLM has no such
separation: its attention computation \emph{is} its deployable form.

\subsection{Inductive bias and transfer}
\label{subsec:adv-bias}

The power law stage of PLGA is the unique scale-equivariant element-wise
interaction (Corollary~\ref{cor:whypower}): the potential stage is
scale-covariant by construction (with the architecture-level caveats of
Remark~\ref{rem:scalefree-scope}), matched in form to the power law
statistics of natural language \cite{zipf1949} and of the SOC domains
targeted by the transfer program \cite{markovic2014}. SDPA's fixed
Euclidean form carries no such bias. SDPA is not without finite learned
objects (its projections, in particular the pre-projection form
$W_Q W_K^\top$, are transferable weights); what it lacks is an
input-conditioned head-space operator and exponent tensor extractable
as a single transplantable pair $(G^\ast, P)$, which is what the
operator-transfer program of Conjecture~\ref{conj:functorial} needs.
Within the family picture, SDPA is the \emph{identity-coupling} point:
no learned operator or exponents in the score stage, while the many
other inductive biases of the stack (projections, positional encoding,
FFN) remain.

\subsection{Costs, trade-offs, and honest limits}
\label{subsec:costs}

The advantages above are purchased, and the price should be stated with
the same precision:
\begin{enumerate}[leftmargin=2em]
\item \textbf{Training-time compute and parameters.} The metric learner
dominates the attention-parameter budget during training (parameter ratio
$\#\mathrm{ResL}/\#A \approx 129$--$149$ in the reference configurations
\cite{gokden2025}); the overhead is removed at inference by caching, but
training a PLDR-LLM costs more per step than its SDPA base point.
\item \textbf{The criticality search.} Reaching the near-critical phase
requires finding workable $(\eta_{\max}, T_w)$ pairs, is sensitive to the
SwiGLU:LU ratio and to initialization details, and can fail via
dragon-king events \cite{gokden2025,gokden2026}; SDPA-LLMs, training
stably at lower learning rates, are, as the source papers themselves
state, easier to train and quick to infer \cite{gokden2026}. The
mitigations are exactly the diagnostics of
\S\ref{subsec:adv-phase}, but the search is real work that SDPA does not
require.
\item \textbf{Benchmark parity at small scale.} At the $\sim$100M-parameter,
$\sim$8B-token scale of the published experiments, average benchmark
scores of PLDR-LLMs are comparable to, not dominant over, SDPA
references, with the learned-operator advantage visible in matched
ablations and in the longer $41$B-token run that overtakes the SDPA
reference on average \cite{gokden2024,gokden2025,gokden2026}. The honest
summary is: a slight performance edge under matched training plus a
qualitatively different capability set, not benchmark dominance; behavior
at billion-parameter scale is an open empirical question.
\item \textbf{Theory debt.} The mechanism of \S\ref{subsec:origin} is a
hypothesis with proved ingredients, not yet derived from the training
dynamics (Open Problem 1), and the transfer advantages of
\S\ref{subsec:adv-bias} rest on Conjecture~\ref{conj:functorial}, which
is stated to be falsified or confirmed, not assumed.
\item \textbf{Evidence base.} All empirical anchors in this section come
from single training runs per condition within one research program,
without uncertainty estimates or same-stack matched baselines; the
empirical program of Section~\ref{sec:discussion} lists what independent
replication requires. The claims above are calibrated to this evidence
and would strengthen or fall with it.
\end{enumerate}

One sentence carries the whole comparison: \emph{SDPA fixes the operator
sector a priori; PLDR-LLM learns it, collapses it at inference (exactly
under the invariance hypothesis, approximately as observed), and exposes
it}. The advantages flow from the learning (distinct training dynamics,
the measured edge under matched training), from the collapse (caching
with quantified perturbation, deployment asymmetry), and from the
exposure (diagnostics, the prospective intrinsic evaluation, exponent
readout, transfer program), while the observed costs concentrate in
reaching and holding the near-critical regime; the theory motivates,
but does not prove, that concentration.

\section{Conjectures}
\label{sec:conjectures}

This section collects the open claims of the research program as
precise, falsifiable conjectures. None of them is used as a premise
elsewhere in the paper.

\begin{conjecture}[Rigidity of the invariant operator]
\label{conj:invariance}
In the joint limit of width $\dk \to \infty$, depth fixed, and training
tokens $\to \infty$ at near-criticality ($m(\theta)\to0$), the deductive map
$x \mapsto \Glm(x)$ converges to a constant $G^\ast$ exactly, for almost
every input under the data distribution, and the finite-size fluctuation
obeys a power law $\varepsilon(\dk, N_{\mathrm{tokens}}) \sim
\dk^{-\alpha} N_{\mathrm{tokens}}^{-\beta}$ with universal exponents. (The
observed progression $10^{-6} \to 0$ at float resolution under a
$5\times$ token increase \cite{gokden2026}, confounded by schedule
differences, is a first data point, not an exponent estimate. The
$O(S^{-1/2})$ and $O(1/S)$ source terms of Proposition~\ref{prop:conc}
concern the \emph{context length} $S$, not $N_{\mathrm{tokens}}$; they
constrain the within-pass fluctuation floor, while $\beta$ must come from
the training dynamics.)
\end{conjecture}

\begin{conjecture}[Spectral form of the order parameter]
\label{conj:spectral}
There is a symmetrization of the trained attention operators (with
respect to their stationary measures, or a reversibilization) and a
family limit (in context length and width) under which near-critical
models develop power law spectral accumulation at $1$ with exponent
$\vartheta$ in the sense of Proposition~\ref{prop:gap}(ii.a), including
its uniform lower-edge gap (the stronger exact-density form (ii.b),
with its constant, is a sharper version of the same conjecture), while
sub-critical models have a uniform spectral gap; and benchmark-relevant
reasoning ability is a monotone function of $\vartheta$. (The
conjecture's burden includes constructing the reversible family to which
Proposition~\ref{prop:gap} applies; per-layer, per-token operator
variation means no single-matrix formulation is adequate.) This would
upgrade the order parameter \eqref{eq:orderparam} from a fluctuation
diagnostic to a spectral one, computable from a single forward pass.
\end{conjecture}

The third conjecture formalizes the transplantation experiments of
\cite{gokden2025}: a $\Glm$ learned on one dataset interval can be
transplanted as the constant operator of a fresh model trained on a
different interval, and performs comparably to (and differently from)
identity or random constant operators.

\begin{conjecture}[Operator transfer across domains: congruence
alignment]
\label{conj:functorial}
Let $\mathcal{D}_1, \mathcal{D}_2$ be data distributions whose generative
processes lie in the same universality class (equal critical exponents of
their long-range statistics; making this precise is part of the
conjecture's burden). Let $G_1^\ast, G_2^\ast$ be the invariant
operators of PLDR-LLMs pretrained to near-criticality
($m \to 0$) on each. Then there exist invertible linear maps
$S^{(\ell,i)}$, acting as a \emph{common} change of frame on queries and
keys and constrained to a proper subgroup fixed by the conjecture (e.g.\
orthogonal maps, or maps of bounded condition number; the constraint is
essential, since with unconstrained independent query and key frames all
full-rank matrices are left--right equivalent and the statement would be
near-vacuous), such that
\[
G_2^{\ast(\ell,i)} \;=\;
\bigl(S^{(\ell,i)}\bigr)^{-\top} G_1^{\ast(\ell,i)}\,
\bigl(S^{(\ell,i)}\bigr)^{-1} + o(1)
\]
in the joint limit of model width and data. The transformation law is
the \emph{congruence} action, since $\Glm$ is a bilinear form on
query--key pairs: if $q' = Sq$ and $k' = Sk$, preservation of
$q^\top G k$ requires $G' = S^{-\top} G S^{-1}$.\footnote{The
similarity action $S G S^{-1}$ would be the wrong transformation law
here, and eigenvalues the wrong invariants: eigenvalues of a
nonsymmetric bilinear-form matrix are not invariants of the congruence
action. The correct invariants are those of the constrained congruence
classes, e.g.\ the signature of the symmetric part and the rank data of
the pairing; congruence of nonsymmetric forms has subtleties beyond the
symmetric-part signature, part of the conjecture's burden.}
Equivalently, the congruence-invariants of the operators (not their
eigenvalues) are universality-class data.
\end{conjecture}

Conjecture~\ref{conj:functorial} concerns the operators only; it does
not by itself move a model between domains. The stronger, separate claim
is:

\begin{conjecture}[Whole-model domain transfer]
\label{conj:modeltransfer}
Under the hypotheses of Conjecture~\ref{conj:functorial}, there is an
explicit transport procedure, freezing the extracted pair
$(G^\ast, P)$ of the source model and re-fitting only a stated list of
components (tokenizer/embedding maps, linear projections, and readout;
the nonlinear PLGA network stays frozen), such that the transported
model reaches benchmark parity on the target domain, within a stated
tolerance, with a from-scratch control trained at matched parameter,
token, and FLOP budget. (Operator congruence alone,
Conjecture~\ref{conj:functorial}, does not imply this: tokenizer,
embeddings, projections, FFN, normalization, and readout are not
transported by a congruence of $G^\ast$; which of them must be re-fit,
and at what cost, is exactly what this conjecture asserts to be small.)
\end{conjecture}

Both conjectures are falsifiable with existing tooling:
train on two same-class synthetic sources (e.g.\ two SOC sandpile family
simulators), extract $G^\ast$, and test alignment of the constrained
congruence-invariants (Conjecture~\ref{conj:functorial}); then run the
transport procedure against its matched from-scratch control
(Conjecture~\ref{conj:modeltransfer}).

\section{Discussion and Open Problems}
\label{sec:discussion}

We collected the analytical skeleton of the PLDR-LLM program, with each
piece at its own epistemic level: a five-operator attention mechanism
whose power law stage is the unique scale-covariant elementwise
interaction (Corollary~\ref{cor:whypower}, with the scope limits of
Remark~\ref{rem:scalefree-scope}); Perron--Frobenius structure of the
positive interaction tensor and rank-one algebra of the collapsed
generator $A$ (Theorem~\ref{thm:pf},
Proposition~\ref{prop:rankone}); an exact algebraic collapse of inference
onto a constant-operator model \emph{under the invariance hypothesis},
with SDPA as the identity point of the family
(Theorem~\ref{thm:collapse}) and quantified perturbation bounds
(Proposition~\ref{prop:perturb}, Corollary~\ref{cor:budget}, evaluated in
Appendix~\ref{app:numerical}); a conditional three-stage analysis
(rotary twirl, statistical concentration, and row-map contraction) of
the origin of the observed deductive-output invariance, with its
hypotheses stated and its measurable diagnostics computed directly on a
released checkpoint
(Section~\ref{subsec:origin}, Hypothesis~\ref{hyp:selection});
walk-counting and commutant identities connecting the regularizer and
positional geometry to spectral theory (Theorem~\ref{thm:dag} with
Remarks~\ref{rem:dagobstruction} and \ref{rem:tracedual},
Proposition~\ref{prop:rope-commutant}); a
phenomenological SOC framework for training with an intrinsic order
parameter (Section~\ref{sec:criticality}); and a consolidated account of
the advantages over SDPA-LLMs at their actual strength, including a
conditional head-level positional operator codimension
(Section~\ref{sec:advantages}).

Open problems, beyond the conjectures of
Section~\ref{sec:conjectures}:

\begin{enumerate}[leftmargin=2em]
\item \textbf{Dynamics of the collapse.} Derive, from the gradient flow of
\eqref{eq:dagloss}-augmented cross-entropy, the convergence of the shared
row map $\varphi$ (Proposition~\ref{prop:rowfact}) to the constant-map
manifold at critical $(\eta_{\max}, T_w)$, turning the three-part
mechanism of Section~\ref{subsec:origin} into a theorem about the training
dynamics, ideally identifying the constant-map manifold as an attracting
invariant manifold whose stability changes at the critical point.
\item \textbf{An honest RG map.} Construct an explicit coarse-graining on
token sequences under which the layer map of
Definition~\ref{def:pldr} is (approximately) covariant, upgrading the RG
reading of Section~\ref{subsec:soc} from correspondence to theorem.
\item \textbf{Commutant diagnostics.} Measure the distance of trained
$G^\ast$ from the RoPE commutant of
Proposition~\ref{prop:rope-commutant} across layers; this quantifies how
much absolute-position geometry natural language demands, a question
with no direct analogue for SDPA's fixed identity operator (its
pre-projection form $W_Q W_K^\top$ is constant, not
input-conditioned). A first such measurement now exists: on the audited
checkpoint the commutant residual
$\norm{\Glm - \Pi_{\mathrm{comm}}\Glm}_F/\norm{\Glm}_F$ is reported per
layer and head in Appendix~\ref{app:numerical}; the open problem is its
behavior across scales, checkpoints, and training trajectories.
Relatedly, measure the empirical size of the
off-commutant part of $\widetilde D/S$ against the $O(1/S)$ twirl bound of
Lemma~\ref{lem:twirl}.
\item \textbf{The empirical program.} The empirical claims of this paper
rest on single training runs per condition from one research program.
What would settle them: multiple pretraining seeds per
$(\eta_{\max}, T_w)$ cell over a dense grid, at several model scales
and on independent corpora, with uncertainty on both
$m(\theta)$ (in the stabilized normalization of
Definition~\ref{def:orderparam}) and benchmark scores; an
\emph{independent} phase criterion fixed in advance of measuring the
order parameter and evaluated blind against behavior, to break the
labeling circularity noted in
\S\ref{subsec:soc}; parameter-, token-, and FLOP-matched SDPA,
constant-$G$, no-power, and randomized-$P$ baselines run in the same
software stack (with a profiler breakdown for any speed claims),
together with frozen-, transferred-, and random-$G$ inference
controls; the commutant-projection intervention (project the trained
$\Glm$ onto the RoPE commutant and measure the behavioral change,
the causal counterpart of the occupancy measurement in
Appendix~\ref{app:numerical});
element-wise cached-versus-recomputed comparisons at score, softmax,
logit, KL/TV, and token-decision levels across prompts, layers, heads,
and decoding steps, extended to held-out prompt sets, longer contexts,
adversarial distribution shifts, and multiple checkpoints;
the historical-row context-interaction probe of
Section~\ref{subsec:online} run at a sequence of \emph{training}
checkpoints, locating when prefix consistency
(Definition~\ref{def:prefixconsistent}) emerges as the operator
collapses, together with the sequential-versus-block score gap along
the same trajectory (extending the two-released-checkpoint contrast of
Appendix~\ref{app:numerical});
the blockwise-CE-versus-sequential-NLL gap along the same training
trajectory: the target-exposure channel of \S\ref{subsec:dag} is
present precisely while the metric learner is input-sensitive, so a
collapse measured only at the end state does not settle its role
during training;
sequential rescoring of the actual benchmark items across
\emph{every} checkpoint used for causal claims about operator, DAG,
or phase effects, the stated precondition for any future claim of
benchmark superiority or causal phase effects (the two released
checkpoints are covered in Appendix~\ref{app:numerical});
singular-value spectra and numerical ranks in place
of determinants; measured row-map Jacobians, frequency-resolved twirl
residuals, commutant residuals, and end-to-end Lipschitz factors along
training (extending Appendix~\ref{app:numerical} from one checkpoint to
trajectories); and, for the SOC reading specifically, finite-size
scaling and data collapse, susceptibility, avalanche or $1/f$
statistics, and an identified self-tuning feedback mechanism.
Independent replication outside this research program would materially
change the strength of every empirical claim above.
\end{enumerate}

The broader claim of the program is that PLDR-LLM is not ``a transformer
variant'' but a family of models parameterized by an operator $G$,
containing SDPA at $G = I$, whose training in the critical-like regime
produces, and exposes, approximately invariant operators. The exact
inclusion and the exposure are theorems and architecture; the invariance
is a measured hypothesis with a proposed mechanism; and the reading of
the invariant operators as laws of the training domain is the program's
open ambition, to be earned by the empirical program above rather than
assumed from the mathematics.

\section*{Acknowledgments}

I am grateful to my parents for their support and patience. This
research was conducted independently without support from a grant or
corporation.

\section*{Disclosure of the use of AI tools}

The author discloses that (Claude, model
Claude Fable 5, Anthropic) was used in the preparation of
this article: in organizing the material of the source papers
\cite{gokden2019,gokden2021,gokden2024,gokden2025,gokden2026}, in
drafting the text and the proofs of the stated results, in
cross-checking the architectural equations of
Sections~\ref{sec:plga}--\ref{sec:pldr} against the reference
implementations listed in Appendix~\ref{app:repos}, in drafting the
Lean~4 formalization of selected proofs released with this article
(Appendix~\ref{app:repos}), which was verified mechanically by the Lean
proof checker, and in implementing and running the numerical audit of
Appendix~\ref{app:numerical} on the released checkpoint. 
(Codex, model GPT-5.6-Sol, OpenAI) was used as a non-authorial review and 
verification tool during revision of this manuscript. Its involvement included 
checking mathematical definitions, dimensions, proofs, and claim scope; 
comparing the manuscript with the cited PLDR-LLM and PLGA implementations; 
building and testing the Lean~4 formalization; running the supplied 
numerical-audit tests; checking citations and cross-references; conducting 
targeted literature searches; and suggesting technical and expository 
revisions. All definitions, theorems, proofs, and claims were
reviewed and verified by the author, who takes full responsibility for
the content of this article.

\vfill\newpage
\appendix

\section{Notation}
\label{app:notation}

\begin{center}
\small
\begin{tabular}{@{}>{\raggedright\arraybackslash}p{0.34\textwidth}%
>{\raggedright\arraybackslash}p{0.58\textwidth}@{}}
\toprule
Symbol & Meaning \\
\midrule
$\mathcal{V},\ V$ & vocabulary and its size \\
$S,\ S_{\max}$ & context length, maximum context length \\
$\dmodel,\ h,\ \dk = \dmodel/h$ & model width, heads, head width \\
$L,\ d_{f\!f}$ & decoder depth, FFN width \\
$N_{\mathrm{res}},\ n_A,\ \Adff$ & metric-learner residual units, SwiGLU
blocks per unit, hidden width \\
$Q, K, V$ & query/key/value matrices ($S \times \dk$ per head) \\
$\widetilde Q, \widetilde K,\ R_n$ & rotary-rotated inputs; RoPE rotation
at position $n$ \\
$D_Q = Q^\top Q$, $\widetilde D = \widetilde Q^\top \widetilde Q$ & density
operator \eqref{eq:density}; its rotary-twisted form \eqref{eq:twisted} \\
$\Phi_{\mathrm{res}},\ \varphi$ & metric learner (shared per layer) and its
row map (Prop.~\ref{prop:rowfact}) \\
$u_j,\ g_{j,1},\ g_{j,2}$ & residual units of $\varphi$ and their SwiGLU
blocks \eqref{eq:resunit} \\
$Q(x),\ D(x),\ A(x)$ & query matrix, density operator, metric generator
evaluated on input $x$ \\
$A,\ \Alm,\ \Ap,\ \Glm$ & generator, positive interaction (``metric''),
potential, and bilinear score (``energy--curvature'') tensors
(Rem.~\ref{rem:nomenclature}) \\
$P,\ a,\ b_a,\ W,\ b_W$ & learned exponents, couplings, biases (five
$\dk\times\dk$ matrices per head) \\
$\Elm,\ \Vlm$ & attention (Markov) operator and head output \\
$M^{\Had P}$ & element-wise power \eqref{eq:hadpower};
$\Glm = a\Ap + b_a$ is a matrix product \\
$I_2,\ J_2$ & $2\times2$ identity; rotation by $\pi/2$ on a RoPE plane
(Prop.~\ref{prop:rope-commutant}) \\
$P_T$ & projection onto the ambient RoPE torus commutant
(Prop.~\ref{prop:rope-commutant}, Lem.~\ref{lem:twirl}) \\
$\Sigma_{\LN},\ \mathcal{S}$ & LayerNorm output manifold; visited row set
(Lem.~\ref{lem:ln}, Prop.~\ref{prop:contract}) \\
$D_L(\cdot),\ B$ & DAG loss \eqref{eq:dagloss}; batch size \\
$\mathcal{L}_{\mathrm{block}}$ & blockwise (global-context)
cross-entropy \eqref{eq:blockloss}; distinguished from sequential AR
NLL (\S\ref{subsec:online}) \\
$C_G^{\mathrm{samp}},\ C_G^{\mathrm{tube}}$ & the budget constant of
Cor.~\ref{cor:budget} at the measured $m_A$ (pairs of visited
states) and at the floor $m_A = \epsilon$ (arbitrary paths in the
stated bounded domain) \\
$p_\theta(\,\cdot \mid x_{1:t})$ & online final-row conditional
\eqref{eq:onlinecontract} ($S = t$ contract, Def.~\ref{def:pldr}) \\
$m(\theta)$ & order parameter \eqref{eq:orderparam} \\
$(\eta_{\max}, T_w)$ & max learning rate, warm-up steps (control
parameters) \\
$G^\ast,\ \alpha^\ast$ & invariant (cached) energy--curvature operator;
row-map attractor \\
\bottomrule
\end{tabular}
\end{center}

\vfill\newpage
\subsection*{Result dependency map}

Direct proof inputs of the main numbered results, as a navigation aid
through the cross-references. Each result's statement carries its own
hypotheses; classical inputs are cited at the point of use in the
body text.

\begin{center}
\small
\begin{tabular}{@{}>{\raggedright\arraybackslash}p{0.34\textwidth}%
>{\raggedright\arraybackslash}p{0.58\textwidth}@{}}
\toprule
Result & Direct proof inputs \\
\midrule
Prop.~\ref{prop:density}, Prop.~\ref{prop:positivity},
Prop.~\ref{prop:rankone}, Prop.~\ref{prop:rowfact},
Prop.~\ref{prop:markov} & self-contained (elementary linear algebra
on the architectural definitions of \S\ref{sec:plga}) \\
Thm.~\ref{thm:pf} & Prop.~\ref{prop:positivity}; classical
Perron--Frobenius theory \\
Prop.~\ref{prop:rope-commutant} & self-contained (nonresonance
hypothesis; block-diagonal commutant computation) \\
Thm.~\ref{thm:dag} & self-contained (walk expansion of the trace
exponential) \\
Prop.~\ref{prop:cacheable} & softmax common-shift characterization
(proved in place) \\
Thm.~\ref{thm:collapse} & (i) substitution at $\Glm = I$; (ii)
Prop.~\ref{prop:rope-commutant}, and Cor.~\ref{cor:posgap} for
unrestricted-level necessity; (iii) chain rule \\
Prop.~\ref{prop:perturb} & Lem.~\ref{lem:softmax};
submultiplicativity \\
Lem.~\ref{lem:twirl} & Prop.~\ref{prop:rope-commutant}; finite
geometric sums \\
Prop.~\ref{prop:conc} & matrix Bernstein inequality, under the
stated idealization \\
Lem.~\ref{lem:ln} & self-contained (direct calculus on the
$\varepsilon$-LayerNorm) \\
Prop.~\ref{prop:contract} & Lem.~\ref{lem:ln}; composition of
Lipschitz maps, under the stated contraction hypothesis \\
Cor.~\ref{cor:budget} & Lem.~\ref{lem:softmax},
Prop.~\ref{prop:perturb}, Lem.~\ref{lem:ln},
Prop.~\ref{prop:contract} (chain assembly) \\
Prop.~\ref{prop:scalecov} & measurable multiplicative Cauchy
functional equation \\
Cor.~\ref{cor:whypower} & Prop.~\ref{prop:scalecov} \\
Prop.~\ref{prop:gap} & spectral calculus for reversible self-adjoint
families, under the stated hypotheses \\
Cor.~\ref{cor:posgap} & Prop.~\ref{prop:rope-commutant}; the
realizability argument is otherwise self-contained \\
Hyp.~\ref{hyp:selection}; Conjectures
(\S\ref{sec:conjectures}) & not proved; ingredient-tagged hypothesis
and falsifiable statements with proposed measurements \\
\bottomrule
\end{tabular}
\end{center}

\vfill\newpage
\section{Code, Models, and Verification Resources}
\label{app:repos}

Reference implementations and released models accompanying the source
papers. The architectural equations of Sections~\ref{sec:plga} and
\ref{sec:pldr} were verified for this article against the reference
implementations at the exact snapshots recorded in the verification
manifest below;
``every equation verified'' throughout this paper means verified against
those snapshots.

\paragraph{Verification manifest.}
The verified snapshots are the following; each identifier is set in
fixed $16$-character groups with explicit line breaks, so no engine's
paragraph breaker can carry it past the text block.
\begin{center}
\small
\begin{tabular}{@{}l>{\ttfamily}l@{}}
\toprule
Pinned object & \normalfont Identifier \\
\midrule
TensorFlow framework repo, commit &
\begin{tabular}[t]{@{}>{\ttfamily}l@{}}
094cda58e217db65 3126220dbf4572ec\\ 9e924d1d
\end{tabular} \\
KVG-cache repo, commit &
\begin{tabular}[t]{@{}>{\ttfamily}l@{}}
8e8658fca4eacf8c 9013f5bcd60408b5\\ fc679777
\end{tabular} \\
Self-organized-criticality repo, commit &
\begin{tabular}[t]{@{}>{\ttfamily}l@{}}
ea5c2b4cc7a88043 ecd663f95b123c9b\\ 8e6646e0
\end{tabular} \\
HF port \texttt{PLDR-LLM-v51-SOC-110M-5}, revision &
\begin{tabular}[t]{@{}>{\ttfamily}l@{}}
de8e539c0ba18290 72f4b8c2c5fae3bd\\ e0a3a2d2
\end{tabular} \\
\quad its \texttt{modeling\_pldrllm.py}, SHA-256 &
\begin{tabular}[t]{@{}>{\ttfamily}l@{}}
e5cbaa6c5433364e e15051e644925b0f\\
74c3968159d663ce 1aab4fb985ac26f0
\end{tabular} \\
HF \texttt{PLDR-LLM-v51-SOC-110M-1}, revision &
\begin{tabular}[t]{@{}>{\ttfamily}l@{}}
7a34e2ca9aa78038 683677cfda17fe3a\\ 9fe6da8a
\end{tabular} \\
Evaluation-harness fork (KVG-cache), commit &
\begin{tabular}[t]{@{}>{\ttfamily}l@{}}
2527a989bf486da1 35ce470515c08175\\ d4e416c5
\end{tabular} \\
\quad its \texttt{lm\_eval/models/pldrllm.py}, SHA-256 &
\begin{tabular}[t]{@{}>{\ttfamily}l@{}}
08adbf77656da774 e1babf077afb45f2\\
8393aa062f50f58e 3337089b668ea8b6
\end{tabular} \\
\bottomrule
\end{tabular}
\end{center}
The sequential-validation audit of Appendix~\ref{app:numerical}
additionally pins its held-out corpus and benchmark items as
parquet files at the following Hugging Face dataset revisions (same
grouping convention; the SHA-256 of each individual file is recorded
in the audit's results JSON, and the release checklist verifies that
every identifier in this manifest resolves against its public
remote):
\begin{center}
\small
\begin{tabular}{@{}l l@{}}
\toprule
Dataset & Revision \\
\midrule
\texttt{Salesforce/wikitext} &
\begin{tabular}[t]{@{}>{\ttfamily}l@{}}
b08601e04326c79d fdd32d625aee71d2\\ 32d685c3
\end{tabular} \\
\texttt{allenai/ai2\_arc} &
\begin{tabular}[t]{@{}>{\ttfamily}l@{}}
210d026faf995565 3af8916fad021475\\ a3f00453
\end{tabular} \\
\texttt{Rowan/hellaswag} &
\begin{tabular}[t]{@{}>{\ttfamily}l@{}}
218ec52e09a7e746 2a5400043bb9a69a\\ 41d06b76
\end{tabular} \\
\texttt{ybisk/piqa} (parquet branch) &
\begin{tabular}[t]{@{}>{\ttfamily}l@{}}
142c51238b3ca2bc 61e9a075913871b8\\ b600e8e1
\end{tabular} \\
\texttt{allenai/openbookqa} &
\begin{tabular}[t]{@{}>{\ttfamily}l@{}}
388097ea7776314e 93a529163e0fea80\\ 5b8a6454
\end{tabular} \\
\texttt{allenai/social\_i\_qa} (parquet branch) &
\begin{tabular}[t]{@{}>{\ttfamily}l@{}}
537a2ec8ec565adc 0b70b70752893e59\\ e024df26
\end{tabular} \\
\texttt{allenai/winogrande} &
\begin{tabular}[t]{@{}>{\ttfamily}l@{}}
01e74176c63542e6 b0bcb004dcdea22d\\ 94fb67b5
\end{tabular} \\
\texttt{truthfulqa/truthful\_qa} &
\begin{tabular}[t]{@{}>{\ttfamily}l@{}}
741b8276f2d1982a a3d5b832d3ee81ed\\ 3b896490
\end{tabular} \\
\bottomrule
\end{tabular}
\end{center}
The spaces inside each identifier are grouping only; the identifier
is the concatenation of its groups.
The numerical audit of Appendix~\ref{app:numerical} loads the model
at the pinned model revision above and records the same hash in its
results file.

\paragraph{Model implementations.}
\begin{itemize}[leftmargin=2em]
\item PLDR-LLM at self-organized criticality (PyTorch v510;
training, inference, KV/G-cache; $W_V$ initialized as $W_Q, W_K$):
\url{https://github.com/burcgokden/PLDR-LLM-Self-Organized-Criticality}
\item PLDR-LLM with KV-cache and G-cache (PyTorch v510; identical
except $W_V$ keeps framework-default initialization; includes v510G
(predefined $\Glm$) and v510Gi (transplanted $\Glm$) ablation models):
\url{https://github.com/burcgokden/PLDR-LLM-with-KVG-cache}
\item Original PLDR-LLM framework (TensorFlow v500/v900,
architecture of \cite{gokden2024}):
\url{https://github.com/burcgokden/LLM-from-Power-Law-Decoder-Representations}
\item Power Law Graph Transformer (encoder--decoder PLGA of
\cite{gokden2021}):
\url{https://github.com/burcgokden/Power-Law-Graph-Transformer}
\item CoulGAT screened-Coulomb graph attention
(\cite{gokden2019}):
\url{https://github.com/burcgokden/CoulGAT-Graph-Attention-Interpretability}
\end{itemize}

\paragraph{Machine-checked proofs and audit code (Lean 4 + Python).}
The Lean~4/mathlib formalization of the elementary proof cores listed in
the introduction, building with no unproved obligations
(\texttt{lake build} re-verifies every proof with the Lean kernel),
together with the numerical-audit code, its Python dependency
specification, and the full results files and raw arrays of
Appendix~\ref{app:numerical} (in the
repository's \texttt{audit/} directory):
\url{https://github.com/burcgokden/PLDR-LLM-Math-Foundations}

\paragraph{Released models (Hugging Face).}
The organization \url{https://huggingface.co/fromthesky} hosts the
pretrained model families of the source papers, including the
\texttt{pldrllmv5/v9} series of \cite{gokden2024}, the
\texttt{PLDR-LLM-v51} and \texttt{PLDR-LLM-v51G} series of
\cite{gokden2025}, the \texttt{PLDR-LLM-v51-SOC} series of
\cite{gokden2026}, and \texttt{v52} fine-tuned variants. The
\texttt{PLDR-LLM-v51-SOC} repositories ship a custom Hugging Face
Transformers port (\texttt{modeling\_pldrllm.py},
\texttt{PldrllmForCausalLM}) whose configuration records the reference
hyperparameters used throughout this paper: $\Adff{=}170$,
$N_{\mathrm{res}}{=}8$, $n_A{=}2$, $\dk{=}64$, $\varepsilon_{\LN}{=}10^{-6}$,
RoPE base $10^4$, untied embeddings, biases on all projections.

\paragraph{Evaluation wrappers and scoring protocol.}
The benchmark evaluations of the source papers run through forks of
the EleutherAI evaluation harness with a PLDR-LLM model wrapper:
\begin{itemize}[leftmargin=2em]
\item harness fork for the KVG-cache models (wrapper
\texttt{lm\_eval/models/pldrllm.py}; commits pinned in the
verification manifest above):
\url{https://github.com/burcgokden/lm-evaluation-harness-with-PLDR-LLM-kvg-cache}
\item harness fork for the original v500 framework:\\
\url{https://github.com/burcgokden/lm-evaluation-harness-with-PLDR-LLM}
\end{itemize}
The wrapper's log-likelihood path scores each answer candidate by
\emph{one-pass block scoring} (Section~\ref{subsec:online}): context
and candidate are concatenated, all tokens but the last are evaluated
in one call, and the candidate log-probabilities are summed. This is a
whole-candidate block score; it coincides with sequential final-row
scoring exactly for one-token candidates and, on the audited
checkpoint, to within the measured gaps of
Appendix~\ref{app:numerical} for multi-token candidates. The
log-likelihood path performs a single uncached pass by construction,
so no generation-time cache participates in the published scores; the
wrapper's \emph{generation} path is not used by any published score
cited here or by any audit in this paper.

\vfill\newpage
\section{Lean Formalization: Exact Coverage}
\label{app:lean}

The Lean~4/mathlib repository
(\url{https://github.com/burcgokden/PLDR-LLM-Math-Foundations})
builds with no \texttt{sorry}, \texttt{admit}, \texttt{axiom}, or
\texttt{unsafe} declarations; \texttt{lake build} re-verifies every
proof with the Lean kernel. The precise trust statement is:
kernel-checked \emph{relative to mathlib's standard classical
principles}, not axiom-free in a foundational sense, and not
constructive. An axiom audit in the repository's continuous
integration (\texttt{scripts/check\_axioms.py}) sweeps
\texttt{\#print axioms} over every exported theorem and fails if
anything appears beyond \texttt{propext},
\texttt{Classical.choice}, and \texttt{Quot.sound}. This
appendix states \emph{exactly} which
part of each numbered result is kernel-checked and which is not.
Formalization of a proof core is not evidence for the adjacent
unformalized claims, and nothing in this paper cites the formalization
as such; in particular, kernel-checking the algebraic cores could not
have detected a defect in the numerical audit's own bookkeeping,
which is why the audit carries its separate semantic test oracles
(Appendix~\ref{app:numerical}).

\begin{center}
\footnotesize
\begin{tabular}{@{}>{\raggedright\arraybackslash}p{0.235\textwidth}%
>{\raggedright\arraybackslash}p{0.36\textwidth}%
>{\raggedright\arraybackslash}p{0.32\textwidth}@{}}
\toprule
Module & Kernel-checked & Not checked (prose-only) \\
\midrule
\texttt{Iswiglu.lean} &
Def.~\ref{def:iswiglu}: nonnegativity, positivity away from $0$;
entries of $\Alm \ge \epsilon$ (Prop.~\ref{prop:positivity}) &
Derivative bounds on compact intervals (Cor.~\ref{cor:budget}) \\
\texttt{Softmax.lean} &
Plain softmax: positivity, row sum $1$, common-shift invariance,
convex-hull bound; \emph{ideal masked softmax}: nonnegativity, strict
positivity on the support, exact zero off it, row sum $1$, shift
invariance (Prop.~\ref{prop:markov}); \emph{equality iff common shift},
plain and masked: the row-level characterization of
Prop.~\ref{prop:cacheable}(iii), both directions &
Softmax Lipschitz bound (Lem.~\ref{lem:softmax}); finite-mask underflow
semantics (Rem.~\ref{rem:finitemask}) \\
\texttt{DensityOperator.lean} &
Prop.~\ref{prop:density}: symmetry, PSD quadratic form, rank equality
$\rank Q^\top Q = \rank Q$ &
Statistical concentration; rotated-query statistics \\
\texttt{HadamardPower.lean} &
Prop.~\ref{prop:positivity}: entrywise group law of
$A \mapsto A^{\Had tP}$ for positive $A$ &
Architecture-level scale covariance (correctly not claimed;
Rem.~\ref{rem:scalefree-scope}) \\
\texttt{RankOne.lean} &
Prop.~\ref{prop:rankone}(i)--(iii): rank $\le 1$ with equality iff
$\alpha \ne 0$; $\det = 0$ ($d \ge 2$); row-sum eigenvector;
$A^{k+1} = s^k A$; nilpotency $A^2 = 0$ at $s = 0$ &
The singular-value determinant bound (iv); any spectral-radius
statement \\
\texttt{LayerNorm.lean} &
Lem.~\ref{lem:ln}(i): exact shift invariance; (iii): the exact norm
identity $\sum \hat u_i^2 = \dk v/(v+\varepsilon)$; exact scale
invariance at $\varepsilon = 0$ on positive-variance rows &
The quantitative scale-error inequality (ii); the Lipschitz constant
(iv) \\
\texttt{Rope.lean} &
Group homomorphism $n \mapsto \operatorname{Rot}(n\theta)$; single-block
commutant $MR = RM \iff M = cI_2 + sJ_2$ for $\sin\theta \ne 0$;
commuting operators give relative-position scores
(Prop.~\ref{prop:rope-commutant}, per block;
Thm.~\ref{thm:collapse}(ii) mechanism) &
The multi-block nonresonance argument (distinct eigenvalue pairs
$\Rightarrow$ block-diagonal commutant); the codimension count of
Cor.~\ref{cor:posgap} \\
\texttt{TwirlBound.lean} &
Lem.~\ref{lem:twirl} analytic core: chord identity and geometric-sum
bound $\abs{\sum_{n\le S} e^{i\omega n}} \le 1/\abs{\sin(\omega/2)}$ &
The matrix-level projection statement; the frequency-resolved
constants; any stochastic application \\
\bottomrule
\end{tabular}
\end{center}

\vfill\newpage
\begin{center}
\footnotesize
\begin{tabular}{@{}>{\raggedright\arraybackslash}p{0.235\textwidth}%
>{\raggedright\arraybackslash}p{0.36\textwidth}%
>{\raggedright\arraybackslash}p{0.32\textwidth}@{}}
\toprule
Module & Kernel-checked & Not checked (prose-only) \\
\midrule
\texttt{DagLoss.lean} &
Thm.~\ref{thm:dag} walk side: entrywise nonnegativity of powers,
$\tr N^k \ge 0$, $\tr e^N \ge d$, $\tr e^N \ge d + \tr N$; the
positivity obstruction $\tr e^N \ge d + d\epsilon$ of
Rem.~\ref{rem:dagobstruction} (the $h$-side inequality, before
normalization and logarithm; the normalized-loss floor
$\log(1+\epsilon^2)$ follows in prose by monotonicity and is not a
separate Lean statement) &
Equality iff acyclic (the NOTEARS converse); the spectral-side
identity of Rem.~\ref{rem:tracedual} \\
\texttt{Contraction.lean} &
Prop.~\ref{prop:contract}(i): multiplicative diameter bound;
$K^N$ contraction for iterated $K$-Lipschitz maps &
Any claim that trained units are contractive (measured, not proved:
Appendix~\ref{app:numerical}); the converse (iii) \\
\texttt{InferenceCollapse.lean} &
Thm.~\ref{thm:collapse}(i): $Q \cdot I \cdot K^\top = QK^\top$;
constant-operator absorption (no-RoPE (ii)), including the affine form
$(XW + \mathbf{1}b^\top)G = X(WG) + \mathbf{1}(b^\top G)$ matching the
bias-bearing projections of Definition~\ref{def:pldr};
Prop.~\ref{prop:cacheable}(i) at score and masked-attention-row level
(one-way sufficiency) &
The RoPE necessity direction of Thm.~\ref{thm:collapse}(ii) at the
unrestricted operator level (Cor.~\ref{cor:posgap}); the decoding-step
induction of Prop.~\ref{prop:cacheable}(iii)'s model level; the
gradient decomposition (Thm.~\ref{thm:collapse}(iii)) \\
\texttt{OnlineContract.lean} &
Over an abstract decoder map: online causality
(Rem.~\ref{rem:onlinecausal}; \texttt{online\_causal});
\texttt{LengthPreserving} (one output row per input row,
Def.~\ref{def:pldr}'s codomain statement) and
\texttt{PrefixConsistentShaped}, the exact bundle of
Def.~\ref{def:prefixconsistent}, under which the
\texttt{Option}-valued row comparison upgrades to total rows (the
comparison alone is vacuous for the constant-empty decoder, kept as
a proved non-example); row-locality (the shape of causally
masked attention) implies the bundle given the shape property,
and the row-map decoder satisfies all of it
(Rem.~\ref{rem:prefixmech}(i)); the block reading of a historical
row equals the online final-row output for such maps
(\texttt{rowMapDecoder\_\allowbreak block\_\allowbreak eq\_online});
a two-token
length-preserving global-aggregation decoder violating prefix
consistency, by kernel computation
(Rem.~\ref{rem:prefixmech}(ii)) &
The full PLDR-LLM decoder is not formalized: these are wrapper-level
statements over an abstract map; \eqref{eq:histderiv} and every
checkpoint measurement remain prose and audit \\
\bottomrule
\end{tabular}
\end{center}

Outside the scope of the formalization entirely: Perron--Frobenius
theory (Thm.~\ref{thm:pf}; not yet in mathlib), matrix Bernstein
concentration (Prop.~\ref{prop:conc}), the reversible-family spectral
dictionary (Prop.~\ref{prop:gap}), the perturbation bounds of
Prop.~\ref{prop:perturb} and Cor.~\ref{cor:budget}, and everything
stated as a hypothesis, analogy, or conjecture.

\vfill\newpage
\section{Numerical Audit on a Released Checkpoint}
\label{app:numerical}

This appendix evaluates, on a released checkpoint, the measurable
quantities that the results of this paper depend on: singular-value
spectra in place of determinants (trained and at random
initialization), the LayerNorm scale error of Lemma~\ref{lem:ln}
computed directly, the twirl energy of Lemma~\ref{lem:twirl} resolved
against \emph{pre-rotation} data, the commutant residual of the
trained operators against Corollary~\ref{cor:posgap}'s direction
count, the contraction diagnostics of
Proposition~\ref{prop:contract} measured on the full composition, the
budget of Corollary~\ref{cor:budget} as a sample-extrema proxy against
measured
decoding margins, per-step cached-operator deviations, DAG-loss
values, and the order parameter of
Definition~\ref{def:orderparam} in both normalizations, together with
the online-contract measurements of Section~\ref{subsec:online}
(historical-row movement under suffix changes, padding at the Gram
boundary, and sequential-versus-block candidate scoring), run on
two released checkpoints
(Sections~\ref{subsec:auditonline}--\ref{subsec:auditscores}). The design
rule of the audit is that every reported quantity is either the named
quantity computed directly, or is explicitly labeled a bound or proxy
with its formula shown, and where a measurement contradicts a
convenient assumption, the contradiction is reported. Semantically
sensitive constructions are guarded in code rather than by
convention: the power stage is built by a shape-checked helper with a
sentinel test that fails under any head- or row-broadcast indexing of
the exponent parameter, and the assembled perturbation chain is
checked against a miniature-decoder finite-difference oracle that
fails under any assembly omitting the injection layer's own
post-attention factors.

\subsection{Setup and provenance}
Model: \texttt{fromthesky/PLDR-LLM-v51-SOC-110M-5} (the $41$B-token
near-critical model of \cite{gokden2026}; $L = 5$ layers, $h = 14$
heads, $\dk = 64$, float32), loaded through its released Hugging Face
port, unmodified, \emph{at the pinned model revision of the
verification manifest} (Appendix~\ref{app:repos}), under
\texttt{transformers} $4.56.1$ (the version the port targets) and run
with eager attention on a single
consumer GPU.\footnote{The audit code, its Python dependency
specification,
and the results are published with the Lean formalization
(Appendix~\ref{app:repos}). The results file records the model
revision, the SHA-256 of the fetched \texttt{modeling\_pldrllm.py},
library versions, and the random seed;
the run pins the cuBLAS workspace and forces deterministic CUDA
algorithms, making repeated launches bitwise-reproducible (see Scope);
the raw per-instance arrays behind every summary below ship in a
compressed archive whose SHA-256 the results file records, and the
repository's model-free unit tests recompute the summaries from those
raw arrays. Under \texttt{transformers} 5.x the released port needs
two load-compatibility fixes, documented there; the pinned
environment avoids them.} Inputs: eight fixed English
prompts of $32$--$43$ tokens ($150$--$250$ characters) spanning
technical, narrative, review, and instructional registers.
Aggregation (min/median/max) is over all layers, heads, and prompts
unless stated; spectra are computed in float64.

\subsection{Singular values and numerical rank (in place of
determinants), trained versus initialized}
Over all $5 \times 14 \times 8 = 560$ (layer, head, prompt) instances:
the generator $A$ has numerical rank $1$ in \emph{every} instance
(tolerance $64\,\varepsilon_{\mathrm{f32}}\sigma_1$), with
$\sigma_2/\sigma_1 \le 1.4\times10^{-8}$ (median
$8.7\times10^{-17}$); its rows agree within each head to a
row-variance/entry-variance ratio of $\approx 4\times10^{-13}$, and
across heads to relative RMS $\le 1.6\times10^{-8}$ (median $0$ at
float resolution). This is a direct, spectrum-based confirmation of
the rank-one, cross-head-identical singularity condition
(Proposition~\ref{prop:rankone}, Proposition~\ref{prop:rowfact}(iii))
on this checkpoint. By contrast, $\Alm$ is \emph{not} numerically
low-rank: $\sigma_2/\sigma_1$ has median $0.27$, and the numerical
rank has median $62.5$ of $64$ (range $1$--$64$), even though its
float determinant is $0$: with $\sigma_1$ ranging over
$6.4\times10^{-8}$--$6.2\times10^{3}$ and many small singular values,
the determinant underflows while the matrix is far from rank one. The
measurement thus confirms both halves of Remark~\ref{rem:rankone}:
$A$'s collapse is real and sharp, and the entrywise nonlinear image
$\Alm$ is generically of (near-)full numerical rank, so float-zero
determinants of $\Alm$ carry no rank information.

The trained-versus-initialized control (Proposition~\ref{prop:density})
runs the same battery on the same architecture at random
initialization, under \emph{two} initialization laws, each labeled by
its provenance: the Hugging Face port's own initializer, which draws
$W$, $P$, $a$ and dense weights Xavier-uniform, and the native
training law of the released code, which draws $W$, $P$, $a$
Xavier-normal with dense weights Xavier-uniform (the law the audited
checkpoint was actually trained from; the two laws differ only in the
distribution of the metric-tensor parameters). Under \emph{both}
laws the results agree to the displayed precision: $A$ has numerical
rank $63$ on every instance
with $\sigma_2/\sigma_1$ of median $0.69$, $\Alm$ has numerical rank of
median $64$, and the cross-head relative RMS of $A$ is $1.1$--$1.3$
(heads disagree at order one). The value $63$ itself is architecturally
expected, not a signature: at initialization the row map's final
LayerNorm has $\gamma = \mathbf{1}$, $\beta = 0$, so every row of $A$
is exactly centered, $A\mathbf{1} = 0$, and numerical rank
$\le \dk - 1 = 63$ is guaranteed; the generic value is then $63$. The
informative contrast with the trained model is the collapse of
$\sigma_2/\sigma_1$ (median $0.69$ at initialization versus
$\le 1.4\times10^{-8}$ trained) and of the cross-head disagreement
(order one versus $\le 1.6\times10^{-8}$). The rank-one collapse and
the cross-head identity of the trained model are therefore training
outcomes, not artifacts of the architecture or of either
initialization law.

\subsection{The LayerNorm scale error, measured directly}
The row variances $v_i$ of $\widetilde D/S$ (all layers/heads, head
rows pooled; $35{,}840$ rows) have median $2.6\times10^{-5}$ and range
$0$ to $0.29$. The discrepancy
$\norm{\LN(\widetilde D)_{i,:} - \LN(\widetilde D/S)_{i,:}}_2$ is
computed \emph{directly} for every captured row, in float64, with the
checkpoint's $\gamma$, $\beta$, $\varepsilon_{\LN}$, and each prompt's
actual $S$; the relative error divides by
$\norm{\LN(\widetilde D/S)_{i,:}}_2$, with the convention $0$ when the
two outputs coincide. Measured: median relative error
$4.6\times10^{-4}$, with a long tail ($90$th percentile
$4.3\times10^{-2}$, $99$th percentile $0.44$, maximum $4.3$ on
$\varepsilon$-dominated rows); median absolute error
$5.4\times10^{-10}$, maximum $1.1\times10^{-5}$. The upper proxy
$\varepsilon_{\LN}/(2v_i)$ of Lemma~\ref{lem:ln}(ii) (whose exact
first-order coefficient is $(1-S^{-2})\varepsilon_{\LN}/(2v_i)$) is
reported alongside \emph{as a proxy}: its median is
$1.9\times10^{-2}$, an
overestimate of the measured relative error by a factor
$\approx 35$ at matched rows, and it diverges on the $14$
exactly-constant rows, where the true discrepancy is exactly $0$ (both
centered rows vanish). Conclusions: the idealization
$\LN(\widetilde D) = \LN(\widetilde D/S)$ is approximately valid for
the typical row ($\sim0.05\%$ error) but not uniformly, since
$16.4\%$ of rows ($5865/35840$) exceed $2\%$ relative error and the worst rows deviate
at order one; any quantitative use of the Stage-2 concentration
argument must carry the measured distribution, not the proxy; and the
divergence of an upper bound is not evidence about the quantity it
bounds.

\subsection{Twirl energy against pre-rotation data}
Analytically ($\dk = 64$, base $10^4$): $C_\Theta = 4.4968\times10^4$,
so $C_\Theta/S \approx 43.9$ at $S = 1024$ (the uniform bound of
Lemma~\ref{lem:twirl} is vacuous there). The per-frequency multipliers
$\abs{\sin(S\omega/2)}/(S\abs{\sin(\omega/2)})$ have median
$0.27/0.054/0.013$ at $S = 64/256/1024$, with worst value
$\approx 1$ at every $S$. Empirically, the audit captures the
\emph{pre-rotation} queries position by position (head $0$ of each
layer, all prompts) and forms both aggregates
$\frac1S\sum_n q_n q_n^\top$ and
$\frac1S\sum_n R_n q_n q_n^\top R_n^\top$, so what the actual
finite-$S$ rotation removed is identified rather than inferred from
the post-rotation aggregate alone (the rotated aggregate reproduces
the captured $\widetilde D/S$ to relative error $\le 1.2\times10^{-7}$,
validating the decomposition pipeline). Two aggregations are
reported, each under its exact label, since ratios do not commute
with pooling. \emph{Per (prompt, layer) instance} (all $8 \times 5 =
40$ instances): in the rotary eigenbasis the commutant
(zero-frequency) component carries a median $6.9\%$
($3.9$--$15.7\%$) of the pre-rotation energy and $10.9\%$
($4.2$--$18.0\%$) of the post-rotation energy; the off-commutant
energy removed by the actual rotation at the prompts' lengths
($S = 32$--$43$) has median $9.5\%$, range $-1.4\%$ to $46\%$: on
one instance the rotated off-commutant energy slightly \emph{exceeds}
the unrotated, so the finite-$S$ rotation does not even monotonically
suppress instance by instance. \emph{Prompt-pooled per layer}
(energy-weighted over prompts; range over the five layers only):
commutant fractions median $7.2\%$ ($4.5$--$12.6\%$) pre-rotation and
$11.1\%$ ($4.5$--$13.8\%$) post-rotation; pooled off-commutant
suppression median $10.6\%$ (range $0.009\%$--$43\%$). The
stationarity idealization of
Proposition~\ref{prop:conc} is also directly checked, per
(prompt, layer), and is far from
holding: first-half and second-half aggregates of $q_n q_n^\top$
differ by a median relative $33\%$ (up to $90\%$). Conclusion: at the
audited context lengths the actual twirl removes only a modest
fraction ($\sim10\%$ median under either aggregation) of the
off-commutant, instance-specific
energy, so Stage~1 cannot by itself account for the observed
invariance at these lengths; attribution to Stage~3 rests on the
direct Stage-3 measurements below, not on subtraction.

\subsection{Commutant residual of the trained operators}
Answering the commutant-diagnostics item of
Section~\ref{sec:discussion} on the released checkpoint: for every
(prompt, layer, head) instance, the relative off-commutant energy
$\norm{\Glm - \Pi_{\mathrm{comm}}\Glm}_F/\norm{\Glm}_F$, where
$\Pi_{\mathrm{comm}}$ is the orthogonal projection onto the RoPE
commutant of Proposition~\ref{prop:rope-commutant} (computed exactly in
the rotation eigenbasis; under nonresonance the commutant is the
equal-frequency entries, of real dimension $\dk$). Measured over all
$560$ instances: minimum $0.42$, median $0.81$, maximum $0.99$;
per-layer medians $0.98,\ 0.99,\ 0.93,\ 0.71,\ 0.66$ for layers
$1$--$5$. The trained operators thus place the \emph{majority} of
their Frobenius energy in the $\dk^2 - \dk$
absolute-position-sensitive directions counted by
Corollary~\ref{cor:posgap}: on this checkpoint the learned $\Glm$ is
far from the commutant subfamily in every layer and head, most
strongly in the early layers. This is a checkpoint-scoped occupancy
measurement, not a claim that those directions are causally used by
decoding; the ablation that would test causal use (projecting $\Glm$
onto the commutant and measuring the behavioral change) is part of the
empirical program of Section~\ref{sec:discussion}.

\subsection{Row-map contraction, measured on the composition}
The quantity that controls diameters is the Jacobian of the full
composition $\varphi$, and the audit computes it directly: at $128$
rows per layer of real $\LN(\widetilde D)$ data, drawn from
\emph{every} prompt ($16$ per prompt), the largest singular value of
$J\varphi$ has per-layer medians
$3.2\times10^{-11},\ 1.4\times10^{-16},\ 1.0\times10^{-19},\
4.9\times10^{-12},\ 3.5\times10^{-7}$ for layers $1$--$5$ (pooled
range $1.0\times10^{-19}$--$3.6\times10^{-7}$, nearly constant across
rows and prompts within each layer). Empirical pairwise contraction
ratios $\norm{\varphi(r)-\varphi(r')}/\norm{r-r'}$, sampled both
within prompts and across prompts ($1000$ draws attempted each way;
$765$ within-prompt and $694$ cross-prompt pairs retained after
skipping coincident prompt indices and denominators below
$10^{-12}$),
are \emph{exactly zero at float resolution for $\approx95\%$ of
retained pairs}
and at most $4.6\times10^{-6}$ otherwise: sampled visited rows are
mapped to outputs indistinguishable at float32. For continuity with
the per-unit view: individual units are \emph{not} contractive
(per-unit Jacobian norms along the same trajectories have median
$0.47$ and maximum $20.8$), so the strong hypothesis
$L_j \le \kappa < 1$ of Proposition~\ref{prop:contract} is
\textbf{false} per unit on this checkpoint, while the measured
composition is contractive by seven or more orders of magnitude at
every sampled row. These are sampled pointwise statistics on visited
data, not a tube-uniform Lipschitz certificate
(Proposition~\ref{prop:contract}'s hypothesis line); within that
scope, they directly support the locally-constant reading of the
trained row map that Proposition~\ref{prop:rowfact}(iii) makes
diagnostic.

\subsection{The invariance budget as a sample-extrema proxy, against
measured margins}
\label{subsec:budgetproxy}
What follows is a \emph{sample-extrema worst-case proxy} for the chain
of Corollary~\ref{cor:budget}, not a certificate, for five reasons
stated up front: (1) the activation, operator,
and preactivation extrema are maxima/minima over the eight
prompt-only forward passes; the capture hooks are removed before the
decoding experiment, so the extrema do not cover the grown-context
states whose margins the final coefficient is compared against;
(2) the LayerNorm factors use minimum \emph{endpoint} row variances,
which do not lower-bound variances along an interpolation tube;
(3) the measured minimum entry of $\Alm$ is not a lower bound along a
perturbation path through an $\iswiglu$ zero crossing, where only the
architectural floor $10^{-9}$ applies (the constant is therefore
evaluated at both endpoints below); (4) the factors' extrema are
attained at unrelated sample points, and no common perturbation set is
defined; (5) only the derivative suprema are certified, by closed-form
envelopes on the sampled interval ($\sup_{\abs{u}\le U}
\abs{\iswiglu'} \le 2U\varsigma(U) + U^2/4$ and
$\sup_{\abs{u}\le U}\abs{\sigma_s'} \le \varsigma(U) + U/4$, with
$\varsigma$ the logistic sigmoid; these dominate the true suprema,
which a sampled grid maximum, being a lower estimate, does not). A
genuine certificate would require interval or otherwise
verified propagation along every compared decoding state.

With that scope: per-head constants of Corollary~\ref{cor:budget} are
$\norm{W}_2$ median $0.022$ (max $2.3$); $\norm{a}_2$ median
$0.92$; sampled preactivation bound $U$ median $0.018$ (max $78.7$),
giving the certified envelope $\sup_{\abs{u}\le U}\abs{\iswiglu'}$
median $0.018$ (max $1.7\times10^3$); the power-stage factor on the
\emph{sampled} entry range of $\Alm$ has median $1.0\times10^{10}$
(max $5.8\times10^{11}$), because sampled entries of $\Alm$ reach the
$\epsilon = 10^{-9}$ floor and exponents $P_{ij} < 1$ make
$m_A^{P_{ij}-1}$ enormous. The resulting $C_G^{\mathrm{samp}}$
(measured $m_A$; valid for pairs of visited states,
Corollary~\ref{cor:budget}) has median
$7.5\times10^{6}$ and max $1.2\times10^{15}$ per head; the
path-valid $C_G^{\mathrm{tube}}$ ($m_A = 10^{-9}$, the
architectural floor)
has median $9.2\times10^{7}$ while the max is essentially
unchanged ($1.3\times10^{15}$), because the worst heads' sampled
minima already sit at the floor. These $C_G$ values are
Frobenius-to-Frobenius constants; converting a uniform per-row bound
$\varepsilon_A$ on $\Delta A$ into the spectral hypothesis of
Proposition~\ref{prop:perturb} costs the additional factor
$\sqrt{\dk} = 8$ of Corollary~\ref{cor:budget}
($\varepsilon_G = C_G\sqrt{\dk}\,\varepsilon_A$); the assembled chain
below does not use this conversion, entering instead at the directly
hypothesized spectral radius $\varepsilon_{\mathrm{spec}}$. The head-level
factor of Proposition~\ref{prop:perturb} has median $8.5$ (max $74$).
Per decoder layer: the two
LayerNorm factors use the exact Jacobian bound
$\norm{\gamma}_\infty/\sqrt{v_{\min}+\varepsilon}$ of
Lemma~\ref{lem:ln}(iv) with the sampled minimum row variance of each
LayerNorm's inputs (values $8.9$--$23.0$ for the
post-attention LayerNorm and $0.51$--$15.2$ for the post-FFN one); the
gated-FFN factor uses the product rule
$\norm{W_3}(\max\abs{x_2}\cdot\max\abs{\sigma_s'}\cdot\norm{W_1} +
\max\abs{\sigma_s(x_1)}\cdot\norm{W_2})$ with activation extrema
sampled on the prompt passes and the certified $\sigma_s'$ envelope
(values $727$--$3966$); the
attention factor is the coarse assembly from sampled operator norms
(values $4.7\times10^4$--$1.1\times10^5$). The assembly respects the
injection point: a perturbation $\Delta\Glm$ enters at the attention
output of its layer, so its term carries that layer's \emph{own}
post-attention remainder (post-attention LayerNorm, FFN residual,
post-FFN LayerNorm,
$L_{\mathrm{LN1}}(1+L_{\mathrm{FFN}})L_{\mathrm{LN2}}$, with values
$6.6\times10^{3}$, $6.9\times10^{4}$, $1.8\times10^{4}$,
$1.8\times10^{4}$, $1.1\times10^{5}$ for layers $1$--$5$) before
the product of the downstream layers' full factors; no
$(1+L_{\mathrm{attn}})$ enters at the injection layer itself, since
the perturbation arrives at the attention output, not the layer
input. The full per-layer factors are
$3.1\times10^{8}$--$1.2\times10^{10}$,
$\norm{W_{\mathrm{vocab}}}_2 = 175$, and the assembled end-to-end
coefficient multiplying $\norm{\Delta\Glm}_2$ in the logit bound is
$\approx 7.3\times10^{46}$. The audit's test suite checks this
assembly semantically, against finite differences of a miniature
decoder: the coefficient must dominate the measured logit sensitivity
of an actual two-layer post-attention path, and any assembly that
omits the same-layer remainder fails that oracle on a small-variance
input. Two perturbation scales must not be
conflated: the \emph{measured} cached-versus-recomputed operator
deviation is exactly $0$ (bitwise, next paragraph), and the proxy is
therefore stress-tested instead at the \emph{hypothetical
single-rounding radius}
$\varepsilon_{\mathrm{spec}} = 2^{-24}\max\norm{\Glm}_F =
3.4\times10^{-5}$ (measured $\max\norm{\Glm}_F = 563$): what one final
elementwise float32 rounding of an exact-real operator could
contribute under a relative-error model. It is not a measurement, and
it excludes accumulated arithmetic roundoff.

Measured fidelity, greedy decoding of $48$ tokens on $4$ prompts,
cached (frozen prompt $A$/$\Alm$/$\Glm$ plus KV-cache) versus full
uncached recomputation at every step: the recomputed $\Glm$ on the
grown context is compared with the prompt-frozen value at \emph{all
$48$ steps of all $4$ prompts on every layer} ($960$ per-layer
comparisons) and is \emph{bitwise equal at every one}
($\varepsilon_G = 0$ in both relative-RMS and entrywise-maximum
senses; the per-step records ship in the raw archive); the maximum
logit deviation is $3.6\times10^{-5}$ (medians $\approx10^{-5}$),
attributable to floating-point path differences of the two execution
orders rather than to the operator; the minimum realized top-two
margin is $6.0\times10^{-3}$ (per-prompt medians $1.2$--$3.5$); the
greedy token choice agrees at every step of every prompt; and the
corrected sufficient criterion of Corollary~\ref{cor:budget} holds
with room to spare, $2B = 7.2\times10^{-5} < 6.0\times10^{-3} =
\Delta$. The decoded continuations are stored in the results file as
a neutral, human-inspectable record of the audited decoding, with
repetition statistics: greedy decoding of this $110$M-parameter
checkpoint produces repetition loops (duplicate $4$-gram fractions
$0.49$--$0.80$ across the four prompts), while the sampled
continuations of the order-parameter section do not ($0.0$). No claim
about generation quality is based on these texts; the checkpoint's
language capabilities are documented by the benchmark evaluations of
\cite{gokden2026}, and the tensor measurements above are indifferent
to text quality.

\textbf{Verdict on the proxy.} At the hypothetical single-rounding
radius the assembled proxy evaluates to
$7.3\times10^{46} \times 3.4\times10^{-5} \approx 2.4\times10^{42}$,
which exceeds the halved minimum margin
$\Delta/2 = 3.0\times10^{-3}$ of the corrected criterion by
$\approx 45$ orders of magnitude. Any certificate obtained by
replacing each sampled factor in this \emph{same} factorwise
norm-product assembly by a supremum over a containing tube is no
smaller, so this assembly cannot certify the margin; a different
direct or margin-aware bound on the composite map (exploiting
alignment of singular directions, cancellation along residual
paths, or decision-relevant directions only) could in principle
be smaller, and none is constructed here. This closes the
factorwise norm-product route: the bound-form chain of
Corollary~\ref{cor:budget} does \emph{not} certify
bit-identical decoding on this checkpoint by this route, exactly as
the corollary itself anticipates; what the
measurements support is the empirical statement: on the tested
workloads the cached model is indistinguishable from the uncached
model at the $10^{-5}$ logit level, two orders below the smallest
realized decision margin, with the cached operator itself exactly
invariant at float resolution at every compared step. A certificate
would require margin-aware, non-worst-case propagation (e.g.\
interval or randomized smoothing analysis) along every compared
decoding state, which we leave as future work.

\subsection{DAG-loss values}
The implemented per-instance DAG loss
$\abs{\log(\tr e^{M \Had M}/\dk)}$, computed overflow-safely over the
$560$ instances of each tensor: for $\Alm$, minimum $0$ (floating-point
underflow readings, exactly as Remark~\ref{rem:dagobstruction}
predicts: the analytic floor of this normalized logarithmic loss is
$\log(1+\epsilon^2) \approx 10^{-18}$, and the floor
$\dk\epsilon^2 \approx 6.4\times10^{-17}$ of the un-normalized
NOTEARS quantity $h$ is likewise sub-resolution; both lie below
float64 resolution of a naive log evaluation), median
$3.6\times10^{-10}$,
maximum $6.1\times10^{-5}$; for $\Ap$, minimum $63$, median
$1.1\times10^{3}$, maximum $4.1\times10^{4}$; for $\Glm$ (no floor
asserted), minimum
$4.9$, median $1.3\times10^{3}$, maximum $2.8\times10^{4}$. On this
checkpoint all three regularized tensors thus carry strictly positive
measured cycle content except for the underflow readings of
$\Alm$.\footnote{The exponent parameter $P$ is a per-head tensor of
shape $[h, \dk, \dk]$ with \emph{no} batch axis, unlike the batched
activation tensors returned alongside it; this is an indexing hazard
NumPy does not flag, since broadcast accepts several wrong pairings
silently. The audit builds $\Ap$ through a shape-checked helper with
a sentinel semantic test that fails under any head- or row-broadcast
indexing of $P$, and the chain constants read the parameter directly
under the same guard.}

\subsection{Order parameter, two normalizations}
Two independent stochastic continuations ($64$ tokens, temperature
$1$) of $5$ prompts, whose decoded texts are likewise stored in the
results file (as neutral records with repetition statistics). Two
normalizations are computed and reported separately for every tensor:
the RMS normalization of Definition~\ref{def:orderparam}, and a
\emph{symmetrized signed-mean} normalization with denominator
$\tfrac12(\abs{\mu_1}+\abs{\mu_2})$, a symmetric adaptation of the
source papers' convention rather than the convention itself (there,
the denominator is the first run's $\abs{\mu_1}$ alone, resp.\
$\abs{\mu_C}$ against a cached run \cite{gokden2026}). Measured: for
$\Glm$ the order parameter is exactly $0$
at float resolution on every layer under both normalizations; for
$\Ap$, computed from the per-head exponent tensor (see the DAG
footnote above), the maximum is $8.6\times10^{-13}$ RMS-normalized
($3.3\times10^{-12}$ signed), i.e.\ zero to twelve digits but not
bitwise; for $\Alm$, maximum $3.4\times10^{-9}$ RMS-normalized and
$5.6\times10^{-9}$ signed-normalized (medians $0$; the two
normalizations' maxima differ and are quoted separately); for the most
sensitive tensor
$A$, the RMS-normalized value has median $0$ and maximum
$4.9\times10^{-9}$ (signed-mean maximum $3.1\times10^{-8}$). On this
checkpoint the two normalizations agree in order of magnitude
wherever they are nonzero. Of the piecewise definition's branches,
the zero-numerator branch \emph{is} exercised (every bitwise-equal
pair, in particular every $\Glm$ comparison, returns $0$ through it),
while the degenerate positive-numerator/zero-signed-denominator
case, the case the stabilized definition exists to guard, is not
observed; the stabilized definition matters for models whose deductive
outputs are centered near zero, and is retained for that reason. These
values reproduce, with an independent implementation and the
stabilized statistic, the $m \approx 0$ (float) report of
\cite{gokden2026} for this model.

\subsection{The online contract, historical-row movement, and padding}
\label{subsec:auditonline}

The properties separated in Section~\ref{subsec:online} are measured
by a dedicated online-contract script published alongside the main
audit (same repository; raw arrays and result-file SHA-256 recorded as
for the main audit). The script runs in float32 eager mode on the same
single consumer GPU and under the same determinism configuration as
the main audit (pinned cuBLAS workspace, forced deterministic
algorithms; a CPU fallback is provided); two complete back-to-back
runs reproduce both the results file and the raw archive
byte-for-byte. It loads \emph{two} released checkpoints at pinned
revisions: the audited
\texttt{PLDR-LLM-v51-SOC-110M-5} model above and, as a contrast
within the same
architecture and training family, \texttt{PLDR-LLM-v51-SOC-110M-1}
(the manifest of Appendix~\ref{app:repos} pins both).

\emph{Historical-row movement.} Same-length inputs sharing a prefix
and differing only in the suffix are compared at the shared-prefix
rows (a same-length design, so shape-dependent floating-point
execution cannot masquerade as context sensitivity). A fixed
three-token-prefix pair is complemented by a seeded randomized
protocol ($16$ pairs, length $12$, prefix length $4$; suffixes also
compared with every suffix position attention-masked). On the audited
checkpoint every comparison is \emph{bitwise equal}: zero changed
logit entries in all $2{,}048{,}000$ randomized comparisons and at
every fixed-pair position, with $\Glm$ bitwise equal at every layer
(the only movement anywhere is in the layer-$4$ precursors: $270$
entries of $A$ at magnitude $\le 1.17\times10^{-10}$ and one entry of
$\Alm$ at $2.7\times10^{-12}$).
On the contrast checkpoint the movement is systematic: $74.5\%$ of
the randomized-comparison entries change (maximum
$9.4\times10^{-4}$; masked variant $74.3\%$, maximum
$7.4\times10^{-4}$), the fixed pair moves $31{,}978$ and $31{,}573$
of $32{,}000$ vocabulary entries at its two multi-key prefix
positions, and the per-layer $\Glm$ deviations reach
$3.0\times10^{-2}$: the historical rows move exactly through the
deductive-tensor path of \eqref{eq:histderiv}. Prefix position $0$ is
structurally insensitive on both checkpoints (a single allowed key
makes the softmax row $(1)$ regardless of the score). The pair of
measurements is the checkpoint-level content of
Remark~\ref{rem:prefixmech}(iii): historical-row prefix consistency
fails at generic weights of this architecture and holds bitwise on
the tested pairs for the collapsed checkpoint (a finite measurement,
not the universal decoder property), tying its emergence to the
collapse phenomenon.

\emph{Online determinism.} Repeated identical prefix calls are
bitwise equal on both checkpoints, so the step-$t$ conditional of
\eqref{eq:onlinecontract} is a deterministic function of the prefix
in this configuration.

\emph{Padding at the Gram boundary.} Appending four attention-masked
padding tokens to a prompt and comparing the final \emph{real}-token
row against the unpadded call: on the audited checkpoint the deviation
is $\le 8.6\times10^{-6}$ (exactly zero on one of the two prompts)
with $\Glm$ \emph{bitwise unchanged} and the deviation independent of
the padding content, i.e.\ pure shape-level float sensitivity of
the longer call, not operator contamination; on the contrast
checkpoint $\Glm$ itself moves
(maxima $4.5\times10^{-3}$--$1.2\times10^{-2}$, roughly $50{,}000$
changed entries) and the final-row deviation
($3.5$--$7.4\times10^{-5}$) \emph{depends on the padding content},
demonstrating that masked rows enter the Gram. Stripping the padding
restores the unpadded call bitwise on both checkpoints. This is the
measured basis for the unpadded-$S{=}t$ contract of
Section~\ref{subsec:online}.

\subsection{Sequential versus one-pass block scores}
\label{subsec:auditscores}

The same script compares the two candidate-scoring semantics of
Section~\ref{subsec:online} on a fixed suite: for each of the eight
prompts, four multi-token candidates (the model's own greedy
continuations of lengths $4$ and $8$, a second-best-first-token
continuation of length $4$, and a seeded random candidate of length
$4$) and four single-token control candidates. For each candidate the
\emph{sequential} score $\sum_i \log p_\theta(y_i \mid x, y_{1:i-1})$
is computed by one final-row prefix call per token, and the
\emph{block} score by one call on $(x, y)$ minus its last token with
candidate log-probabilities read at the aligned rows (log-softmax in
float64 on the float32 logits; the exact index algebra is unit-tested
against a prefix-consistent toy decoder, for which the two protocols
must coincide identically, and a global toy decoder, for which they
must not).

Single-token candidates coincide \emph{exactly} on both checkpoints,
as they must (the two protocols invoke the model on the identical
tensor). For multi-token candidates, on the audited checkpoint the
per-token gap is at most $3.3\times10^{-6}$ with median exactly $0$
(most per-token gaps vanish), and the per-candidate total-score gap is
at most $3.3\times10^{-6}$; on the contrast checkpoint the gaps are
roughly fifty-fold larger (per-token maximum $1.5\times10^{-4}$,
per-candidate maximum $2.2\times10^{-4}$), a genuinely semantic
difference, consistent with the historical-row movement above. On
\emph{neither} checkpoint does any of the $32$ candidate rankings
change: zero argmax flips and zero discordant pairs out of $48$
ordered comparisons. On the audited checkpoint the residual gap is at
the shape-level float-sensitivity scale (its historical rows are
bitwise invariant in same-length comparisons), so one-pass block
scores and sequential scores are interchangeable there at far below
score-decision scales; published benchmark scores remain labeled as
block scores (Appendix~\ref{app:repos}), and the contrast checkpoint
quantifies the regime where the two semantics genuinely differ. This
constructed suite is a controlled \emph{diagnostic}, not a validation
of benchmark evaluations; the corresponding measurement on held-out
text and on real benchmark items is
Section~\ref{subsec:auditseq}.

\subsection{Held-out sequential NLL and real benchmark items under
both scoring protocols}
\label{subsec:auditseq}

The same comparison is run on held-out text and on \emph{real}
benchmark items (script \texttt{audit\_seq.py}; both released
checkpoints; the same GPU float32 eager configuration, cuBLAS
workspace and deterministic-algorithm pins, and two byte-identical
back-to-back official runs as the preceding subsections). Every
dataset enters as a parquet file pinned by full revision hash
(Appendix~\ref{app:repos}), with per-file SHA-256 recorded in the
results JSON; the benchmark request templates and the published
TruthfulQA metric are transcriptions of the pinned evaluation
wrapper's task configurations, unit tested against hand-built
documents and hand-computed metric values, and the wrapper's
encode-pair convention (trailing-whitespace shift, joint-encoding
split) is reproduced: on all $7{,}172$ scored requests the
independently encoded context was verifiably a token prefix of the
joint encoding.
Every summary below is re-derived offline from the raw arrays by the
model-free test suite.

\emph{Held-out blockwise CE versus sequential NLL.} On $48$ seeded
disjoint $257$-token windows of the pinned WikiText-2 (raw)
validation split \cite{merity2016} ($12{,}288$ scored positions
per checkpoint), each position is scored twice: by its row of one
full-block call (the training objective's per-row quantity,
\eqref{eq:blockloss}) and by one final-row $S = t$ call
(\eqref{eq:onlinecontract}, the deployed chain-rule quantity). On
the audited checkpoint the two aggregates coincide to nine decimal
digits: blockwise CE $3.479446$ nats/token against sequential NLL
$3.479446$ (difference $1.8\times10^{-9}$), with per-token
$\abs{\text{gap}}$ median $9.6\times10^{-7}$, maximum
$2.4\times10^{-5}$, and signed median exactly $0$. On the contrast
checkpoint the aggregates still agree to $4.3\times10^{-6}$
nats/token ($3.684883$ versus $3.684879$), but the per-token gaps
are two orders larger (median $1.8\times10^{-5}$, maximum
$2.8\times10^{-3}$) and nearly sign-balanced ($49.3\%$ of positions
score worse sequentially); their magnitude \emph{decreases} with
position within the window (per-position-bucket medians
$2.5$--$3.8\times10^{-5}$ early, $5.0\times10^{-6}$ in the final
bucket), which is the suffix-length dependence the first-order
mechanism \eqref{eq:histderiv} predicts: a later historical row
leaves less suffix to move the Gram. On this sample, then, the
blockwise objective's value \emph{is} the autoregressive NLL to
float resolution on the collapsed checkpoint, and a close,
sign-balanced surrogate for it on the noncollapsed one; neither fact
extends beyond the tested sample, and neither removes the
target-exposure structure of \S\ref{subsec:dag}, whose
training-time role is a separate, unmeasured question
(Section~\ref{sec:discussion}).

\emph{Real benchmark items.} Seeded samples of $100$ actual items
from each of the eight published zero-shot tasks (ARC-Easy and
ARC-Challenge test splits, HellaSwag, PIQA, Social-IQa, WinoGrande,
and TruthfulQA validation splits, OpenBookQA test split; $800$
items, $3{,}052$ candidates, $32{,}246$ scored candidate tokens per
checkpoint) are scored under the one-pass block protocol and under
the sequential chain-rule protocol. TruthfulQA enters under its
published \texttt{truthfulqa\_mc2} protocol \cite{lin2021}: each
question carries several true and several false reference answers,
and the published score is not an argmax but the normalized
probability mass on the true answers,
$\sum_{i \in \mathrm{true}} e^{\ell_i} \big/ \sum_{j} e^{\ell_j}$
over the per-candidate total log-likelihoods $\ell$, transcribed
from the pinned harness task configuration, computed by an
overflow-safe softmax equivalent, and evaluated once from block and
once from sequential candidate scores. On the seven argmax tasks
the outcome-level result is uniform: \emph{zero} raw argmax
changes, \emph{zero} length-normalized argmax changes, and
\emph{zero} discordant candidate pairs (of $2{,}901$) on
\emph{both} checkpoints, so every per-task accuracy, raw and
normalized, is identical under the two protocols on this sample;
single-token candidates coincide exactly, as they must. On
TruthfulQA the published metric agrees between the two protocols to
$8.5\times10^{-10}$ in the $100$-item mean on the audited
checkpoint ($0.397662$ under both; per-item metric gap median $0$,
maximum $3.7\times10^{-7}$) and to $5.7\times10^{-7}$ on the
contrast checkpoint (block $0.399748$, sequential $0.399747$;
per-item maximum $4.7\times10^{-5}$); a candidate-level argmax
(a diagnostic, not the published metric) does not change on
either checkpoint. The score-level gaps mirror the constructed
suite: per-candidate absolute gaps reach $2.8\times10^{-5}$ on the
audited checkpoint (median exactly $0$ on five of the eight tasks)
and $2.3\times10^{-3}$ on the contrast checkpoint, growing
monotonically with candidate token length there (bucket maxima
$3.1\times10^{-4}$ at $2$--$4$ tokens up to $2.3\times10^{-3}$
beyond $20$ tokens; single-token bucket exactly $0$). The decision
margins explain the argmax stability: per-task median block top-two
margins are $0.51$--$12.9$ nats, and on every item except one the
item's own maximal candidate gap stays below half of that item's
realized margin. The single flagged item (the same on both
checkpoints) is a Social-IQa instance whose top two \emph{candidate
strings are identical in the source data}, an exact tie of margin
$0$ that both protocols score bitwise identically: a dataset
artifact, not a protocol discrepancy. An auxiliary single-gold
TruthfulQA probe (the mc1 variant of the same dataset, on the same
seeded rows; $534$ candidates per checkpoint) supplies the argmax
structure the published metric lacks and is reported separately in
the audit artifact, outside the published-task count: zero argmax
changes and zero discordant pairs (of $1{,}366$) on both
checkpoints.

These are sampled measurements: $100$ items per task under one seed,
two checkpoints, one software stack. They support the statement that
on these samples the published one-pass block protocol and
sequential chain-rule scoring select the same answers on the argmax
tasks and agree on the published TruthfulQA probability-mass metric
to within $5\times10^{-5}$ per item (even on the noncollapsed
contrast checkpoint, whose score \emph{values} differ measurably
between protocols), and they do not revalidate the complete
published benchmark tables, other checkpoints, or the training
trajectory (Section~\ref{sec:discussion}, empirical program).

\subsection{Scope}
\looseness=-1
This audit is one checkpoint, eight prompts, and one software stack
(the online-contract and sequential-validation sections additionally
probe a second released checkpoint, seeded held-out text windows, and
seeded fixed-size samples of real benchmark items, still far from
the full validation sets, checkpoint fleets, and training
trajectories of the source papers, which remain listed in
Section~\ref{sec:discussion}'s program);
its Jacobian and pairwise statistics are sampled at visited points; it
establishes the measured facts above for this model and does not by
itself support generalization across seeds, scales, or
datasets. Individual measured
values carry floating-point path sensitivity: they
depend on the device and library versions, and quantities downstream
of the elementwise power stage (whose local amplification reaches
$\sim10^{10}$) can vary by more than low-order digits across
configurations. This sensitivity extends to repeated launches in a
\emph{fixed} environment: CUDA matrix-multiply algorithm selection can
differ between process launches, and preparing this audit we
observed the smallest order-parameter statistics shift by several
orders of magnitude (within $\lesssim 10^{-3}$ relative) across
launches of the identical seeded script. The shipped run therefore
pins the cuBLAS workspace and forces deterministic algorithms, after
which repeated launches reproduce every reported number bitwise
(verified by back-to-back complete reruns); order-parameter readings
below that stability scale should in general be quoted as upper
bounds. The qualitative findings above are stable under all of this.
The audit
code, its Python dependency specification, the results files, and
the raw per-instance arrays are published with the Lean
formalization (Appendix~\ref{app:repos}).

\vfill\newpage
\bibliographystyle{amsplain}
\bibliography{power_law_graph_attention}

\end{document}